
\documentclass[journal]{IEEEtran}
\ifCLASSINFOpdf
\else
\fi
\hyphenation{op-tical net-works semi-conduc-tor}

\newif\ifextended
\extendedfalse

\usepackage{clipboard}

\usepackage{algorithmic}
\usepackage{multirow}
\usepackage{soul}
\usepackage{subfig}
\usepackage{makecell}

\usepackage[utf8]{inputenc} 
\usepackage[T1]{fontenc}    
\usepackage{hyperref}       
\hypersetup{
 colorlinks,
 linkcolor={blue!100!black},
 citecolor={blue!100!black},
 urlcolor={blue!80!black}
}
\usepackage{url}            
\usepackage{booktabs}       
\usepackage{amsfonts}       
\usepackage{nicefrac}       
\usepackage{microtype}      
\usepackage{xcolor}         

\usepackage{amssymb,amsmath}

\usepackage{mathtools,amsthm}
\usepackage{commath}
\usepackage[ruled,norelsize]{algorithm2e}

\usepackage{enumitem}

\usepackage{tikz}

\usepackage{bbm}

\newtheorem{theorem}{Theorem}
\newtheorem{algenvironment}[algocf]{Algorithm}

\newtheorem{remark}{Remark}
\newtheorem{lemma}{Lemma}
\newtheorem{corollary}{Corollary}

\newtheorem{definition}{Definition}
\newtheorem{assumption}{Assumption}

\newtheorem{example}{Example}


\newcommand{\red}[1]{\textcolor{red}{#1}}


\newcommand{\bE}{\mathbb{E}}

\newcommand{\bR}{\mathbb{R}}

\newcommand{\cB}{\mathcal{B}}
\newcommand{\cC}{\mathcal{C}}

\newcommand{\cE}{\mathcal{E}}
\newcommand{\cF}{\mathcal{F}}

\newcommand{\cH}{\mathcal{H}}

\newcommand{\cL}{\mathcal{L}}

\newcommand{\cN}{\mathcal{N}}

\newcommand{\cR}{\mathcal{R}}
\newcommand{\cS}{\mathcal{S}}
\newcommand{\cT}{\mathcal{T}}

\newcommand{\cV}{\mathcal{V}}

\newcommand{\cX}{\mathcal{X}}

\newcommand{\cZ}{\mathcal{Z}}

\newcommand{\bolde}{\mathbf{e}}

\newcommand{\boldx}{\mathbf{x}}
\newcommand{\boldy}{\mathbf{y}}
\newcommand{\boldz}{\mathbf{z}}

\newcommand{\boldK}{\mathbf{K}}

\newcommand{\boldZ}{\mathbf{Z}}

\newcommand{\boldalpha}{\boldsymbol{\alpha}}
\newcommand{\boldbeta}{\boldsymbol{\beta}}
\newcommand{\boldgamma}{\boldsymbol{\gamma}}

\newcommand{\boldnu}{\boldsymbol{\nu}}

\newcommand{\boldrho}{\boldsymbol{\rho}}
\newcommand{\boldsigma}{\boldsymbol{\sigma}}

\DeclareMathOperator{\var}{Var}
\DeclareMathOperator{\Span}{span}

\DeclareMathOperator{\proj}{proj}


\DeclareSymbolFont{bbold}{U}{bbold}{m}{n}
\DeclareSymbolFontAlphabet{\mathbbold}{bbold}

\usepackage{colortbl}
\usepackage{makecell}  
\usepackage{booktabs}  

\usepackage{tikz}
\usepackage{amsmath}
\usetikzlibrary{shapes.geometric, arrows.meta, positioning, fit}

\tikzstyle{startstop} = [rectangle, rounded corners, minimum width=3cm, minimum height=1cm,text centered, draw=black, fill=gray!20]
\tikzstyle{io} = [rectangle, minimum width=3cm, minimum height=1cm, text centered, draw=black]
\tikzstyle{process} = [rectangle, minimum width=3cm, minimum height=1cm, text centered, draw=black]
\tikzstyle{decision} = [diamond, minimum width=3cm, minimum height=1cm, text centered, draw=black]
\tikzstyle{arrow} = [thick, -{Latex[length=2mm]}]

\begin{document}
%
\title{Gram-Schmidt Methods for Unsupervised\\Feature Extraction and Selection}
%
%
%

\author{Bahram~Yaghooti,~\IEEEmembership{Student Member,~IEEE,}
        Netanel~Raviv,~\IEEEmembership{Senior Member,~IEEE,}
        and~Bruno~Sinopoli,~\IEEEmembership{Fellow,~IEEE}
\thanks{B. Yaghooti and B. Sinopoli are with the Department of Electrical and Systems Engineering, Washington University in St. Louis, St. Louis, MO, USA (Email: byaghooti@wustl.edu; bsinopoli@wustl.edu). N. Raviv is with the Department of Computer Science and Engineering, Washington University in St. Louis, St. Louis, MO, USA (Email: netanel.raviv@wustl.edu). Parts of this work were presented in the 2024 IEEE Conference on Decision and Control (CDC)~\cite{yaghooti2024gram}, and the 2024 Annual Allerton Conference on Communication, Control, and Computing~\cite{yaghooti2024beyond}.}
}

\maketitle

\begin{abstract}
Feature extraction and selection in the presence of nonlinear dependencies among the data is a fundamental challenge in unsupervised learning. We propose using a Gram-Schmidt (GS) type orthogonalization process over function spaces to detect and map out such dependencies. Specifically, by applying the GS process over some family of functions, we construct a series of covariance matrices that can either be used to identify new large-variance directions, or to remove those dependencies from known directions. In the former case, we provide information-theoretic guarantees in terms of entropy reduction. In the latter, we provide precise conditions by which the chosen function family eliminates existing redundancy in the data. Each approach provides both a feature extraction and a feature selection algorithm. Our feature extraction methods are linear, and can be seen as natural generalization of principal component analysis (PCA). We provide experimental results for synthetic and real-world benchmark datasets which show superior performance over state-of-the-art (linear) feature extraction and selection algorithms. Surprisingly, our linear feature extraction algorithms are comparable and often outperform several important nonlinear feature extraction methods such as autoencoders, kernel PCA, and UMAP. Furthermore, one of our feature selection algorithms strictly generalizes a recent Fourier-based feature selection mechanism (Heidari \textit{et al.}, IEEE Transactions on Information Theory, 2022), yet at significantly reduced complexity.
\end{abstract}

\begin{IEEEkeywords}
Feature Extraction, Feature Selection, Principal Component Analysis, Gram-Schmidt Orthogonalization.
\end{IEEEkeywords}

%
\IEEEpeerreviewmaketitle

\section{Introduction}\label{sec:introduction}
\IEEEPARstart{D}{imensionality} reduction is a set of techniques used to reduce the number of features (or dimensions) in a dataset, while retaining the important information. 
In the context of unsupervised learning, the goal is to simplify the data and make it easier to analyze or visualize, while reducing the computational complexity and memory requirements of any further processing. 
The two main approaches to dimensionality reduction are \textit{feature selection}, which involves selecting a subset of the original features based on some criteria, and \textit{feature extraction}, which involves transforming the original features into a new set of features, preferably in a linear fashion, so that the extracted features capture the most important information~\cite{khalid2014survey}.

In a sense, the ultimate goal of feature extraction is to identify and remove \textit{redundancy}, that is, parts of the data which do not carry intrinsic information but rather can be determined by other parts of the data. 
It is well known that whenever such redundancy is linear, i.e., when certain features are linear functions of other ones, then PCA identifies and removes those redundant features. 
The paper at hand advances the state-of-the-art by presenting new linear feature extraction and selection techniques which are capable of removing \textit{nonlinear} redundancy from the data.
We focus on \textit{unsupervised} feature extraction, i.e., that does not take into account any label information.

The well-known PCA technique works by analyzing the covariance matrix of the data (or an empirical approximation thereof), and extracts new features called \textit{principal components}; the principal components are uncorrelated and capture the variance in the original data in descending order. 
Each principal components is a projection of the data on an eigenvector of its covariance matrix, referred to as \textit{principal direction}.
In contrast, we propose a family of Gram-Schmidt based algorithms which rely on eigenvector analysis of \textit{alternative} covariance matrices, introduced next. 

Suppose our data is sampled i.i.d from some unknown random variable~$X=(X_1,\ldots,X_d)^\intercal$ over~$\bR^d$ with distribution~$P_X$. Our algorithms begin by fixing a finite family~$\cF(\boldz)$ of linearly independent functions in some (non-random) variables~$\boldz=(z_1,z_2,\ldots)$, e.g., low-degree polynomials.
In an iterative process, the variables~$z_i$ are substituted one-by-one with linear projections of~$X$ (i.e., of the form~$ X^\intercal \boldnu$ for some~$\boldnu\in\bR^d$), effectively turning those (ordinary) functions into random ones. 
Then, after applying the \textit{Gram-Schmidt} (GS) process over the resulting random functions, (Section~\ref{section:GS}), we subtract from~$X$ its projections on those functions, 
thereby creating new data distributions with less redundancy.
The covariance matrices of these new data distributions are the \textit{alternative covariance matrices} mentioned earlier. 
Note that this approach requires no particular assumption regarding~$\cF(\boldz)$ or~$P_X$, other than the assumption that the functions in the GS process belong to~$\cL^2(P_X)$, i.e., have finite variance when computed according to~$P_X$~\cite{luenberger1997optimization}.

In the context of feature extraction, eigen-analysis of these alternative covariance matrices enables to either define new high variance directions (Section~\ref{section:GFR}), or remove nonlinear redundancy from the principal components (Section~\ref{section:GCA}).
In the context of feature selection, similar analysis of the main diagonal of those matrices enables two different feature selection algorithms which either select high-variance features (Section~\ref{section:GFS}) or remove nonlinear redundancy from the given features (Section~\ref{section:GFA}).

In more detail, the Gram-Schmidt process is a classic algebraic technique to convert a given basis of a subspace to an orthonormal basis of the same subspace, i.e., where all vectors are of length~$1$ and any pair of vectors are orthogonal. 
We employ a well-known generalization of GS to function spaces, where vectors are replaced by $\cL^2(P_X)$ functions, i.e., where the inner product between two functions is the expectation of their product, computed according to $P_X$. 

The use of GS in our algorithms results in a new set of orthonormal $\cL^2(P_X)$ functions. 
The orthonormality of these functions enables to define new random variables~$d_j(X)$---one at each iteration~$j$ of the algorithm---by subtracting from~$X$ its projections on those orthonormal functions. 
This amounts to ``nonlinear redundancy removal,'' in the sense that these~$d_j$'s capture whatever remains in~$X$ should one set to zero the part of the redundancy which can be described by the orthonormal functions.

As mentioned earlier, those~$d_j$'s give rise to alternative covariance matrices~$\Sigma_{j+1}=\bE[d_j(X)d_j(X)^\intercal]$. 
Standard eigenvector analysis over~$\Sigma_{j+1}$ provides insights into choosing the next feature to be extracted or selected.
The extraction/selection process stops once sufficiently many features have been extracted/selected, or a maximum variance threshold has been reached. 
This general framework, which is illustrated in Figure~\ref{figure:scheme}, is manifested in two complementing approaches that are described next.

In the first approach, we provide algorithms with entropy reduction guarantees that apply to any data distribution over a finite-alphabet.
To this end, at step~$j$ we extract the largest eigenvector\footnote{I.e., the eigenvector of largest eigenvalue in absolute value.} of~$\Sigma_j$ (in the feature extraction case), or select the most variant feature according to~$\Sigma_j$ (in the feature selection case).
To prove these entropy reduction guarantees, we borrow ideas from a recently proposed Fourier-based feature selection method due to~\cite{heidari2022sufficiently}, and show that the conditional entropy~$H(X | Z)$, where~$Z$ is the resulting extracted/selected random variable, is bounded by a function of the dimension and a threshold parameter. 
We term these algorithms \textit{Gram-Schmidt Functional Reduction} (GFR, feature extraction), and~\textit{Gram-Schmidt Functional Selection} (GFS, feature selection).

In the second approach, we provide algorithms with clear guarantees of their ability to \textit{eliminate redundancy}. 
Roughly speaking, a ``redundancy'' is as an equation that is satisfied, or approximately satisfied, by every datapoint drawn from~$X$. 
Each such redundancy implies a functional connection between the features/components of the data, and by ``eliminating redundancy'' we mean that one of those features/components is not extracted/selected since it can be determined by other features/components. 
This paradigm should be preferred in cases where prior knowledge regarding the structure of the redundancy is available, even if not exact. 
In particular, knowing a priori that some linear mapping of the data (e.g., the principal components or the features themselves) has redundancy, and that the redundancy can be approximated by some function family, can provide strong guarantees on the ability to remove that redundancy.

To implement the second approach, we employ similar GS-type orthogonalization, but instead of choosing new directions/features according to the~$\Sigma_j$'s as in the previous approach, we first subtract all low-variance directions/features from the data, and then choose variance maximizing directions/features. 
This subtraction process enables a clearer analysis of the effect the algorithms have on the redundancy structures in the data. 
We term these algorithms~\textit{Gram-Schmidt Functional Selection} (GFS, for eliminating redundancy from the principal components) and \textit{Gram-Schmidt Feature Analysis} (GFA, for eliminating redundancy from the features themselves).

Finally, the complexity of all algorithms is roughly quadratic in the size of the chosen function family (which can be determined by the user), linear in the size of the dataset, and at most cubic in the data dimension. We also comment that GFS strictly generalizes the recently proposed Unsupervised Fourier Feature Selection (UFFS) method \cite{heidari2022sufficiently,heidari2021finding} at significantly reduced complexity, and the details are given in the appendix. We summarize all our contributions in the following subsection.

\begin{remark}
    Roughly speaking, our approach falls under the broad category of ``maximum variance pursuit.'' 
    Algorithms in this family, which includes PCA and kernel PCA, scan the feature space in search of projections which maximize some form of variance.
    Indeed, our GFR algorithm can be seen as a strict generalization of PCA, in the sense it specifies to PCA if one chooses~$\cF(\boldz)=\{z_1,\ldots,z_d\}$.
    Kernel PCA, however, is a nonlinear method, whereas all our methods are linear. 
    A comprehensive literature review, including a detailed comparison to PCA and its variants, as well as to other state-of-the-art linear and nonlinear methods, is given in Section~\ref{section:literature}. In Appendix~\ref{section:GFR-vs-KPCA}, a detailed explanation of the underlying mechanisms and procedural distinctions between GFR and kernel PCA is provided, with particular emphasis on their differences during both the training and inference phases.
\end{remark}

\tikzset{every picture/.style={line width=0.75pt}} 
\begin{figure*}
    \centering
    \begin{tikzpicture}[x=0.75pt,y=0.75pt,yscale=-1,xscale=1]
\draw    (20,20) .. controls (59.42,20.28) and (25.63,56.14) .. (67.35,59.82) ;
\draw [shift={(70,60)}, rotate = 182.98] [fill={rgb, 255:red, 0; green, 0; blue, 0 }  ][line width=0.08]  [draw opacity=0] (8.93,-4.29) -- (0,0) -- (8.93,4.29) -- cycle    ;
\draw    (20,100) .. controls (59.42,100.28) and (25.63,59.31) .. (67.35,59.91) ;
\draw [shift={(70,60)}, rotate = 182.98] [fill={rgb, 255:red, 0; green, 0; blue, 0 }  ][line width=0.08]  [draw opacity=0] (8.93,-4.29) -- (0,0) -- (8.93,4.29) -- cycle    ;
\draw    (90,60) -- (167,60) ;
\draw [shift={(170,60)}, rotate = 180] [fill={rgb, 255:red, 0; green, 0; blue, 0 }  ][line width=0.08]  [draw opacity=0] (8.93,-4.29) -- (0,0) -- (8.93,4.29) -- cycle    ;
\draw    (90,70) .. controls (109.16,99.68) and (139.64,99.63) .. (168.25,71.74) ;
\draw [shift={(170,70)}, rotate = 134.51] [fill={rgb, 255:red, 0; green, 0; blue, 0 }  ][line width=0.08]  [draw opacity=0] (8.93,-4.29) -- (0,0) -- (8.93,4.29) -- cycle    ;
\draw    (190,60) -- (267,60) ;
\draw [shift={(270,60)}, rotate = 180] [fill={rgb, 255:red, 0; green, 0; blue, 0 }  ][line width=0.08]  [draw opacity=0] (8.93,-4.29) -- (0,0) -- (8.93,4.29) -- cycle    ;
\draw    (190,70) .. controls (209.16,99.68) and (239.64,99.63) .. (268.25,71.74) ;
\draw [shift={(270,70)}, rotate = 134.51] [fill={rgb, 255:red, 0; green, 0; blue, 0 }  ][line width=0.08]  [draw opacity=0] (8.93,-4.29) -- (0,0) -- (8.93,4.29) -- cycle    ;
\draw    (80,70) -- (80,147) ;
\draw [shift={(80,150)}, rotate = 270] [fill={rgb, 255:red, 0; green, 0; blue, 0 }  ][line width=0.08]  [draw opacity=0] (8.93,-4.29) -- (0,0) -- (8.93,4.29) -- cycle    ;
\draw    (180,70) -- (180,147) ;
\draw [shift={(180,150)}, rotate = 270] [fill={rgb, 255:red, 0; green, 0; blue, 0 }  ][line width=0.08]  [draw opacity=0] (8.93,-4.29) -- (0,0) -- (8.93,4.29) -- cycle    ;
\draw  [dash pattern={on 4.5pt off 4.5pt}]  (290,60) -- (367,60) ;
\draw [shift={(370,60)}, rotate = 180] [fill={rgb, 255:red, 0; green, 0; blue, 0 }  ][line width=0.08]  [draw opacity=0] (8.93,-4.29) -- (0,0) -- (8.93,4.29) -- cycle    ;
\draw  [dash pattern={on 4.5pt off 4.5pt}]  (290,70) .. controls (309.16,99.68) and (339.64,99.63) .. (368.25,71.74) ;
\draw [shift={(370,70)}, rotate = 134.51] [fill={rgb, 255:red, 0; green, 0; blue, 0 }  ][line width=0.08]  [draw opacity=0] (8.93,-4.29) -- (0,0) -- (8.93,4.29) -- cycle    ;
\draw    (280,70) -- (280,147) ;
\draw [shift={(280,150)}, rotate = 270] [fill={rgb, 255:red, 0; green, 0; blue, 0 }  ][line width=0.08]  [draw opacity=0] (8.93,-4.29) -- (0,0) -- (8.93,4.29) -- cycle    ;
\draw    (380,70) -- (380,147) ;
\draw [shift={(380,150)}, rotate = 270] [fill={rgb, 255:red, 0; green, 0; blue, 0 }  ][line width=0.08]  [draw opacity=0] (8.93,-4.29) -- (0,0) -- (8.93,4.29) -- cycle    ;
\draw    (446.17,99.85) .. controls (411.48,97.12) and (445.38,62.26) .. (400,59.49) ;
\draw [shift={(450,100)}, rotate = 180.99] [fill={rgb, 255:red, 0; green, 0; blue, 0 }  ][line width=0.08]  [draw opacity=0] (8.93,-4.29) -- (0,0) -- (8.93,4.29) -- cycle    ;
\draw    (446.99,20.06) .. controls (412.25,21.85) and (444.38,62.26) .. (400,59.49) ;
\draw [shift={(450,20.01)}, rotate = 180.99] [fill={rgb, 255:red, 0; green, 0; blue, 0 }  ][line width=0.08]  [draw opacity=0] (8.93,-4.29) -- (0,0) -- (8.93,4.29) -- cycle    ;

\draw (3.58,20.2) node    {$\cF(\boldz)$};
\draw (13.22,100.2) node    {$X$};
\draw (80.98,61.19) node    {$d_{0}$};
\draw (127.13,41.99) node    {$Z_{1} =X^{\intercal} \boldnu_{1}$};
\draw (122.35,113.94) node  [font=\footnotesize] [align=left] {\begin{minipage}[lt]{33.11pt}\setlength\topsep{0pt}
Orthogonalize\\\& Subtract

\end{minipage}};
\draw (179.98,61.19) node    {$d_{1}$};
\draw (227.13,41.99) node    {$Z_{2} =X^{\intercal} \boldnu_{2}$};
\draw (217.45,113.94) node  [font=\footnotesize] [align=left] {\begin{minipage}[lt]{35.98pt}\setlength\topsep{0pt}
Orthogonalize\\\& Subtract
\end{minipage}
};
\draw (279.98,61.19) node    {$d_{2}$};
\draw (89.75,161.35) node  [font=\footnotesize]  {$\Sigma _{1} =\mathbb{E}[ XX^\intercal]$};
\draw (386.63,61.19) node    {$d_{m-1}$};
\draw (464.63,56.77) node    {$ \begin{array}{l}
\boldnu_1\\
\boldnu_2\\
~\vdots\\
\boldnu_m
\end{array}$};
\draw (182.74,162.85) node  [font=\footnotesize]  {$\Sigma _{2} =\mathbb{E}[ d_{1} d_{1}^{\intercal}]$};
\draw (283.74,162.85) node  [font=\footnotesize]  {$\Sigma _{3} =\mathbb{E}[ d_{2} d_{2}^{\intercal}]$};
\draw (386.99,162.35) node  [font=\footnotesize]  {$\Sigma _{m} =\mathbb{E}[ d_{m-1} d_{m-1}^\intercal]$};
\end{tikzpicture}
    \caption{A schematic description of all algorithms in this paper.
    The algorithm receives a function family~$\cF(\boldz)$ in variables~$\boldz=(z_1,z_2,\ldots)$ and data sampled i.i.d from an unknown random variable~$X$.
    The algorithm iteratively substitutes the (non-random) variables~$z_i$ in~$f(\boldz)\in\cF (\boldz)$ with random variables~$Z_i$, which are linear functions of~$X$, orthogonalizes the resulting $\cL^2(P_X)$ functions, and subtracts from~$X$ its projections on them. 
    This creates new random variables~$d_j(X)$, and the~$\Sigma_j$'s are their covariance matrices. 
    Different algorithms vary in the way the~$\boldnu_i$'s are specified: 
    either as the largest eigenvectors of the~$\Sigma_i$'s (Gram-Schmidt Functional Reduction, GFR, Section~\ref{section:GFR}) or as high-variance principal directions (Gram-Schmidt Component Analysis, GCA, Section~\ref{section:GCA})) in the feature extraction case, or their unit-vector counterparts in Gram-Schmidt Functional Selection (GFS, Section~\ref{section:GFS}) and 
    Gram-Schmidt Feature Analysis (GFA, Section~\ref{section:GFA}) in the feature selection case. The output of the algorithm is~$\boldnu_1,\ldots,\boldnu_m$.}\label{figure:scheme}
\end{figure*}

\subsection{Our contributions:}
\begin{itemize}
    \item We present \textit{Gram-Schmidt Functional Reduction} (GFR), a \textbf{linear feature extraction} technique that identifies directions of high-variance (different from the principal ones). We provide information-theoretic guarantees of bounded conditional entropy for the case of discrete data distributions (Section~\ref{section:GFR}). 
    \item We present \textit{Gram-Schmidt Component Analysis} (GCA), a \textbf{linear feature extraction} technique that identifies and removes nonlinear dependencies among the principal components. We provide conditions on the chosen function family by which GCA can eliminate redundancy (Section~\ref{section:GCA}).
    \item We present \textit{Gram-Schmidt Functional Selection} (GFS), a \textbf{feature selection} technique which inherits similar structure and similar information-theoretic guarantees from GFR (Section~\ref{section:GFS}). 
    \item We present \textit{Gram-Schmidt Feature Analysis} (GFA), a \textbf{feature selection} technique which inherits similar formal guarantees from GCA (Section~\ref{section:GFA}).
    \item We prove that GFS generalizes the Unsupervised Fourier Feature Selection (UFFS) algorithm due to~\cite{heidari2022sufficiently}, yet at significantly lower complexity. Specifically, UFFS arises as a special case of GFS when a certain function family is chosen (Appendix~\ref{section:previousFourier}).
    \item We support our theoretical findings and show the applicability of our techniques by performing broad experiments. The experiments validate our findings in several aspects:
    \begin{itemize}
        \item GFR achieves similar residual variance\footnote{I.e., the maximum amount of variance that is left in the data distribution after extracting said features. This is captured in PCA by the~$(m+1)$'th largest eigenvalue of the covariance matrix when extracting~$m$ features.} using significantly fewer extracted features in comparison to PCA, by almost an order of magnitude in certain cases. 
        \item GFR preserves classification accuracy in both synthetic and real-world datasets, in comparison to PCA and state-of-the-art linear methods. It also performs comparatively well or better than some powerful nonlinear methods such as autoencoders, kernel PCA, and Uniform Manifold Approximation and Projection (UMAP). 
        \item GFS shows superior performance in classification accuracy on real-world and synthetic datasets in comparison to state-of-the-art feature selection algorithms, including UFFS~\cite{heidari2022sufficiently}. 
        \item The theoretical guarantees of GCA and GFA are shown to hold over synthetic datasets.        
    \end{itemize}
    The implementations of our algorithms are simple, and source code is available at~\url{https://github.com/byaghooti/Gram_schmidt_feature_extraction}.
\end{itemize}

\section{Notations and Preliminaries}
\subsection{Notations}
We denote vector random variables using capital letters~$X,Y,Z$, etc., and in particular we let~$X=(X_1,\ldots,X_d)$ be the (unknown, zero mean) random variable over~$\bR^d$ (i.e., each~$X_i$ is over~$\bR$) from which the data is sampled i.i.d. We use~$\boldx=(x_1,\ldots,x_d), \boldy=(y_1,\ldots,y_d)$, etc. to denote ordinary (that is, non-random) variables, and~$\boldalpha=(\alpha_1,\ldots,\alpha_d), \boldbeta=(\beta_1,\ldots,\beta_d)$ etc. to denote vectors of scalars (i.e., $\alpha_i,\beta_i\in\bR$). Further, for~$\cS\subseteq[d]$ (where~$[d]\triangleq\{1,2,\ldots,d\}$), we use~$X_\cS$ (resp.~$\boldx_\cS$, etc.) to denote a vector of length~$|\cS|$ which contains the entries of~$X$ (resp.~$\boldx$, etc.) that are indexed by~$\cS$.

We denote by~$\cF(\boldz)$ a generic function family in the (ordinary) variables~$z_1,\ldots,z_d$, which are linearly independent over~$\bR$. For~$\cS\subseteq[d]$ we let~$\cF(\boldz_\cS)$ be the subset of~$\cF$ which contains all functions in~$\cF$ which depend only on a subset of the variables in~$\boldz_\cS$. Our Gram-Schmidt approach strongly relies on substituting the ordinary variables~$z_i$ by random ones~$Z_i$, and hence for~$\cS=\{\sigma_1,\ldots,\sigma_{|\cS|}\}\subseteq[d]$ (with~$\sigma_1<\ldots<\sigma_{|\cS|}$) and a vector of random variables~$Z=(Z_1,\ldots,Z_d)$, we denote by~$\cF(Z_{\cS})$ the result of substituting~$z_i\leftarrow Z_{\sigma_i}$ for every~$i\in[|\cS|]$ in~$\cF(\boldz_{|\cS|})$.

\begin{example}
    Let~$d=3$ and~$\cF(\boldz)=\{z_1, z_2, z_3, z_1z_2, z_1z_2z_3\}$. For~$\cS=[2]$ we have that~$\cF(\boldz_{\cS})=\{z_1,z_2,z_1z_2\}$, and for~$\cT=\{1,3\}$ we have~$\cF(Z_{\cT})=\{Z_1,Z_3,Z_1Z_3\}$.
\end{example}

All function families~$\cF(\boldz)$ we use in this paper contain all the ``singleton'' functions, i.e.,~$\{f_i(\boldz)=z_i\}_{i=1}^d\subseteq\cF(\boldz)$. Our algorithms require no particular assumption on~$\cF(\boldz)$ or on~$P_X$ other than~$\cF(Z_\cS)\subseteq\cL^2(P_X)$ for every~$\cS\subseteq [d]$, i.e., that the functions which result from the substitution have finite variance,
and as such belongs to a Hilbert space in which the inner product between two functions~$f$ and~$g$ is~$\bE[fg]$. We note that all random variables in this paper are some functions of~$X$, the random variable from which the data is sampled i.i.d, and all expectations are computed with respect to $P_X$. 

\begin{remark}\label{remark:stupidReviewerHere}
    The random variables we use in this paper (including~$X$) need not be known. We manipulate random variables as symbolic expressions, and only use them in the context of computing expectations. In practice, these expectations are approximated using empirical means. For example, we may manipulate~$X$ to obtain a function~$\ell(X)=X_1+X_2^2$, without the need to know how~$X$ is distributed. To avoid notational clutter, expressions of the form~$\bE[\ell(X)]$ are used in lieu of the empirical approximation~$\bE[\ell(X)]\approx \sum_{i=1}^N \ell(\boldalpha_i)=\sum_{i=1}^N(\alpha_{i,1}+\alpha_{i,2}^2)$, where~$\{\boldalpha_i\}_{i=1}^N$ is the dataset at hand. 
\end{remark}

As mentioned previously, our algorithms rely on the Gram-Schmidt process in a Hilbert space. To this end, we denote by~$\hat{\cF}(Z_{\cS})$ the result of applying the Gram-Schmidt process over a function family~$\cF(Z_{\cS})$; notice that~$\Span\hat{\cF}(Z_{\cS})=\Span\cF(Z_{\cS})$. The full details of this Gram-Schmidt process are given in the following section. Finally, we use the algorithmic notation~$a\leftarrow b$ to denote that the expression~$b$ is computed, and then assigned into the variable~$a$.

\begin{remark}
    All our proposed algorithms can be implemented using purely algebraic operations. However, we chose to present them within a probabilistic framework in order to convey the core ideas more clearly, as well as to establish the theoretical guarantees more easily. Nevertheless, to support practical implementation, algebraic representations of all proposed algorithms, including orthogonalization, GFR, GCA, GFS, and GFA, are provided in Appendix~\ref{section:algebraic-representations}.
\end{remark}

\subsection{Gram-Schmidt Orthogonalization over Function Spaces}\label{section:GS}
In this section we describe the core GS orthogonalization process in a function space on which all our algorithms are based. 
As mentioned earlier, we begin by fixing a function family~$\cF(\boldz)$ in variables~$\boldz=(z_1,\ldots,z_d)$; other than being square integrable according to the data distribution we do not make any particular assumptions on those functions, but in our experimental sections we focus on various polynomial functions.

Our algorithms (Sections~\ref{section:GFR}-\ref{section:GFA}) apply in an iterative manner, where in iteration~$j$ we choose some new feature~$Z_{\ell_j}$ to select/extract, substitute~$z_j\leftarrow Z_{\ell_j}$ in the functions of~$\cF(\boldz_{[j]})$, and perform GS orthogonalization. 
To this end, Algorithm~\ref{algorithm:orthogonalize} operates at each iteration~$j$ of either of our algorithms by receiving an already orthogonalized~$\hat{\cF}(Z_{\ell_1},\ldots,Z_{\ell_{j-1}})$, new functions in~$\cF(\boldz_{[j]})\setminus\cF(\boldz_{[j-1]})$, and a new random variable~$Z_{\ell_j}$. 

Then, for each member of~$\cF(\boldz_{[j]})\setminus\cF(\boldz_{[j-1]})$, the algorithm substitutes~$z_i\leftarrow Z_{\ell_i}$ for every~$i\in[j]$ (Line~\ref{line:substitute_nonrandom_variables}), subtracts its projection on already-orthogonalized functions (Line~\ref{line:subtraction_of_projections}), normalizes (Line~\ref{line:normalization}), and adds the result to~$\cT$ (Line~\ref{line:add}). 
After the subtraction process we define a new (vector) random variable~$d_j(X)$ from~$X$ (Line~\ref{line:subtraction_of_data_distribution}), whose covariance matrix is computed (Line~\ref{line:covariance_matrix}) and returned as output. An equivalent explicit algebraic representation of this orthogonalization process, referred to as \textit{Algebraic-Orthogonalize}, is provided in Algorithm~\ref{algorithm:orthogonalize_algebraic} in Appendix~\ref{section:algebraic-orthogonalize}.

\begin{algorithm}[!h]
\caption{$\text{Orthogonalize}(\hat{\cF}(Z_{\ell_1},\ldots,Z_{\ell_{j-1}}),\cF(\boldz_{[j]})\setminus\cF(\boldz_{[j-1]}),{\{Z_{\ell_i}\}_{i=1}^j})$} \label{algorithm:orthogonalize}
\begin{algorithmic}[1]
\STATE {\bfseries Input:} 
\begin{itemize}
    \item $\hat{\cF}(Z_{\ell_1},\ldots,Z_{\ell_{j-1}})$: an orthogonalized function family in random variables~$Z_{\ell_1},\ldots,Z_{\ell_{j-1}}$.
    \item $\cF(\boldz_{[j]})\setminus\cF(\boldz_{[j-1]})$: all the functions in~$\cF(\boldz)$ \\ which depend on~$\boldz_{[j]}$ but not only on~$\boldz_{[j-1]}$.
    \item ${\{Z_{\ell_i}\}_{i=1}^j}$: the random variables~$Z_{\ell_1},\ldots,Z_{\ell_{j-1}}$ which appear in $\hat{\cF}(Z_{\ell_1},\ldots,Z_{\ell_{j-1}})$, and a new random variable~$Z_{\ell_j}$.
\end{itemize}
\STATE {\bfseries Output:} Orthogonalized function family \\
$\hat{\cF}(Z_{\ell_1},\ldots,Z_{\ell_j})$, and a covariance matrix~$\Sigma_{j+1}$.
\STATE {\bfseries Initialize:} $\cT\leftarrow\hat{\cF}(Z_{\ell_1},\ldots,Z_{\ell_{j-1}})$.
\STATE Denote $\cF(\boldz_{[j]})\setminus\cF(\boldz_{[j-1]}))\triangleq\{f_{1}(\boldz),\ldots,f_{\ell}(\boldz)\}$, where~$f_1(\boldz)=z_j$.\label{line:orthogonalization_order}
\FOR{$k\gets 1$ {\bfseries to} $\ell$}
\STATE $g(Z_{\ell_1},\ldots,Z_{\ell_j})\leftarrow f_k(Z_{\ell_1},\ldots,Z_{\ell_j})$ \\\quad(create~$g$ from~$f_k$ by substituting~$z_i\leftarrow Z_{\ell_i}$ for~$i\in[j]$.)\label{line:substitute_nonrandom_variables}
\STATE $g(Z_{\ell_1},\ldots,Z_{\ell_j})\leftarrow g(Z_{\ell_1},\ldots,Z_{\ell_j})-\sum_{f\in \cT}\bE[g f]f$\label{line:subtraction_of_projections}\\\quad(orthogonalize~$g$ with respect to~$\cT$.)
\STATE $g(Z_{\ell_1},\ldots,Z_{\ell_j})\leftarrow\frac{g(Z_{\ell_1},\ldots,Z_{\ell_j})}{\sqrt{\bE[g(Z_{\ell_1},\ldots,Z_{\ell_j})^2]}}$\\\quad (normalize~$g$.) \label{line:normalization}
\STATE $\cT\leftarrow\cT\cup\{g(Z_{\ell_1},\ldots,Z_{\ell_j})\}$\\\quad(add~$g$ to~$\cT$)\label{line:add}
\ENDFOR 
\STATE Define~$\hat{\cF}(Z_{\ell_1},\ldots,Z_{\ell_j})= \cT$.
\STATE Define~$d_j(X) = X - \sum_{f\in \cT}\bE[X f]f$.\label{line:subtraction_of_data_distribution}
\STATE Define~$\Sigma_{j+1} = \mathbb{E}[d_j(X) d_j^\intercal(X)]$.\label{line:covariance_matrix}
\STATE Return~$\Sigma_{j+1}$, $\cT$.\label{line:return}
\end{algorithmic}
\end{algorithm}

\section{Gram-Schmidt Functional Reduction (GFR)}\label{section:GFR}
In Algorithm~\ref{algorithm:GFR} below we propose \textit{Gram-Schmidt Functional Reduction} (GFR), which uses the alternative covariance matrices~$\Sigma_j$ in order to extract new high variance directions. 
In each iteration~$j$, GFR begins by identifying the largest eigenvector~$\boldnu_j$ of~$\Sigma_j$ (this would be the highest variance principal direction of~$d_{j-1}(X)$, computed in Line~\ref{line:subtraction_of_data_distribution} of Algorithm~\ref{algorithm:orthogonalize}). 
If the variance of this direction is less than~$\epsilon^2$, the algorithm stops, and otherwise it continues by specifying the next random variable~$Z_j$ as~$Z_j=X^\intercal\boldnu_j$, which enables the next call to Orthogonalize (Algorithm~\ref{algorithm:orthogonalize}). 
An equivalent and fully algebraic version of this algorithm, referred to as \textit{Algebraic-GFR}, is provided in Algorithm~\ref{algorithm:GFR_algebraic} in Appendix~\ref{section:algebraic-GFR-GCA}.

\begin{algorithm}[!h]
\caption{Gram-Schmidt Functional Reduction (GFR)}\label{algorithm:GFR}
\begin{algorithmic}[1]
\STATE {\bfseries Input:} A function family~$\cF(\boldz)$ and a threshold $\epsilon>0$.
\STATE {\bfseries Output:} Extracted directions~$\boldnu_1,\ldots,\boldnu_m$ ($m$ is a \\
varying number that depends on~$\cF(\boldz),\epsilon$, and~$X$).
\STATE {\bfseries Initialize:} $\Sigma_1=\bE[XX^\intercal]$.
\FOR{$j\gets1$ {\bfseries to} $d$}
\STATE Let~$\boldnu_j$ be the largest unit-norm eigenvector of $\Sigma_j$, \\ 
i.e., $\boldnu_j=\arg\max_{\Vert\boldnu\Vert=1}\boldnu^\intercal\Sigma_j\boldnu$. \label{line:eigenvector_maximization}
\IF{$\boldnu_j^\intercal\Sigma_j\boldnu_j\le\epsilon^2$}
\STATE break.
\ELSE
\STATE Define~$Z_j=X^\intercal\boldnu_j$.
\ENDIF
\STATE $\Sigma_{j+1},\hat{\cF}(Z_1,\ldots,Z_j)=\text{Orthogonalize}(\hat{\cF}(Z_1,\ldots,Z_{j-1}),\cF(\boldz_{[j]})\setminus\cF(\boldz_{[j-1]}),\{Z_i\}_{i=1}^j)$.
\ENDFOR
\end{algorithmic}
\end{algorithm}

Next, we state the information-theoretic guarantees of GFR. 
While GFR shows significant gains over state-of-the-art linear (and some nonlinear) methods with real-world datasets, its information-theoretic guarantees require the data distribution~$X$ to be over a finite alphabet. 
In our experiments (Section~\ref{section:experiments}) it is shown that setting~$\cF(\boldz)$ to be multilinear polynomials of low degree reduces the residual variance in real-world datasets by up to an order of magnitude in comparison with PCA.

\begin{theorem}\label{theorem:GFR}
    Suppose~$X$ is over a discrete domain~$\cX^d$, for some~$\cX\subseteq\bR$, and let~$Z=(Z_1,\ldots,Z_m)=(X^\intercal\boldnu_1,\ldots,X^\intercal\boldnu_m)$, where~$\boldnu_1,\ldots,\boldnu_m$ is the output of GFR whose input is a given~$\epsilon$ and~$\cF(\boldz)$. Then~$H(X\vert Z)\le dO(\epsilon)$.
\end{theorem}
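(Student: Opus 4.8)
The plan is to extract from the stopping criterion a quantitative statement that each coordinate of $X$ is well-approximated in $\cL^2(P_X)$ by a function of $Z$, and then to convert this $\cL^2$ approximation into a bound on the conditional entropy using the discreteness of $\cX$. Concretely, GFR terminates at iteration $m+1$ precisely when the largest eigenvalue of $\Sigma_{m+1}$ drops to at most $\epsilon^2$. By Line~\ref{line:subtraction_of_data_distribution} of Algorithm~\ref{algorithm:orthogonalize}, $\Sigma_{m+1}=\bE[d_m(X)d_m(X)^\intercal]$ with $d_m(X)=X-\sum_{f\in\cT}\bE[Xf]f$ and $\cT=\hat\cF(Z_1,\ldots,Z_m)$. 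Since every diagonal entry of a positive semidefinite matrix is bounded by its largest eigenvalue, for each coordinate $i\in[d]$ I would write $d_m(X)_i=X_i-g_i(Z)$, where $g_i(Z)\triangleq\sum_{f\in\cT}\bE[X_if]f$ is a genuine (deterministic) function of $Z$, because every $f\in\cT$ is a polynomial in $Z_1,\ldots,Z_m$. The stopping criterion then reads $\bE[(X_i-g_i(Z))^2]=(\Sigma_{m+1})_{ii}\le\epsilon^2$ for all $i$.

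Having reduced the problem to per-coordinate approximation, I would first reduce the joint conditional entropy to a sum of scalar ones: by the chain rule and the fact that conditioning does not increase entropy, $H(X\vert Z)\le\sum_{i=1}^d H(X_i\vert Z)$. It therefore suffices to prove $H(X_i\vert Z)=O(\epsilon)$ for each $i$, with the hidden constant depending only on $\cX$.

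The heart of the argument is a lemma turning $\cL^2$-closeness into small conditional entropy, which is where I would borrow the mechanism of~\cite{heidari2022sufficiently}. Let $\delta\triangleq\min_{a\ne b\in\cX}\lvert a-b\rvert>0$ be the minimal gap of the finite alphabet, and let $\hat X_i(Z)$ denote the point of $\cX$ nearest to $g_i(Z)$. If $X_i\ne\hat X_i(Z)$ then $X_i$ is not the closest alphabet symbol to $g_i(Z)$, which forces $\lvert X_i-g_i(Z)\rvert\ge\delta/2$; hence by Markov's inequality $\Pr[X_i\ne\hat X_i(Z)]\le\Pr[\lvert X_i-g_i(Z)\rvert\ge\delta/2]\le 4\bE[(X_i-g_i(Z))^2]/\delta^2\le 4\epsilon^2/\delta^2\triangleq p_i$. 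Since $\hat X_i(Z)$ is a function of $Z$, Fano's inequality gives $H(X_i\vert Z)\le H_b(p_i)+p_i\log(\lvert\cX\rvert-1)$, where $H_b$ denotes the binary entropy. As $p_i=O(\epsilon^2)$ and $H_b(p_i)=O(p_i\log(1/p_i))=O(\epsilon^2\log(1/\epsilon))$, each term is $O(\epsilon^2\log(1/\epsilon))=O(\epsilon)$, and summing over the $d$ coordinates yields $H(X\vert Z)\le dO(\epsilon)$ (in fact the stronger bound $dO(\epsilon^2\log(1/\epsilon))$, of which the stated claim is a convenient relaxation).

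I expect the main obstacle to be the lemma in the last paragraph, namely making the passage from $\cL^2$ error to conditional entropy both correct and uniform in $\epsilon$. The delicate points are that this step genuinely requires $\cX$ to be a finite (or at least uniformly separated and bounded) alphabet, so that both $\delta$ and $\lvert\cX\rvert$ are constants independent of $\epsilon$ and enter only through the hidden constant, and that $g_i(Z)$ really is measurable with respect to $Z$ alone, which relies on every Gram-Schmidt output in $\cT$ being a function of $Z_1,\ldots,Z_m$. Everything else, namely the stopping criterion, the diagonal-versus-largest-eigenvalue bound, and the chain rule, is routine.
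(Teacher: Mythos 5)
Your proposal is correct in its main case and takes a genuinely different, and in several respects leaner, route than the paper's. The paper completes $\boldnu_1,\ldots,\boldnu_m$ to a full orthonormal system (Definition~\ref{definition:orthogonalCompletion}), establishes the recursive covariance identity and the orthonormality of the $\boldnu_j$'s (Lemma~\ref{lemma:GFRaux}), uses these to show that \emph{every} post-stopping direction has variance at most $\epsilon^2$ (Lemma~\ref{lemma:alldirections}), rewrites $H(X\vert Z)$ via the change of basis as $\sum_{i>m}H(Z_i\vert Z_1,\ldots,Z_{i-1})$, and bounds each summand by Markov's inequality and the grouping rule with a constant $\alpha_{\min}$ defined through the orthogonalized functions $\tilde Z_i$. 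You bypass all of that structural machinery: the stopping criterion gives $\lambda_{\max}(\Sigma_{m+1})\le\epsilon^2$, which already controls the diagonal of the positive semidefinite matrix $\Sigma_{m+1}$, so each original coordinate $X_i$ is $\epsilon$-close in $\cL^2(P_X)$ to the $Z$-measurable function $g_i(Z)$; subadditivity of conditional entropy, nearest-symbol decoding, Chebyshev, and Fano then finish the argument. Your route buys two things: it needs neither Lemma~\ref{lemma:alldirections} nor Lemma~\ref{lemma:GFRaux}, and its hidden constant depends only on the alphabet gap $\delta$ and $\vert\cX\vert$, whereas the paper's depends on $\alpha_{\min}$, a quantity determined by the functions the algorithm happens to produce. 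What the paper's longer route buys is the structural content of those two lemmas, which it reuses elsewhere (e.g., in the complexity analysis and the UFFS comparison). Two small points to patch: (i) the members of $\hat{\cF}(Z_1,\ldots,Z_m)$ are deterministic functions of $Z$ but need not be polynomials unless $\cF(\boldz)$ is polynomial --- only $Z$-measurability is needed, which you do have; and (ii) your argument presumes the stopping criterion fires, i.e., $m<d$. In the boundary case $m=d$ there is no guarantee on $\Sigma_{d+1}$, and you must instead note that $\boldnu_1,\ldots,\boldnu_d$ are orthonormal (the paper's Lemma~\ref{lemma:GFRaux}.(b)), so $Z$ determines $X$ and $H(X\vert Z)=0$; one sentence suffices, but it is the one place where you cannot avoid importing a fact from the paper's machinery.
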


The proof of Theorem~\ref{theorem:GFR} is similar to a proof from~\cite{heidari2022sufficiently}, but requires several additional statements.
These statements essentially show that once GFR stops, the variance of the data in \textit{all potential subsequent directions} is smaller than~$\epsilon^2$.
As a result, a tradeoff between~$\cF(\boldz)$ and~$\epsilon$ is revealed---taking a more powerful~$\cF(\boldz)$ (e.g., higher degree polynomials) or a smaller~$\epsilon$ would decrease the number~$m$ of extracted features.
To prove this statement, we complete the vectors~$\boldnu_1,\ldots,\boldnu_m$ with~$\boldnu_{m+1},\ldots,\boldnu_d$ as follows.
\begin{definition}\label{definition:orthogonalCompletion}
    Let~$\boldnu_{1},\ldots,\boldnu_m$ be the output of GFR with input~$\epsilon>0$ and~$\cF(\boldz)$. Let~$\boldnu_1,\ldots,\boldnu_{m'}$ be the output of GFR with the same~$\cF(\boldz)$ and~$\epsilon=0$, and let~$\Sigma_1,\ldots,\Sigma_{m'}$ be the respective covariance matrices. 
    In cases where~$m'<d$ we let~$\boldnu_{m'+1},\ldots,\boldnu_d$ be an arbitrary orthonormal completion of~$\boldnu_1,\ldots,\boldnu_{m'}$.
    In these cases we also artificially complete the orthogonalization process to produce~$\hat{\cF}(Z_1,\ldots,Z_d)$, with~$Z_j=X^\intercal\boldnu_j$ for all~$j\in[d]$, and let~$\{\Sigma_j\}_{j=1}^d$ be the respective covariance matrices.
    Finally, for~$j\in[d]$ we denote $\lambda_j=\boldnu_j^\intercal\Sigma_j\boldnu_j$.
\end{definition} 

\begin{lemma}\label{lemma:alldirections}
    For every~$j\in\{m+1,\ldots,d\}$ we have that~$\boldnu_j^\intercal\Sigma_j\boldnu_j\le\epsilon^2$.
\end{lemma}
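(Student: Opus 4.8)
The plan is to reduce the lemma to a single monotonicity statement: the quadratic forms $\boldnu^\intercal\Sigma_j\boldnu$ are non-increasing in $j$ for every fixed direction $\boldnu$, i.e.\ $\Sigma_{j+1}\preceq\Sigma_j$ in the positive-semidefinite (Loewner) order. Granting this, the lemma follows in three strokes. First, the Rayleigh bound gives $\boldnu_j^\intercal\Sigma_j\boldnu_j\le\lambda_{\max}(\Sigma_j)$ for every $j$, regardless of how $\boldnu_j$ was chosen. Second, monotonicity of the top eigenvalue under $\preceq$ yields $\lambda_{\max}(\Sigma_j)\le\lambda_{\max}(\Sigma_{m+1})$ for all $j\ge m+1$. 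Third, the stopping rule of GFR guarantees $\lambda_{\max}(\Sigma_{m+1})=\boldnu_{m+1}^\intercal\Sigma_{m+1}\boldnu_{m+1}\le\epsilon^2$, since $\boldnu_{m+1}$ is the top eigenvector of $\Sigma_{m+1}$ at the iteration where the algorithm breaks. Chaining these gives $\boldnu_j^\intercal\Sigma_j\boldnu_j\le\epsilon^2$ for every $j\in\{m+1,\ldots,d\}$.

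To establish the monotonicity I would first rewrite the quadratic form as a residual norm in $\cL^2(P_X)$. Writing $\cT_j=\hat{\cF}(Z_1,\ldots,Z_j)$ for the orthonormal family after iteration $j$ (with $\cT_0=\emptyset$), we have $\Sigma_{j+1}=\bE[d_j d_j^\intercal]$ with $d_j=X-\sum_{f\in\cT_j}\bE[Xf]f$. Using linearity of the $\cL^2$ projection, for any $\boldnu$ one has $\boldnu^\intercal d_j=\boldnu^\intercal X-\sum_{f\in\cT_j}\bE[(\boldnu^\intercal X)f]f$, which is exactly the residual of the scalar random variable $\boldnu^\intercal X$ after orthogonal projection onto $\Span\cT_j$. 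Hence $\boldnu^\intercal\Sigma_{j+1}\boldnu=\bE[(\boldnu^\intercal d_j)^2]$ equals the squared $\cL^2$-distance from $\boldnu^\intercal X$ to $\Span\cT_j$.

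The key structural fact, read directly off Algorithm~\ref{algorithm:orthogonalize} (which initializes $\cT\leftarrow\hat{\cF}(Z_1,\ldots,Z_{j-1})$ and only appends new functions), is that the spans are nested: $\Span\cT_{j-1}\subseteq\Span\cT_j$. Projecting onto a larger subspace can only shrink the residual, so the squared distance from $\boldnu^\intercal X$ to $\Span\cT_j$ is at most its distance to $\Span\cT_{j-1}$; this is precisely $\boldnu^\intercal\Sigma_{j+1}\boldnu\le\boldnu^\intercal\Sigma_j\boldnu$, giving $\Sigma_{j+1}\preceq\Sigma_j$. I would emphasize that this argument is insensitive to how the $\boldnu_j$ are selected, so it applies verbatim in the regime $j>m'$, where the $\boldnu_j$ come from the arbitrary orthonormal completion of Definition~\ref{definition:orthogonalCompletion} rather than from eigenvector maximization; there it is the Rayleigh bound (rather than equality with $\lambda_{\max}$) that carries the conclusion, since $\boldnu_j$ need not be a top eigenvector.

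The main obstacle---indeed the only nontrivial point---is the residual-norm identity and the interchange of the linear combination $\boldnu^\intercal(\cdot)$ with the coordinate-wise projection defining $d_j$; everything afterward is monotonicity of least-squares residuals under subspace inclusion together with monotonicity of $\lambda_{\max}$ under $\preceq$. A minor bookkeeping point I would verify is that the $\epsilon>0$ and $\epsilon=0$ runs of GFR generate identical $\boldnu_j$ and $\Sigma_j$ up through index $m+1$ (the two runs differ only in the stopping test), so that the matrix $\Sigma_{m+1}$ appearing in the stopping condition coincides with the one in the completed sequence of Definition~\ref{definition:orthogonalCompletion}.
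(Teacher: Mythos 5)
Your proof is correct, and it arrives at the lemma through the same underlying mechanism as the paper---monotone decrease of the quadratic forms $\boldnu^\intercal\Sigma_j\boldnu$ combined with the stopping rule at step $m+1$---but it establishes that monotonicity by a genuinely different route. The paper argues by contradiction: it invokes the explicit telescoping identity of Lemma~\ref{lemma:GFRaux}(a), namely $\Sigma_j=\Sigma_{m+1}-\sum_{f\in\hat{\cF}_{j-1}\setminus\hat{\cF}_{m}}\bE[fX]\bE[X^\intercal f]$, restricts it to the single direction $\boldnu_j$, and observes that the subtracted terms $\bE[f\cdot\boldnu_j^\intercal X]^2$ are nonnegative, which gives $\boldnu_j^\intercal\Sigma_j\boldnu_j\le\boldnu_j^\intercal\Sigma_{m+1}\boldnu_j\le\epsilon^2$. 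You instead prove the full Loewner ordering $\Sigma_{j+1}\preceq\Sigma_j$ directly, by identifying $\boldnu^\intercal\Sigma_{j+1}\boldnu$ with the squared $\cL^2(P_X)$-residual of $\boldnu^\intercal X$ after orthogonal projection onto $\Span\hat{\cF}_j$ and using the nestedness of these spans; the conclusion then follows from the Rayleigh bound and monotonicity of $\lambda_{\max}$ under $\preceq$. Your route buys two things: it bypasses the cross-term cancellation carried out in the proof of Lemma~\ref{lemma:GFRaux}(a) (you use the orthonormality of $\hat{\cF}_j$ only once, abstractly, to say that the coefficients $\bE[gf]$ define an orthogonal projection), and it yields the slightly stronger matrix inequality $\Sigma_j\preceq\Sigma_{m+1}$ rather than a bound in the single direction $\boldnu_j$, which is precisely what makes the argument indifferent to how the vectors $\boldnu_j$ with $j>m'$ are chosen. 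The two bookkeeping points you flag---that the $\epsilon>0$ and $\epsilon=0$ runs of GFR coincide through index $m+1$ so the stopping condition refers to the same $\Sigma_{m+1}$ as Definition~\ref{definition:orthogonalCompletion}, and that for $j>m'$ only the Rayleigh bound (not eigenvector optimality of $\boldnu_j$) is needed---are exactly the right things to check, and both hold.
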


The following technical lemma is required for the subsequent proof of Lemma~\ref{lemma:alldirections}. For brevity, in the subsequent lemmas for every~$j\in[d]$ we denote~$\hat{\cF}_j\triangleq\hat{\cF}(Z_1,\ldots,Z_j)$.

\begin{lemma}\label{lemma:GFRaux}
    In GFR, we have the following. 
    \begin{enumerate}
        \item[(a)] For~$i<j$ in~$[d]$ we have \\
        $\Sigma_j=\Sigma_i-\sum_{f\in\hat{\cF}_{j-1}\setminus\hat{\cF}_{i-1}}\bE[fX]\bE[X^\intercal f]$.
        \item[(b)] The vectors~$\boldnu_1,\ldots,\boldnu_{d}$ are orthonormal.
    \end{enumerate}
\end{lemma}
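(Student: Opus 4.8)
The plan is to treat the two parts separately: I would derive part (a) from the Pythagorean structure of orthogonal projection in $\cL^2(P_X)$, and then use it to establish the orthonormality in part (b).

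For part (a), I would first record the closed form $\Sigma_j = \Sigma_1 - \sum_{f\in\hat{\cF}_{j-1}}\bE[Xf]\bE[X^\intercal f]$ for every $j$. Writing $d_{j-1}(X) = X - \Pi_{j-1}X$, where $\Pi_{j-1}X \triangleq \sum_{f\in\hat{\cF}_{j-1}}\bE[Xf]f$ is the orthogonal projection of $X$ (applied to each coordinate) onto $\Span\hat{\cF}_{j-1}$, the residual is orthogonal to the projection, so $\bE[d_{j-1}(X)(\Pi_{j-1}X)^\intercal]=0$ and hence $\Sigma_j = \bE[XX^\intercal] - \bE[(\Pi_{j-1}X)(\Pi_{j-1}X)^\intercal]$. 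Expanding the second term and using orthonormality of $\hat{\cF}_{j-1}$ (i.e. $\bE[fg]=\delta_{f,g}$) collapses the double sum to $\sum_{f\in\hat{\cF}_{j-1}}\bE[Xf]\bE[X^\intercal f]$, giving the closed form. Since the orthogonalization only ever appends functions, $\hat{\cF}_{i-1}\subseteq\hat{\cF}_{j-1}$ for $i<j$, so subtracting the two closed forms yields exactly the stated telescoped identity (note $\bE[Xf]=\bE[fX]$).

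For part (b), the crux is the structural fact that $\Sigma_j\boldnu_i=0$ whenever $i<j$. To establish it, I would observe that in iteration $i$ the first function processed is $f_1(\boldz)=z_i$ (Line~\ref{line:orthogonalization_order}), which becomes $Z_i=X^\intercal\boldnu_i$ after substitution; since Gram-Schmidt preserves spans, $Z_i\in\Span\hat{\cF}_i\subseteq\Span\hat{\cF}_{j-1}$. Consequently $d_{j-1}(X)^\intercal\boldnu_i = Z_i - \Pi_{j-1}(Z_i) = 0$ in $\cL^2(P_X)$, and therefore $\Sigma_j\boldnu_i = \bE[d_{j-1}(X)\,(d_{j-1}(X)^\intercal\boldnu_i)] = 0$. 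Using symmetry of $\Sigma_j$ and the eigenrelation $\Sigma_j\boldnu_j=\lambda_j\boldnu_j$, for $i<j$ this gives $\lambda_j\,\boldnu_i^\intercal\boldnu_j = \boldnu_i^\intercal\Sigma_j\boldnu_j = (\Sigma_j\boldnu_i)^\intercal\boldnu_j = 0$. For every direction genuinely produced by GFR run with $\epsilon=0$ (i.e. $j\le m'$) the non-break condition forces $\lambda_j>0$, so $\boldnu_i^\intercal\boldnu_j=0$; together with the unit-norm choice of each eigenvector this shows $\boldnu_1,\ldots,\boldnu_{m'}$ are orthonormal. The remaining directions $\boldnu_{m'+1},\ldots,\boldnu_d$ are orthonormal to these and to one another by construction of the orthonormal completion in Definition~\ref{definition:orthogonalCompletion}.

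The main obstacle I anticipate is bookkeeping around the degenerate case rather than any deep difficulty: the eigenvalue cancellation argument only delivers orthogonality when $\lambda_j\neq 0$, so I must argue that every GFR-produced direction carries strictly positive variance (which the stopping rule $\boldnu_j^\intercal\Sigma_j\boldnu_j>\epsilon^2=0$ guarantees) and delegate the zero-variance tail entirely to the completion of Definition~\ref{definition:orthogonalCompletion}. I would also take care to state the projection identities precisely in $\cL^2(P_X)$---in particular that $\Pi_{j-1}(Z_i)=Z_i$ relies on the membership $Z_i\in\Span\hat{\cF}_i$, which is itself guaranteed by the ordering in Line~\ref{line:orthogonalization_order} together with the span-preservation of Gram-Schmidt, and this membership is what makes the whole argument go through.
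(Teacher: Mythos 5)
Your proof is correct, and while part (a) is essentially the paper's argument reorganized, part (b) takes a genuinely different and cleaner route. For (a), the paper expands $\Sigma_j=\bE[d_{j-1}(X)d_{j-1}(X)^\intercal]$ directly in terms of $d_{i-1}(X)$ and cancels the cross terms using the orthogonality relations $\bE[fd_{i-1}(X)]=\bE[fX]$; you instead establish the closed form $\Sigma_j=\Sigma_1-\sum_{f\in\hat{\cF}_{j-1}}\bE[Xf]\bE[X^\intercal f]$ via the Pythagorean identity for the projection $\Pi_{j-1}$ and then telescope using $\hat{\cF}_{i-1}\subseteq\hat{\cF}_{j-1}$ --- the same computation, differently packaged. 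For (b), the paper works harder: it uses part (a) to expand $\boldnu_i^\intercal\Sigma_j\boldnu_j$, computes the Fourier coefficients $\bE[f\cdot\boldnu_i^\intercal X]$ (which vanish except at $f=\hat{Z}_i$), and then separately verifies $\bE[X\Tilde{Z}_i]=\Sigma_i\boldnu_i=\lambda_i\boldnu_i$ to make everything cancel. Your observation that $Z_i\in\Span\hat{\cF}_{j-1}$ (guaranteed by the ordering in Line~\ref{line:orthogonalization_order} and span-preservation of Gram--Schmidt) forces $d_{j-1}(X)^\intercal\boldnu_i=0$ in $\cL^2(P_X)$, hence $\Sigma_j\boldnu_i=0$, subsumes all of that in one line: it identifies $\boldnu_i$ as lying in the kernel of $\Sigma_j$, which is the stronger structural fact the paper's computation is implicitly proving. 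Both proofs then conclude identically via $\lambda_j\boldnu_i^\intercal\boldnu_j=\boldnu_i^\intercal\Sigma_j\boldnu_j=0$, the positivity of $\lambda_j$ for $j\le m'$ from the stopping rule, and Definition~\ref{definition:orthogonalCompletion} for the tail. Your version buys brevity and makes transparent \emph{why} orthogonality holds (the already-extracted linear functionals have been projected out of $d_{j-1}$); the paper's version has the side benefit of explicitly recording the identity $\bE[X\Tilde{Z}_i]=\lambda_i\boldnu_i$, though that identity is not reused elsewhere.
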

\begin{proof}[Proof of Lemma~\ref{lemma:GFRaux}.$(a)$]
    It is readily verified that~$d_{j-1}(X)$ (see Algorithm~\ref{algorithm:orthogonalize}) can be written as
    \begin{align}\label{equation:dj-1}
        d_{j-1}(X)=d_{i-1}(X)-\sum_{f\in\hat{\cF}_{j-1}\setminus\hat{\cF}_{i-1}}\bE[X f]f,
    \end{align}
    and that~$d_{i-1}(X)$ can be written as
    \begin{align}\label{equation:di-1}
        d_{i-1}(X)=X-\textstyle\sum_{f\in\hat{\cF}_{i-1}}\bE[X f]f.
    \end{align}
    Hence, it follows from~\eqref{equation:dj-1} that
    \begin{align}\label{equation:sigmajassigmai}
        \Sigma_j&=\bE[d_{j-1}(X)d_{j-1}(X)^\intercal]\nonumber\\
        &=\textstyle\bE\bigg[ \left(d_{i-1}(X)-\sum_{f\in\hat{\cF}_{j-1}\setminus\hat{\cF}_{i-1}}\bE[X f]f\right)\nonumber\\
        &\phantom{=\bE\bigg[~}\textstyle\left(d_{i-1}(X)-\sum_{f\in\hat{\cF}_{j-1}\setminus\hat{\cF}_{i-1}}\bE[Xf]f\right)^\intercal \bigg]\nonumber\\
        &= \Sigma_i-\textstyle\sum_{f\in\hat{\cF}_{j-1}\setminus\hat{\cF}_{i-1}}\bE[fd_{i-1}(X)]\bE[X^\intercal f]\nonumber\\
        &\phantom{= \Sigma_i}-\textstyle\sum_{f\in\hat{\cF}_{j-1}\setminus\hat{\cF}_{i-1}}\bE[fX]\bE[ d_{i-1}(X)^\intercal f]\nonumber\\
        &\phantom{= \Sigma_i}+\textstyle\sum_{f\in \hat{\cF}_{j-1}\setminus\hat{\cF}_{i-1}}  \bE[Xf]\bE[X^\intercal f],
    \end{align}
    where the last summand follows from the orthonormality of~$\hat{\cF}_{j-1}$. Further, it follows from~\eqref{equation:di-1} that every~$f\in\hat{\cF}_{j-1}\setminus\hat{\cF}_{i-1}$ satisfies
    \begin{align*}
        \bE[fd_{i-1}(X)]&=\bE[f\cdot(X-\textstyle\sum_{g\in\hat{\cF}_{i-1}}\bE[Xg]g)]=\bE[fX], \\
        \bE[d_{i-1}(X)^\intercal f]&=\bE[(X-\textstyle\sum_{g\in\hat{\cF}_{i-1}}\bE[Xg]g)^\intercal\cdot f]=\bE[X^\intercal f],
    \end{align*}
    and therefore~$\eqref{equation:sigmajassigmai}=\Sigma_i-\sum_{f\in\hat{\cF}_{j-1}\setminus\hat{\cF}_{i-1}}\bE[Xf]\bE[X^\intercal f]$ as required.
\end{proof}

\begin{proof}[Proof of Lemma~\ref{lemma:GFRaux}.$(b)$]
Since all~$\boldnu_i$'s are normalized it suffices to show that for any~$j\in[d]$, the vector $\boldnu_j$ is orthogonal to~$\boldnu_1,\ldots,\boldnu_{j-1}$. 
If~$j>m'$, however, the latter statement follows from Definition~\ref{definition:orthogonalCompletion}, since~$\boldnu_{m'+1},\ldots,\boldnu_{d}$ is an orthogonal completion of~$\boldnu_{1},\ldots,\boldnu_{m'}$.
If~$j\le m'$ let~$i<j$, and observe that $\boldnu_i^\intercal\Sigma_j\boldnu_j=\lambda_j\boldnu_i^\intercal\boldnu_j$. 
It follows from part~$(a)$ that 
\begin{align*}
    \Sigma_j = \Sigma_i - \textstyle\sum_{f\in\hat{\cF}_{j-1}\setminus\hat{\cF}_{i-1}}\mathbb{E}[ fX] \mathbb{E}[X^\intercal f],
\end{align*}
and hence
\begin{align}\label{equation:vtsigmaju}
    \boldnu_i^\intercal \Sigma_j \boldnu_j & = \boldnu_i^\intercal \left(\Sigma_i - \textstyle\sum_{ f\in\hat{\cF}_{j-1}\setminus\hat{\cF}_{i-1}}\mathbb{E}[  fX] \mathbb{E}[X^\intercal  f]\right) \boldnu_j \nonumber \\
    & = \lambda_i\boldnu_i^\intercal \boldnu_j - \textstyle\sum_{ f\in\hat{\cF}_{j-1}\setminus \hat{\cF}_{i-1}}\bE[  f\cdot\boldnu_i^\intercal X] \mathbb{E}[X^\intercal\boldnu_j \cdot f].
\end{align}
Let~$\Tilde{Z}_i$ and~$\hat{Z}_i$ be the orthogonalized and orthonormalized versions of~$Z_i$, respectively. By definition, we have that $\boldnu_i^\intercal X = Z_i = \Tilde{Z}_i + \sum_{ f\in\hat{\cF}_{i-1}} \mathbb{E}[Z_i  f] f$, and hence every~$ f\in\hat{\cF}_{j-1}\setminus\hat{\cF}_{i-1}$ satisfies 
\begin{align}\label{equation:fhatztilde}
\bE[ f\cdot\boldnu_i^\intercal X]&=\bE[ f\cdot(\Tilde{Z}_i+\textstyle\sum_{ g\in\hat{\cF}_{i-1}}\bE[Z_i g] g)] \nonumber\\
&=\bE[ f\Tilde{Z}_i]=
    \begin{cases}
        \Vert \Tilde{Z}_i\Vert & \mbox{if } f=\hat{Z}_i\\
        0 & \mbox{ otherwise.}
    \end{cases}
\end{align}
Therefore, 
\begin{align}\label{equation:lambdavivj}
    \eqref{equation:vtsigmaju}&=\lambda_i\boldnu_i^\intercal \boldnu_j - \textstyle\sum_{ f\in\hat{\cF}_{j-1}\setminus \hat{\cF}_{i-1}}\bE[  f\Tilde{Z}_i] \bE[X^\intercal\boldnu_j \cdot f] \nonumber\\
    &=\lambda_i\boldnu_i^\intercal\boldnu_j-\Vert\Tilde{Z}_i\Vert\bE[X^\intercal\boldnu_j\cdot\hat{Z}_i]\nonumber\\
    &=(\lambda_i\boldnu_i^\intercal-\bE[X^\intercal\Tilde{Z}_i])\boldnu_j.
\end{align}
Now observe that
\begin{align*}
    \bE[X\Tilde{Z}_i] &=  \bE[X(Z_i - \textstyle\sum_{ f\in\hat{\cF}_{i-1}} \bE[Z_i  f] f)] \nonumber\\
    &= \bE[X(\boldnu_i^\intercal X - \textstyle\sum_{ f\in\hat{\cF}_{i-1}} \bE[\boldnu_i^\intercal X  f] f)] 
    \\
    &= \bE[ (XX^\intercal) \boldnu_i] - \textstyle\sum_{ f\in\hat{\cF}_{i-1}} \bE[X  f]\bE[ X^\intercal\boldnu_i  f ]  \nonumber \\
    &= \left(\bE[ XX^\intercal ] - \textstyle\sum_{ f\in\hat{\cF}_{i-1}}  \bE[ fX ]\bE[ X^\intercal f  ]\right)\boldnu_i\nonumber\\
    &\overset{(\star)}{=}\Sigma_i\boldnu_i=\lambda_i\boldnu_i,
\end{align*}
where~$(\star)$ follows by applying part~$(a)$ on~$\Sigma_i$ and~$\Sigma_1$, and hence~$\eqref{equation:lambdavivj}=0$. Subsequently, we have~$\lambda_j\boldnu_i^\intercal\boldnu_j=\boldnu_i^\intercal\Sigma_j\boldnu_j=0$. Since~$\lambda_j\ne 0$ by Definition~\ref{definition:orthogonalCompletion}, it follows that~$\boldnu_j$ is orthogonal to~$\boldnu_i$ for all~$i\in[j-1]$, and the claim follows.
\end{proof}

\begin{proof}[Proof of Lemma~\ref{lemma:alldirections}]

    According to the stopping criterion in GFR, it follows that
\begin{align}\label{equation:conclusionsFromStopping_}
    \boldnu_j^\intercal\Sigma_j\boldnu_j&>\epsilon^2\mbox{ for all }j\in[m]\nonumber\\
    \max_{\Vert\boldnu\Vert=1}\boldnu^\intercal\Sigma_{m+1}\boldnu&\le\epsilon^2.
\end{align}
Assume for contradiction that there exists~$j\in\{m+1,\ldots,d\}$ such that~$\boldnu_j^\intercal\Sigma_j\boldnu_j>\epsilon^2$, and observe that~$j>m+1$, since otherwise the existence of~$\boldnu_{m+1}$ contradicts~\eqref{equation:conclusionsFromStopping_}. It follows from Lemma~\ref{lemma:GFRaux}.(a) that
\begin{align*}
    \Sigma_j=\Sigma_{m+1}-\sum_{ f\in\hat{\cF}_{j-1}\setminus\hat{\cF}_{m}}\bE[ fX]\bE[X^\intercal f],
\end{align*}
and therefore
\begin{align*}
    \boldnu_j^\intercal\Sigma_j\boldnu_j&\textstyle=\boldnu_j^\intercal\Sigma_{m+1}\boldnu_j-\sum_{ f\in\hat{\cF}_{j-1}\setminus\hat{\cF}_{m}}\bE[ f\cdot\boldnu_j^\intercal X]^2 \\
    &\overset{\eqref{equation:conclusionsFromStopping_}}{\le}\textstyle\epsilon^2-\sum_{ f\in\hat{\cF}_{j-1}\setminus\hat{\cF}_{m}}\bE[ f\cdot\boldnu_j^\intercal X]^2.
\end{align*}
Therefore, since~$\boldnu_j^\intercal\Sigma_j\boldnu_j>\epsilon^2$, it follows that 
\begin{align*}
    -\sum_{ f\in\hat{\cF}_{j-1}\setminus\hat{\cF}_{m}}\bE[ f\cdot\boldnu_j^\intercal X]^2>0,
\end{align*}
which is a contradiction since $\bE[ f\cdot\boldnu_j^\intercal X]^2\ge 0$ for every~$ f$.
\end{proof}

\begin{proof}[Proof of Theorem~\ref{theorem:GFR}]
    
Since the matrix whose rows are the~$\boldnu_i$'s has determinant~$1$, we have:
\begin{align}\label{equation:Hzx}
    H(X\vert Z)&=H(\{X^\intercal\boldnu_j\}_{j=1}^d\vert \{X^\intercal\boldnu_j\}_{j=1}^m)\nonumber\\
    &=H(\{X^\intercal\boldnu_j\}_{j=m+1}^d\vert \{X^\intercal\boldnu_j\}_{j=1}^m)\nonumber\\
    &\textstyle= H(Z^\bot\vert Z)= \sum_{i=m+1}^d H(Z_i\vert (Z_j)_{j=1}^{i-1}),
\end{align}
where the last equality follows from the chain rule for information entropy. Since
\begin{align}\label{equation:ziTilde}
    \Tilde{Z}_i=Z_i-\sum_{ f\in \hat{\cF}_{i-1}}\bE[Z_i f] f,
\end{align}
it follows that
\begin{align}\label{equation:Hztilde}
    \textstyle H(Z_i\vert (Z_j)_{j=1}^{i-1})&\textstyle =H(\Tilde{Z}_i+\sum_{ f\in \hat{\cF}_{i-1}}\bE[Z_i f] f\vert (Z_j)_{j=1}^{i-1})\nonumber\\
    &\overset{(a)}{=}H(\Tilde{Z}_i\vert (Z_j)_{j=1}^i)\textstyle\overset{(b)}{\le}H(\Tilde{Z}_i),
\end{align}
where~$(a)$ follows since the expression $\sum_{ f\in \hat{\cF}_{i-1}}\bE[Z_i f] f$ is uniquely determined by the variables~$Z_1,\ldots,Z_{i-1}$ (every~$ f\in\hat{\cF}_{i-1}$ is a deterministic function of~$Z_1,\ldots,Z_{i-1}$, and the coefficients~$\bE[Z_i f]$ are constants), and~$(b)$ follows since conditioning reduces entropy. Combining~\eqref{equation:Hzx} with~\eqref{equation:Hztilde} we have that~$H(X\vert Z)\le\sum_{i=m+1}^dH(\Tilde{Z}_i)$, and hence it remains to bound~$H(\Tilde{Z}_i)$ for all~$i\in\{m+1,\ldots,d\}$.

To this end let~$i\in\{m+1,\ldots,d\}$, define\footnote{To clarify the notation~$\tilde{Z}_i(\boldalpha)$ for~$\boldalpha\in\cX^d$, recall that every~$Z_i$ is a function of~$X$, and therefore so is its orthogonalized version~$\tilde{Z}_i=Z_i-\sum_{f\in\hat{\cF}_{i-1}}\bE[Z_i f]f$. Therefore, the notation~$\tilde{Z}_i(\boldalpha)$ stands for the scalar given by the substitution~$X\leftarrow\boldalpha$ in~$\tilde{Z}_i$.}~$\alpha_i\triangleq\min\{|\Tilde{Z}_i(\boldalpha)|:\boldalpha\in\cX^d, \Tilde{Z}_i(\boldalpha)\ne 0\}$ and~$\alpha_{\min}\triangleq\min\{\alpha_i\}_{i=m+1}^d$, and observe that by Markov's inequality we have
\begin{align}\label{equation:PrziTile}
    \Pr(\Tilde{Z}_i\ne 0)&=\Pr(|\Tilde{Z}_i|\ge \alpha_i)\nonumber\\
    &=\Pr(\Tilde{Z}_i^2\ge \alpha_i^2)\le \frac{\bE[\Tilde{Z}_i^2]}{\alpha_i^2}\le \frac{\bE[\Tilde{Z}_i^2]}{\alpha_{\min}^2}.
\end{align}
Moreover, recall that
\begin{align*}
    d_{i-1}(X)&=X-\textstyle\sum_{ f\in\hat{\cF}_{i-1}}\bE[X f] f\mbox{, and}\\
    \Sigma_i&=\bE[d_{i-1}(X)d_{i-1}(X)^\intercal],
\end{align*}
and therefore
\begin{align*}
    \boldnu_i^\intercal\Sigma_i\boldnu_i&=\boldnu_i^\intercal\bE[d_{i-1}(X)d_{i-1}(X)^\intercal]\boldnu_i\\
    &= \boldnu_i^\intercal\bE[(X-\textstyle\sum_{ f\in\hat{\cF}_{i-1}}\bE[X f] f)\\
    &\phantom{= \boldnu_i^\intercal\bE[(}(X-\textstyle\sum_{ f\in\hat{\cF}_{i-1}}\bE[X f] f)^\intercal]\boldnu_i\\
    &= \bE[(\boldnu_i^\intercal X-\textstyle\sum_{ f\in\hat{\cF}_{i-1}}\bE[\boldnu_i^\intercal X f] f)\\
    &\phantom{= \bE[(}(X^\intercal\boldnu_i-\textstyle\sum_{ f\in\hat{\cF}_{i-1}}\bE[X^\intercal\boldnu_i f] f)]\\
    &= \bE[(Z_i-\textstyle\sum_{ f\in\hat{\cF}_{i-1}}\bE[Z_i f] f)\\
    &\phantom{= \bE[(}(Z_i-\textstyle\sum_{ f\in\hat{\cF}_{i-1}}\bE[Z_i f] f)]\\
    &\phantom{=}\overset{\eqref{equation:ziTilde}}{=}\bE[\Tilde{Z}_i^2].
\end{align*}
Consequently, it follows that
\begin{align*}
    \eqref{equation:PrziTile}=\frac{\boldnu_i^\intercal\Sigma_i\boldnu_i}{\alpha_{\min}^2}\overset{\text{Lemma}~\ref{lemma:alldirections}}{\le}\frac{\epsilon^2}{\alpha_{\min}^2}.
\end{align*}
Now, by using the grouping rule \cite[Ex.~2.27]{cover1999elements} we have
\begin{align*}
    H(\Tilde{Z}_i)\le h_b(\epsilon^2/\alpha_{\min}^2)+\frac{\epsilon^2}{\alpha_{\min}^2}\log|\cX|,
\end{align*}
where~$h_b$ is the binary entropy function, defined as $h_b(p)=-p\log(p)-(1-p)\log(1-p)$. Finally, using the bound~$h_b(p)\le2\sqrt{p(1-p)}\le2\sqrt{p}$, it follows that
\begin{align*}
    H(X\vert Z)&\le \textstyle\sum_{i=m+1}^d H(\Tilde{Z}_i)\\
    &\le(d-m)(h_b(\epsilon^2/\alpha_{\min}^2)+(\epsilon^2/\alpha_{\min}^2)\log|\cX|)\\
    &\le(d-m)(2\epsilon/\alpha_{\min}+(\epsilon/\alpha_{\min})\log|\cX|)\\
    &=dO(\epsilon).\qedhere
\end{align*}
\end{proof}

\section{Gram-Schmidt Component Analysis (GCA)}\label{section:GCA}
In this section we present \textit{Gram-Schmidt Component Analysis} (GCA) in Algorithm~\ref{algorithm:GCA}, which demonstrates how  orthogonalization techniques similar to those used in GFR (Section~\ref{section:GFR}) can eliminate redundancy among the principal components. 
Specifically, it is shown that if the principal components (i.e., the projections of the data on its principal directions) satisfy some nonlinear equations, and if these equations are well approximated by the function family~$\cF$ in a sense that will be made formal shortly, then GCA identifies that redundancy, i.e., it eliminates the dependent components among the principal ones.

Recall that~$X=(X_1,\ldots,X_d)$ is the random variable from which the data is sampled, and let~$Z=(Z_1,\ldots,Z_d)$ be the principal components of~$X$ in decreasing order of variance. 
That is,~$Z_i=X^\intercal\boldrho_i$ where~$\boldrho_i$ is the~$i$'th largest principal direction of~$X$, and hence~$\var(Z_1)\ge\var(Z_2)\ge\ldots\ge \var(Z_d)$\footnote{The $\boldrho_1,\boldrho_2,\ldots$ notations in this section, which denote the principal directions (output of PCA), are not to be confused with~$\boldnu_1,\boldnu_2,\ldots$ in Section~\ref{section:GFR}, which denote the outputs of~GFR.}.
We comment that GCA can be applied using any orthonormal basis (other than the principal directions), and yet we focus on the principal components for their prominent role in data analysis. Applying GCA for the special case of the standard basis results in a (slightly simplified) feature selection algorithm, and the full details are given in Section~\ref{section:GFA}.
We now turn to formalize the notion of redundancy.
\begin{definition}\label{definition:epsilon_redundancy}
    For a function~$R:\bR^d\to\bR$ and~$\epsilon>0$, we say that~$X$ \textit{contains $\epsilon$-redundancy~$R$} among its principal components if~$\bE[R(Z)^2]\le \epsilon$.
\end{definition}

Intuitively, data which contains~$\epsilon$-redundancy~$R$ lies close to the set of zeros of~$R(\boldz)$ in some portion of the space.
Roughly speaking, each $\epsilon$-redundancy implies that some variable~$z_j$ which participates in~$R$ is \textit{redundant}, as it can be ``isolated'' from~$R(\boldz)=0$ in the form~$z_j=r(\boldz^{-j})$, where~$\boldz^{-j}\triangleq (z_1,\ldots,z_{j-1},z_{j+1},\ldots,z_d)$, for some function~$r$. 
This separation of~$z_j$ requires some additional conditions that will be discussed shortly.

The purpose of GCA is to eliminate such redundancies, i.e., for each~$\epsilon$-redundancy~$R$ we wish to skip the component which is made redundant by it, and output to the user only the non-redundant components. 
GCA can be seen as an additional redundancy removal algorithm that can be applied on top of PCA for further dimension reduction; PCA identifies if the data lies in a subspace, and GCA identifies if further (nonlinear) dependencies exist among the principal components. 
In Section~\ref{section:GFA}, we apply similar principals to feature \textit{selection}.

\begin{example}
    Suppose the data has a~$0$-redundancy~$R(z_1,z_2,z_3)=z_3-z_1z_2$, meaning that~$R(Z)=Z_3-Z_1Z_2=0$. It is clear that extracting~$Z_1$ and~$Z_2$ suffices in order to determine~$Z_3$, and hence extracting~$Z_1,Z_2$ and skipping~$Z_3$ can be understood as ``eliminating'' the redundancy~$R$.
\end{example}

Gram-Schmidt Component Analysis (GCA) is given in Algorithm~\ref{algorithm:GCA}. 
In iteration~$j$, it begins by identifying the principal components~~$\{\boldrho_i\}_{i\in\cE_j}$ in which the reduced data distribution~$d_j(X)$ (see Algorithm~\ref{algorithm:orthogonalize}) has less than~$\epsilon$ variance, and terminates if~$\cE_j=[d]$. 
The set~$\cE_j$ contains the indices of principal components that were either selected in previous iterations of GCA, or can be described as functions from~$\cF(\boldz)$ of the selected principal components. 
In  Line~\ref{line:GCA_new_data_distribution} GCA subtracts the components in~$\cE_j$ from~$X$ to define~$\bar{X}_j$, and the most variant component of~$\bar{X}_j$ is selected (the selected components are gathered in a set~$\cL_j$, which is later returned as output).
Finally, a call to Orthogonalize (Algorithm~\ref{algorithm:orthogonalize}) is made, in order to construct an orthogonal basis~$\hat{\cF}(Z_{\cL_j})$ to the function family~$\Span(\cF(Z_{\cL_j}))$, and in order to obtain the covariance matrix~$\Sigma_{j+1}$ of $d_{j}(X)$.
The corresponding equivalent algebraic implementation, denoted \textit{Algebraic-GCA}, is provided in Algorithm~\ref{algorithm:GCA_algebraic} in Appendix~\ref{section:algebraic-GFR-GCA}.

\begin{algorithm}[!h] 
\caption{Gram-Schmidt Component Analysis (GCA)} \label{algorithm:GCA}
\begin{algorithmic}[1]
\STATE {\bfseries Input:} 
A function family~$\cF(\boldz)$ and a threshold~$\epsilon>0$.
\STATE {\bfseries Output:} {A set~$\cL\subseteq[d]$ of indices of extracted principal directions.}
\STATE {\bfseries Initialize:} $\Sigma_1=\bE[XX^\intercal]$ and $\cL_0=\varnothing$.
\FOR{$j\gets1$ {\bfseries to} $d$}
\STATE Let $\cE_j=\{ i\vert \boldrho_i^\intercal \Sigma_j\boldrho_i <\epsilon \}$.\label{line:GCA_E_j}
\IF{$\cE_j=[d]$}
\RETURN{}\hspace{-1mm}$\cL_{j-1}$.
\ENDIF
\STATE Define~$\bar{X}_j = X - \sum_{i\in\cE_j}X^\intercal\boldrho_i\cdot\boldrho_i$.\label{line:GCA_new_data_distribution}
\STATE Let $\ell_j=\arg\max_{i\in [d]\setminus\cE_j}\boldrho_i^\intercal \bE[\bar{X}_j\bar{X}_j^\intercal] \boldrho_i$ (breaking \\ ties arbitrarily), and $\cL_j=\cL_{j-1}\cup\{\ell_j\}$.\label{line:GCA_most_variant_component_selection}
\STATE Define $Z_{\ell_j}=\boldrho_{\ell_j}^\intercal X$.
\STATE $\Sigma_{j+1},\hat{\cF}(Z_{\cL_j})=\text{Orthogonalize}(\hat{\cF}(Z_{\cL_{j-1}}),\cF(\boldz_{[j]})\!\setminus\cF(\boldz_{[j-1]}),\{Z_{\ell_i}\}_{i=1}^j)$. 
\ENDFOR
\RETURN \hspace{-1mm}$\cL_d$.
\end{algorithmic}
\end{algorithm}

We now turn to formally describe and explain the ability of GCA to eliminate redundancy in the data. For some~$\epsilon>0$, suppose~$X$ contains the $\epsilon$-redundancies~$R^{(1)},\ldots,R^{(t)}$ among its principal components (i.e.,~$\bE[R^{(1)}(Z)^2]\le \epsilon$, $\ldots$ ,$\bE[R^{(t)}(Z)^2]\le \epsilon$), and for every~$i\in[t]$ let~$Z_{j_i}$ be the least variant component among those participating in~$R^{(i)}$. We consider the components~$\{Z_{j_i}\}_{i=1}^t$ as \textit{redundant}, and show that they can be eliminated by GCA if the function family~$\cF(\boldz)$ is in some sense sufficiently descriptive. We make the following simplifying assumptions.
\begin{assumption}\label{assumtopn:distinctANDindependent}
    The integers~$j_1,\ldots,j_t$ are distinct, and there is no ``inter-dependence'' among~$Z_{j_1},\ldots,Z_{j_t}$. That is, for every~$i$ the variable~$z_{j_i}$ does not appear in~$R^{(j)}(\boldz)$ for every~$j\ne i$. A justification for this assumption using the well-known implicit-function theorem is given in the sequel.
\end{assumption}
We also assume without loss of generality that~$j_1<\ldots<j_t$, and denote the redundant index-set by~$\cR^\epsilon\triangleq\{j_1,\ldots,j_t\}$ and the non-redundant one by~$\cN^\epsilon\triangleq [d]\setminus\cR^\epsilon$. Further, we denote~$\cN_i^\epsilon=[j_i-1]\setminus\{j_\ell\}_{\ell=1}^{i-1}$ for all~$i\in[t]$, i.e.,~$\cN_i^\epsilon$ is the index-set of non-redundant components up to~$j_i-1$.
Recall that for a set~$\cS=\{\sigma_1,\ldots,\sigma_{|\cS|}\}\subseteq[d]$, with~$\sigma_1<\ldots<\sigma_{|\cS|}$, we denote by~$\cF(Z_\cS)$ the result of taking~$\cF(z_1,\ldots,z_{|\cS|})$ and substituting~$z_i\leftarrow Z_{\sigma_i}$ for every~$f\in \cF(z_1,\ldots,z_{|\cS|})$ and every~$i\in[|\cS|]$. 
Also recall that $\hat{\cF}(Z_\cS)$ is the result of applying the Gram-Schmidt process over~$\cF(Z_\cS)$, and let $\proj_{\cF(Z_\cS)}Z_j$ be the projection of~$Z_j$ on $\Span\cF(Z_\cS)$, given by $\sum_{f\in \hat{\cF}(Z_\cS)}\bE[Z_j f]f$.

Using these notations, we proceed to define what it means for a function family~$\cF(\boldz)$ to be sufficiently descriptive in order for GCA to eliminate said redundancies. For a given~$\epsilon>0$ we say that the $\epsilon$-redundancy~$R^{(i)}$ is $\epsilon$-\textit{captured} by~$\cF(\boldz)$ if~$\bE[(Z_{j_i}-\proj_{\cF(Z_{\cN_i^\epsilon})}Z_{j_i})^2]\le \epsilon$. 
In simple terms, $R^{(i)}$ is captured by~$\cF(\boldz)$ if the least variant component of~$R^{(i)}$ can be ``well approximated'' by the closest function to~$Z_{j_i}$ in~$\cF(\boldz)$ (this function take the remaining components in~$R^{(i)}$ as inputs). 
Both the closeness and the approximation are measured by the expected~$\ell_2$-deviation.

\begin{theorem}\label{theorem:epsCapture}
    Under Assumption~\ref{assumtopn:distinctANDindependent}, let~$R^{(1)},\ldots,R^{(t)}$ be the~$\epsilon$-redundancies of~$X$. If all~$R^{(i)}$'s are $\epsilon$-captured by~$\cF(\boldz)$, then Gram-Schmidt Component Analysis (Algorithm~\ref{algorithm:GCA}) outputs the non-redundant directions~$\{\boldrho_i\}_{i\in\cN^\epsilon}$.    
\end{theorem}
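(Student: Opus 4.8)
The plan is to show that GCA selects precisely the non-redundant indices and then halts. Write $\cN^\epsilon=\{n_1<\cdots<n_s\}$ with $s=d-t$, and prove by induction on $j$ that at the start of iteration $j$ one has $\cL_{j-1}=\{n_1,\ldots,n_{j-1}\}$ and that the selection rule picks $\ell_j=n_j$. Before the induction I would first translate the algorithm's quantities into projection-residual form. Since $\Sigma_j=\bE[d_{j-1}(X)d_{j-1}(X)^\intercal]$ with $d_{j-1}(X)=X-\sum_{f\in\hat{\cF}(Z_{\cL_{j-1}})}\bE[Xf]f$, a direct computation gives $\boldrho_i^\intercal\Sigma_j\boldrho_i=\bE[(Z_i-\proj_{\cF(Z_{\cL_{j-1}})}Z_i)^2]$, i.e.\ the squared residual of $Z_i$ after projecting onto the functions of the already-selected components. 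Likewise, for $i\notin\cE_j$ one has $\boldrho_i^\intercal\bar X_j=Z_i$ by orthonormality of the $\boldrho$'s, so $\boldrho_i^\intercal\bE[\bar X_j\bar X_j^\intercal]\boldrho_i=\var(Z_i)$; since the variances are nonincreasing in the index (and assuming, as is standard for principal directions, that they are generically distinct), the maximizer in Line~\ref{line:GCA_most_variant_component_selection} is simply $\ell_j=\min([d]\setminus\cE_j)$.

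With these reformulations the inductive step splits into two claims about $\cE_j=\{i:\bE[(Z_i-\proj_{\cF(Z_{\cL_{j-1}})}Z_i)^2]<\epsilon\}$. First, every redundant index $j_i<n_j$ lies in $\cE_j$: since $j_1<\cdots<j_t$, the set $\cN_i^\epsilon=[j_i-1]\setminus\{j_\ell\}_{\ell=1}^{i-1}$ is exactly the set of non-redundant indices below $j_i$, and because $j_i<n_j$ these are contained in $\{n_1,\ldots,n_{j-1}\}=\cL_{j-1}$. Monotonicity of the projection residual under enlarging the index set, together with the hypothesis that $R^{(i)}$ is $\epsilon$-captured, then yields $\bE[(Z_{j_i}-\proj_{\cF(Z_{\cL_{j-1}})}Z_{j_i})^2]\le\bE[(Z_{j_i}-\proj_{\cF(Z_{\cN_i^\epsilon})}Z_{j_i})^2]\le\epsilon$, placing $j_i$ in $\cE_j$. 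Since all non-redundant indices below $n_j$ are already selected (hence have zero residual and also lie in $\cE_j$), every index smaller than $n_j$ is eliminated.

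Second, and this is the crux, I must show that $n_j\notin\cE_j$, so that the minimum of $[d]\setminus\cE_j$ is exactly $n_j$ and a non-redundant direction is never skipped. Suppose instead that $\bE[(Z_{n_j}-\proj_{\cF(Z_{\cL_{j-1}})}Z_{n_j})^2]<\epsilon$, and let $p=\proj_{\cF(Z_{\cL_{j-1}})}Z_{n_j}$, a function of $Z_{n_1},\ldots,Z_{n_{j-1}}$ expressible through $\cF$. Then $R(\boldz)=z_{n_j}-p(z_{n_1},\ldots,z_{n_{j-1}})$ satisfies $\bE[R(Z)^2]<\epsilon$, so it is an $\epsilon$-redundancy whose least-variant participating component is $Z_{n_j}$ (all other indices $n_1,\ldots,n_{j-1}$ are smaller, hence of larger variance). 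Because $R^{(1)},\ldots,R^{(t)}$ are assumed to be \emph{all} the $\epsilon$-redundancies of $X$, this forces $n_j\in\cR^\epsilon$, contradicting $n_j\in\cN^\epsilon$. Hence $\ell_j=n_j$, completing the induction; after $s$ selections every redundant index also satisfies $\cN_i^\epsilon\subseteq\cN^\epsilon=\cL_s$, so the same residual bound gives $\cE_{s+1}=[d]$ and GCA returns $\cL_s=\cN^\epsilon$.

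The main obstacle is precisely this second claim: it relies on the completeness of the list $R^{(1)},\ldots,R^{(t)}$, so that exhibiting any new redundancy with $Z_{n_j}$ least-variant is a genuine contradiction rather than merely an unlisted relation. Assumption~\ref{assumtopn:distinctANDindependent} is what keeps the bookkeeping consistent: distinctness of the $j_i$ guarantees that the $t$ skipped components are distinct and that $\cR^\epsilon$ and $\cN^\epsilon$ genuinely partition $[d]$, while the absence of inter-dependence ensures that the variables remaining after isolating $z_{j_i}$ in $R^{(i)}$ are non-redundant and of index below $j_i$, so that $\cN_i^\epsilon$ is indeed the correct projection set appearing in the $\epsilon$-capture hypothesis. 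One minor point to reconcile is the strict-versus-weak inequality between the threshold defining $\cE_j$ (which uses $<\epsilon$) and the $\le\epsilon$ in the definitions of $\epsilon$-redundancy and $\epsilon$-capture; I would handle this by aligning the conventions, the boundary case contributing nothing generically.
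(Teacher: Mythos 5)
Your proposal is correct and follows essentially the same route as the paper's proof: the same reformulation $\boldrho_i^\intercal\Sigma_j\boldrho_i=\bE[(Z_i-\proj_{\cF(Z_{\cL_{j-1}})}Z_i)^2]$, an induction establishing that $\cL$ accumulates exactly the non-redundant indices in increasing order, elimination of the redundant indices via the $\epsilon$-capture hypothesis, and the key contradiction that a small residual for a non-redundant $Z_{n_j}$ would manufacture a new $\epsilon$-redundancy with $Z_{n_j}$ least variant, violating Assumption~\ref{assumtopn:distinctANDindependent} and the completeness of the list $R^{(1)},\ldots,R^{(t)}$ (the paper organizes this instead as an induction over the redundancies with a contradiction argument that pins down $\cL_{j-1}=\cN_i^\epsilon$ exactly, thereby avoiding any appeal to residual monotonicity). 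The one step you should justify explicitly is the inequality $\bE[(Z_{j_i}-\proj_{\cF(Z_{\cL_{j-1}})}Z_{j_i})^2]\le\bE[(Z_{j_i}-\proj_{\cF(Z_{\cN_i^\epsilon})}Z_{j_i})^2]$: under the paper's convention $\cF(Z_\cS)$ substitutes variables by sorted position, so $\cS\subseteq\cT$ does \emph{not} in general imply $\Span\cF(Z_\cS)\subseteq\Span\cF(Z_\cT)$, but the containment does hold here because $\cN_i^\epsilon$ is an initial segment of $\cL_{j-1}$ under the common ordering (which coincides with the selection order), so the substituted function families are genuinely nested.
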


\begin{proof}
We ought to show that~$\cL=\cN^\epsilon$, i.e., that every element in~$\cN^\epsilon$ is chosen to~$\cL_j$ at some step~$j$ of GCA, and no element in~$\cR^\epsilon$ is chosen to any such~$\cL_j$.
We begin by observing that since~$X=\sum_{i=1}^d\boldrho_iZ_i$, it follows from the definition of Orthogonalize (Algorithm~\ref{algorithm:orthogonalize}) that for every~$j\in[d]$,
\begin{align*}
    d_{j}(X)&=\textstyle X-\sum_{f\in\hat{\cF}(Z_{\cL_{j-1}})}\bE[Xf]f\nonumber\\
    &=\textstyle\sum_{i=1}^d\boldrho_i ( Z_i-\sum_{f\in \hat{\cF}(Z_{\cL_{j-1}})}\bE[Z_if]f)\nonumber\\
    &=\textstyle \sum_{i=1}^d\boldrho_i ( Z_i-\proj_{\cF(Z_{\cL_{j-1}})}Z_i).
\end{align*}
Since~$\boldrho_i$ represent the principal directions, they are orthogonal. Therefore, for every~$i, j \in [d]$, we have:
\begin{align}\label{equation:rhoSigmarho}
    \boldrho_i^\intercal\Sigma_j\boldrho_i&=\boldrho_i^\intercal\bE[d_j(X) d_j(X)^\intercal]\boldrho_i\nonumber\\
    &=\bE[(Z_i-\proj_{\cF(Z_{\cL_{j-1}})}Z_i)^2].
\end{align}
Therefore, in each iteration~$j$ the set~$\cE_j$ contains all indices~$i$ such that~$i\in\cL_{j-1}$ (since for those indices we have $\bE[(Z_i-\proj_{\cF(Z_{\cL_{j-1}})}Z_i)^2]=0$). In addition,~$\cE_j$ also contains all indices~$i'$ such that~$i'\notin\cL_{j-1}$ but~$\bE[(Z_{i'}-\proj_{\cF(Z_{\cL_{j-1}})}Z_{i'})^2]\le\epsilon$, i.e.,~$Z_{i'}$ can be well approximated by its own projection on~$\Span\cF(Z_{\cL_{j-1}})$. 

To prove that~$\cL=\cN^\epsilon$, we prove by induction over~$i\in[t]$ that every~$j\in[j_i]$ is assigned appropriately by GCA, meaning, $j$ is assigned to~$\cL$ if and only if it is in~$\cN^\epsilon$. 

\noindent{\textbf{Base case:}} We begin by showing that the first redundant component~$Z_{j_1}$ is not chosen as non-redundant, i.e., that~$j_1\notin \cL$. Suppose for contradiction that~$j_1\in \cL$, and let~$j\in[d]$ be the smallest integer such that~$j_1\in \cL_j$, i.e., $j$ is the step in which~$Z_{j_1}$ is selected. 
        It follows from Line~\ref{line:GCA_most_variant_component_selection} that~$Z_{j_1}$ is the most variant component of~$\bar{X}_j$. 
        In addition, it also follows from Line~\ref{line:GCA_new_data_distribution}, that~$j_1\notin \cE_j$, since otherwise~$Z_{j_1}$ would not have been a component of~$\bar{X}_j$. 
        Therefore, since $j_1\notin \cE_j$, it follows from Line~\ref{line:GCA_E_j} and from~\eqref{equation:rhoSigmarho} that
        \begin{align}\label{equation:j1Variant}
            \bE[(Z_{j_1}-\proj_{\cF(Z_{\cL_{j-1}})}Z_{j_1})^2]\ge \epsilon.
        \end{align}
        Since~$Z_{j_1}$ is the most variant component of~$\bar{X}_j$, it follows that neither of the more variant~$Z_1,\ldots,Z_{j_1-1}$ is a component of~$\bar{X}_j$, meaning, every~$i\in\{1,\ldots,j_1-1\}$ was either selected at some earlier step of the algorithm as the most variant component in  Line~\ref{line:GCA_most_variant_component_selection}, or not selected in  Line~\ref{line:GCA_most_variant_component_selection} but rather subtracted from~$\bar{X}_{j'}$ at some earlier step~$j'$ of the algorithm in Line~\ref{line:GCA_new_data_distribution} (recall that a component~$i'$ is subtracted in Line~\ref{line:GCA_new_data_distribution} of step~$j'$ if belongs to~$\cE_{j'}$, or equivalently by~\eqref{equation:rhoSigmarho}, if $\bE[(Z_{i'}-\proj_{\cF(Z_{\cL_{j'-1}})}Z_{i'})^2]< \epsilon$). 
        
        If the latter is true, i.e, if there exists~$i'<j_1$ and~$j'<j$ such that $\bE[(Z_{i'}-\proj_{\cF(Z_{\cL_{j'-1}})}Z_{i'})^2]<\epsilon$, if follows that there exists a function~$g\in \Span\cF(Z_{\cL_{j'-1}})$ such that~$\bE[(Z_{i'}-g)^2]<\epsilon$. 
        That is, the function~$Z_{i'}-g$ is an~$\epsilon$-redundancy, whose least variant component~$Z_{i'}$ is more variant than~$Z_{j_1}$, a contradiction to Assumption~\ref{assumtopn:distinctANDindependent}. 
        Therefore, the latter must hold, i.e., every~$i\in\{1,\ldots,j_1-1\}$ was selected at some earlier step of the algorithm as the most variant component in Line~\ref{line:GCA_most_variant_component_selection}, hence~$\cL_{j-1}=\{1,\ldots,j_1-1\}$, or in other words~$\cL_{j-1}=\cN_1^\epsilon$. 
        However, $\cL_{j-1}=\cN_1^\epsilon$ implies by~\eqref{equation:j1Variant} that $\bE[(Z_{j_1}-\proj_{\cF(Z_{\cN_1^\epsilon})}Z_{j_1})^2]\ge\epsilon$, contradicting the fact that~$R^{(1)}$ is captured by~$\cF(\boldz)$. This concludes showing that~$j_1\notin\cL$.
        
        It remains to show that~$\{1,\ldots,j_1-1\}\subseteq \cL$, which is similar. Assuming otherwise, by following steps similar to above we obtain that some~$i'\in\{1,\ldots,j_1-1\}$ is the least variant component of some~$\epsilon$-redundancy, contradicting Assumption~\ref{assumtopn:distinctANDindependent}.

        \noindent\textbf{Induction hypothesis:} Assume that~$j_1,\ldots,j_{i-1}\notin\cL$, and that~$[j_{i-1}]\setminus \{j_1,\ldots,j_{i-1}\}\subseteq\cL$ for some~$i\ge 2$.

        \noindent\textbf{Induction step:} 
        We ought to show that~$j_i\notin \cL$ and that~$\{j_{i-1}+1,\ldots,j_{i}-1\}\subseteq\cL$.
        We begin by showing that~$j_i\notin\cL$. 
        Assume for contradiction that~$j_i\in\cL$, and let~$j\in[d]$ be the smallest integer such that~$j_i\in\cL_j$. 
        It follows from  Line~\ref{line:GCA_most_variant_component_selection} that~$Z_{j_i}$ is the most variant component of~$\bar{X}_j$, and similarly to the base case, that~$j_i\notin\cE_j$ and 
        \begin{align}\label{equation:jiVariant}
            \bE[(Z_{j_i}-\proj_{\cF(Z_{\cL_{j-1}})}Z_{j_i})^2]\ge \epsilon.
        \end{align}
        Since~$Z_{j_i}$ is the most variant component of~$\bar{X}_j$, it follows that neither of the more variant~$Z_{j_{i-1}+1},\ldots,Z_{j_i-1}$ is a component of~$\bar{X}_j$, meaning, every~$i'\in\{j_{i-1}+1,\ldots,j_i-1\}$ was either selected to~$\cL$ at some earlier step~$j'$ in  Line~\ref{line:GCA_most_variant_component_selection},        
        or it was not selected in  Line~\ref{line:GCA_most_variant_component_selection} of any step, but it belongs to~$\cE_{j'}$ for some~$j'<j$. 
        If it is the latter, i.e., if there exists~$i'\in\{j_{i-1}+1,\ldots,j_i-1\}$ and~$j'<j$ such that~$i'\in\cE_{j'}$ but~$i'$ was not selected in  Line~\ref{line:GCA_most_variant_component_selection}, then~$\bE[(Z_{i'}-\proj_{\cF(\cL_{j'-1})}Z_{i'})^2]<\epsilon$, which implies the existence of a function~$g\in\Span\cF(\cL_{j'-1})$ such that~$Z_{i'}-g$ is an~$\epsilon$-redundancy.
        Therefore, we have that~$Z_{i'}-g$ is an~$\epsilon$-redundancy whose least variant component~$Z_{i'}$ is neither~$Z_{j_{i-1}}$ nor~$Z_{j_i}$, a contradiction to Assumption~\ref{assumtopn:distinctANDindependent}. 
        Therefore, the former case must hold, i.e., we must have that each~$i'\in\{j_{i-1}+1,\ldots,j_i-1\}$ was selected in  Line~\ref{line:GCA_most_variant_component_selection}, which implies that~$\cL_{j-1}=\cN_i^\epsilon$ by the induction hypothesis.
        However, $\cL_{j-1}=\cN_i^\epsilon$ implies by~\eqref{equation:jiVariant} that $\bE[(Z_{j_i}-\proj_{\cF(Z_{\cN_i^\epsilon})}Z_{j_i})^2]\ge\epsilon$, contradicting the fact that~$R^{(i)}$ is captured by~$\cF(\boldz)$. Showing that each of~$j_{i-1}+1,\ldots,j_i-1$ are selected to~$\cL$ is once again similar.
\end{proof}

We continue by drawing several conclusions from Theorem~\ref{theorem:epsCapture}. 
In particular, we would like to obtain a better understanding of the term ``$R^{(i)}$ is $\epsilon$-captured by~$\cF(\boldz)$'', and we do so using the well-known \textit{implicit function theorem}. 
To gain some intuition, observe that since we extract components in decreasing order of variance, eliminating redundancy~$R$ depends on the representation of the least variant variable in~$R$ as a function of all other variables which participate in~$R$. 
For example, if~$R(\boldz)$ depends only on~$z_1,z_2,z_3$, we would like to extract~$z_3$ from~$R(z_1,z_2,z_3)=0$ in the form of~$z_3=r(z_1,z_2)$ for some function~$r$ such that~$r(Z_1,Z_2)$ can be well-approximated by a function in~$\Span\cF(Z_1,Z_2)$. 
Then, since~$Z_1,Z_2$ are extracted prior to~$Z_3$ due their higher variance, the orthogonalization and subtraction processes enable GCA to identify that~$Z_3$ can be determined by~$Z_1,Z_2$, and hence skip it.

Recall that the implicit function theorem~\cite{krantz2002implicit,lee2012smooth} provides sufficient conditions for extracting a variable~$z_j$ from a function~$R(\boldz)$ inside its zero set~$\cZ_R\triangleq\{\boldalpha\vert R(\boldalpha)=0\}$.
This theorem states that if a continuously differentiable\footnote{That is, $R$ is differentiable, and its gradient is a continuous function. Further, for~$\boldalpha\in\bR^d$, the notation $\frac{\partial}{\partial z_j}R(\boldz)\vert_{\boldz=\boldalpha}$ signifies the result of deriving~$R(\boldz)$ according to~$z_j$, and then substituting~$z_i\leftarrow \alpha_i$ for all~$i\in[d]$.} function~$R(\boldz)$ satisfies~$\frac{\partial}{\partial z_j}R(\boldz)\vert_{\boldz=\boldalpha}\ne 0$ for some~$j\in[d]$ and some~$\boldalpha\in \cZ_R$, then there exists a continuous function~$r:\bR^{d-1}\to\bR$ such that $$R(\beta_1,\ldots,\beta_{j-1},r(\boldbeta^{- j}),\beta_{j+1},\ldots,\beta_d)=0$$
for every~$\boldbeta$ in an environment of~$\boldalpha$ in~$\cZ_R$, where~$\boldbeta^{-j}\triangleq (\beta_1,\ldots,\beta_{j-1},\beta_{j+1},\ldots,\beta_d)$. 
That is, for any point~$\boldalpha\in\cZ_R$ in which the derivative of~$R(\boldz)$ according to the variable~$z_j$ is nonzero, there exists an environment~$\cV\subseteq\cZ_R$ which contains~$\boldalpha$ and a function~$g$ such that~$\beta_j=g(\boldbeta^{-j})$ for all~$\boldbeta\in\cV$. We will require a continuous extension of the implicit function theorem, which applies a similar principal to entire connected regions of~$\cZ_R$ in which the derivative according to~$z_j$ is nonzero.

\begin{theorem}[Extended Implicit Function Theorem] \cite[Lemma~2.7]{ge2013stable}, \cite{ge2002adaptive,sandberg1981global} \label{theorem:EIFT}
    Let~$R:\bR^d\to\bR$ be a continuously differentiable function, and for~$j\in[d]$ let~$\cB_j\subseteq \cZ_R$ be the collection of all points~$\boldbeta\in\cZ_R$ such that~$\frac{\partial}{\partial z_j}R(\boldz)\vert_{\boldz=\boldbeta}\ne 0$. Further, let~$\cC$ be a connected component\footnote{A connected component is an equivalence class of a relation~$\sim$ over~$\cB_j$, where~$\boldbeta\sim\boldbeta'$ if there exists a continuous path~$q(t)$ such that~$q(0)=\boldbeta$, $q(1)=\boldbeta'$, and~$q(t)\in \cB_j$ for all~$t\in[0,1]$.} of~$\cB_j$. Then, there exists a continuous~$g:\bR^{d-1}\to\bR$ such that~$\gamma_j=g(\boldgamma^{-j})$ for all~$\boldgamma\in\cC$.
\end{theorem}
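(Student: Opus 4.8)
The plan is to construct the global solution $g$ by solving locally with the classical implicit function theorem and then gluing the local solutions along $\cC$; the entire difficulty is concentrated in showing that this gluing is consistent, which is \emph{equivalent} to showing that the projection $\pi:\boldgamma\mapsto\boldgamma^{-j}$ is injective on $\cC$. Indeed, a function $g$ with $\gamma_j=g(\boldgamma^{-j})$ for all $\boldgamma\in\cC$ can exist if and only if no two distinct points of $\cC$ share the same $\boldz^{-j}$-coordinates, so injectivity of $\pi|_\cC$ is not merely a tool but is the true content of the statement.

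First I would record the local picture. At every $\boldbeta\in\cC$ we have $\frac{\partial}{\partial z_j}R(\boldz)\vert_{\boldz=\boldbeta}\ne 0$, so the classical implicit function theorem yields a neighborhood of $\boldbeta^{-j}$ in $\bR^{d-1}$ and a continuous local solution $g_{\boldbeta}$ whose graph coincides with $\cZ_R$ near $\boldbeta$. Since these graphs are open pieces of $\cZ_R$ on which $\frac{\partial}{\partial z_j}R\ne 0$, the map $\pi|_\cC$ is an open local homeomorphism, and in particular $\pi(\cC)\subseteq\bR^{d-1}$ is open. The second basic observation is that $\frac{\partial}{\partial z_j}R$ is continuous and nonvanishing on $\cB_j$, hence has \emph{constant sign} on the connected set $\cC$; assume without loss of generality that it is positive. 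Geometrically this equips $\cC$ with a consistent co-orientation: near each of its points $R>0$ just above the local graph (larger $z_j$) and $R<0$ just below.

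The heart of the argument, and the step I expect to be the main obstacle, is to promote this local co-oriented graph structure to \emph{global} injectivity of $\pi|_\cC$. The naive hope that an open local homeomorphism into the simply connected space $\bR^{d-1}$ must be injective is false in general (it may ``wrap,'' as the complex exponential does), so the constant-sign hypothesis must enter essentially. I would argue by contradiction: given distinct $\boldgamma,\boldgamma'\in\cC$ with $\boldgamma^{-j}=\boldgamma'^{-j}=\boldw$ and $\gamma_j<\gamma_j'$, restrict $R$ to the fiber line through $\boldw$ parametrized by $z_j=t$. By the co-orientation, this restriction is positive just above $\gamma_j$ and negative just below $\gamma_j'$, so the intermediate value theorem produces a zero $t^\star\in(\gamma_j,\gamma_j')$ at which $\frac{\partial}{\partial z_j}R\le 0$; hence the point with $\boldz^{-j}=\boldw$ and $z_j=t^\star$ lies in $\cZ_R\setminus\cC$ and acts as a ``barrier'' separating the two sheets along the fiber. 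The genuinely delicate task is to convert this fiberwise separation into a contradiction with path-connectedness of $\cC$. I would do this by a monodromy/continuation argument: follow the local graphs along a path in $\cC$ joining $\boldgamma$ to $\boldgamma'$ and track the resulting continuous selection of a zero of $R$ over the image path in $\bR^{d-1}$; consistency of the co-orientation (the normal always points toward $\{R>0\}$) forces this selection to return to $\gamma_j$ rather than jump across the barrier to $\gamma_j'$, the desired contradiction. Equivalently, one shows that $\cC$, being a co-oriented separating hypersurface between $\{R>0\}$ and $\{R<0\}$, meets each fiber at most once within the component.

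Once injectivity of $\pi|_\cC$ is in hand the conclusion follows quickly: define $g$ on the open set $\pi(\cC)$ by letting $g(\boldw)$ be the unique $j$-th coordinate of the point of $\cC$ lying over $\boldw$. This $g$ is continuous because it agrees locally with the implicit-function solutions $g_{\boldbeta}$, and by construction $\gamma_j=g(\boldgamma^{-j})$ for every $\boldgamma\in\cC$. It then remains to extend $g$ to a continuous function on all of $\bR^{d-1}$, which is a routine topological matter (for instance via the Tietze--Dugundji extension theorem), with the values off $\pi(\cC)$ being unconstrained by the statement; this yields the required continuous $g:\bR^{d-1}\to\bR$.
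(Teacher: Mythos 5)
First, a point of comparison: the paper does not prove Theorem~\ref{theorem:EIFT} at all---it is imported verbatim from the control-theory literature (\cite[Lemma~2.7]{ge2013stable}, \cite{sandberg1981global})---so there is no in-paper argument to measure yours against. Judged on its own terms, your proposal correctly isolates the crux, namely injectivity of the projection $\pi:\boldgamma\mapsto\boldgamma^{-j}$ on $\cC$, but the monodromy/continuation argument you sketch for it does not close, and in fact cannot: under the hypotheses as stated the injectivity claim is false for $d\ge 3$. Take $R(z_1,z_2,z_3)=z_1\sin z_3-z_2\cos z_3$, whose zero set is $\{(s\cos t,\,s\sin t,\,t):s,t\in\bR\}$ with $\frac{\partial}{\partial z_3}R=z_1\cos z_3+z_2\sin z_3=s$ on that set. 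Then $\cC=\{s>0\}$ is a single (path-)connected component of $\cB_3$ on which the derivative is everywhere positive, so your co-orientation is perfectly consistent; yet the path $u\mapsto(\cos 2\pi u,\sin 2\pi u,2\pi u)$ lies entirely in $\cC$ and joins $(1,0,0)$ to $(1,0,2\pi)$, two points with the same $(z_1,z_2)$-projection. Your IVT ``barrier'' at $(1,0,\pi)$ does exist (it lies in the other component $\{s<0\}$), but the connecting path simply circles around it: the projected loop is non-contractible in $\pi(\cC)=\bR^2\setminus\{0\}$, and the continuous selection of zeros along it returns to $\gamma_3+2\pi$ rather than to $\gamma_3$. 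So constancy of the sign of $\frac{\partial}{\partial z_j}R$ on $\cC$ does not rule out winding, which is exactly the step you flagged as delicate and then asserted. The cited sources avoid this by assuming strictly more than the paper's transcription---e.g., that the partial derivative is sign-definite (typically bounded away from zero) along every full fiber line $\{\boldw\}\times\bR$, or a properness condition---under which each fiber meets the zero set at most once and injectivity is immediate rather than being the hard part.

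A secondary flaw: even granting injectivity, the final extension step misapplies Tietze--Dugundji, which extends continuous functions from \emph{closed} subsets, whereas $\pi(\cC)$ is open and the function defined on it can blow up at the boundary (e.g., $R(z_1,z_2)=z_1z_2-1$ gives $\cC=\{(z_1,1/z_1):z_1>0\}$ and $g(z_1)=1/z_1$, which admits no continuous extension to all of $\bR$). This second issue is as much a defect of the theorem statement as transcribed in the paper as of your argument, but the conclusion ``continuous $g:\bR^{d-1}\to\bR$'' can only be meant as continuity on the relevant domain.
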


In simple terms, Theorem~\ref{theorem:EIFT} states that the representation of a variable~$z_j$ as a function of~$\boldz^{-j}$ near a point~$\boldbeta\in\cZ_R$ with~$\frac{\partial}{\partial z_j}R(\boldz)\vert_{\boldz=\boldbeta}\ne 0$ (as guaranteed by the implicit function theorem), can be continuously extended to the entire portion~$\cC$ of~$\cZ_R$ reachable from~$\boldbeta$ such that~$\frac{\partial}{\partial z_j}R(\boldz)\vert_{\boldz=\boldgamma}\ne 0$ for all~$\boldgamma\in\cC$ (see examples in Figure~\ref{fig:IFT_Example}).

\begin{figure*} 
\centering
\subfloat[\label{fig:IFT_Example_a}]{%
\includegraphics[width=0.33\linewidth]{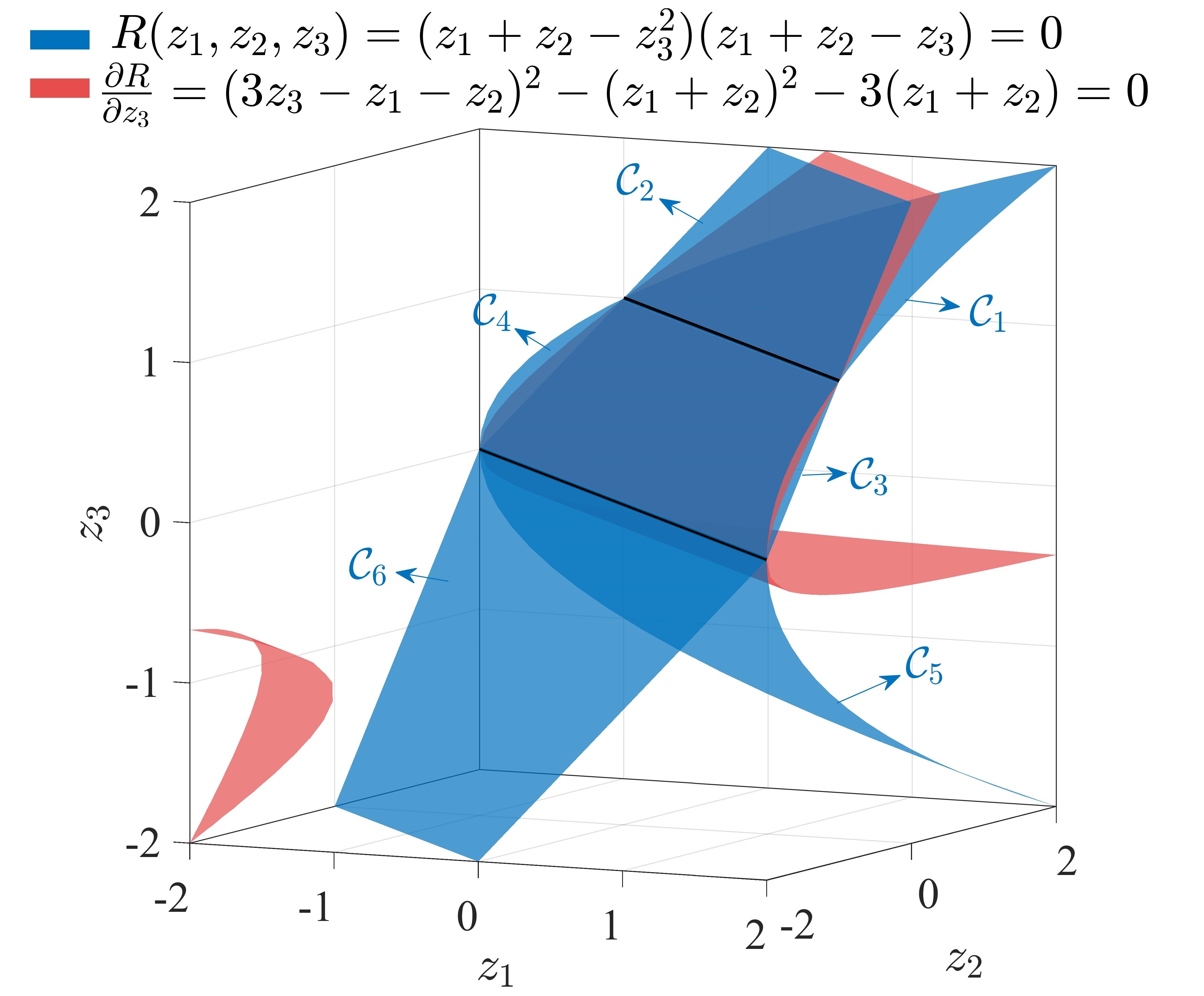}}
\hfill
\subfloat[\label{fig:IFT_Example_b}]{%
\includegraphics[width=0.33\linewidth]{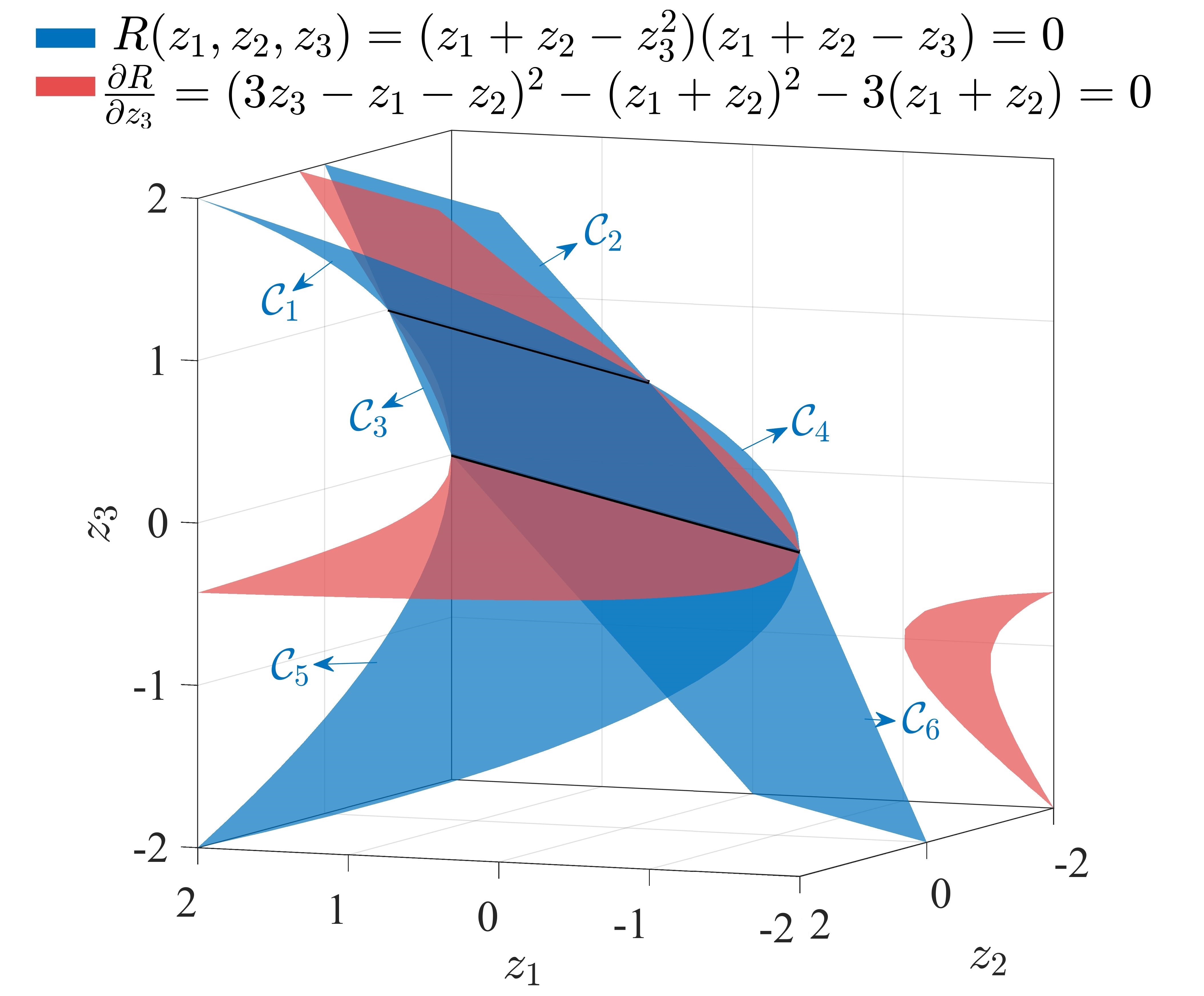}}
\hfill
\subfloat[\label{fig:IFT_Example_c}]{%
\includegraphics[width=0.33\linewidth]{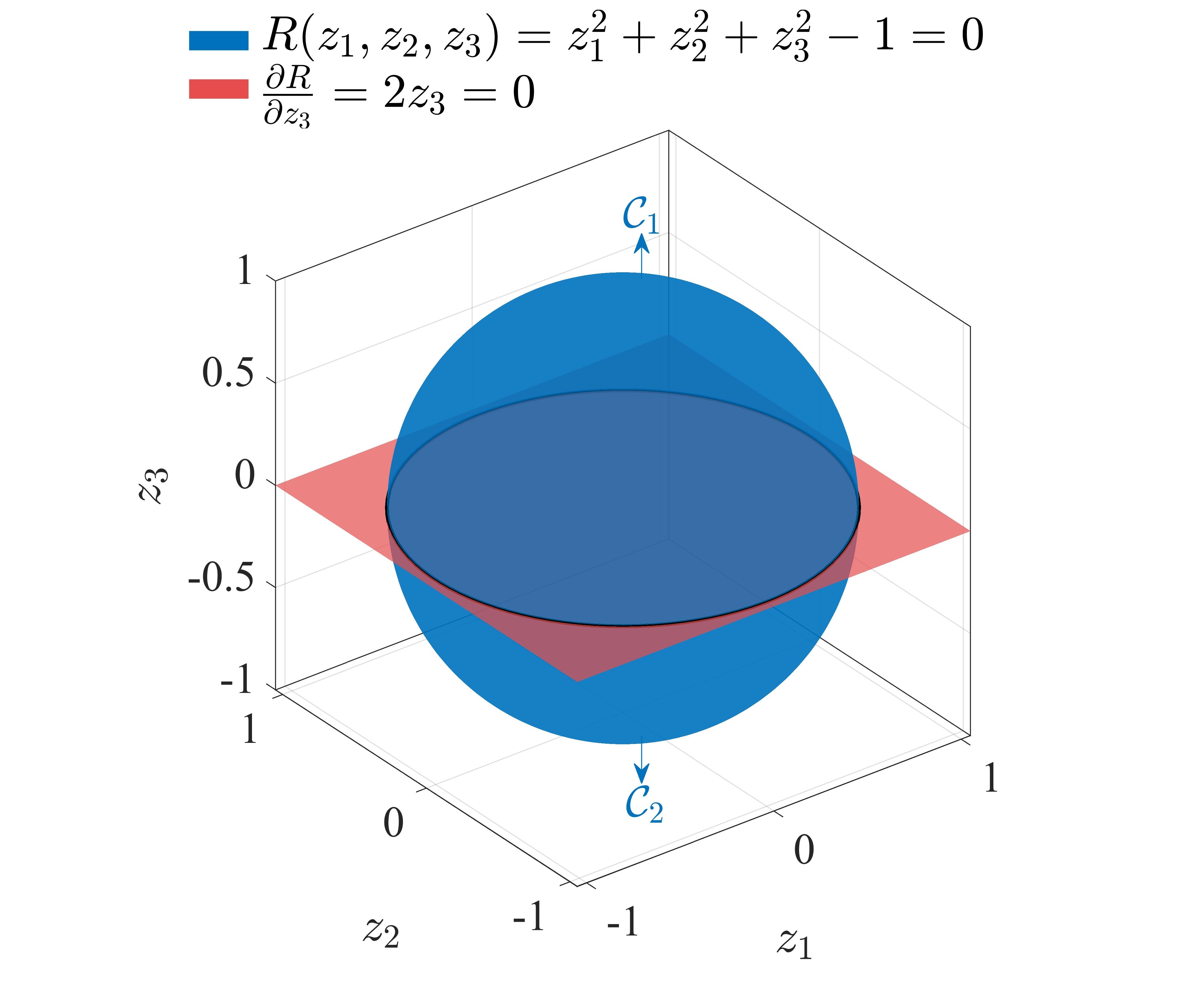}}
\caption{(a) and (b): Two views of the zero set~$\cZ_R$ of~$R(\boldz)=(z_1+z_2-z_3^2)(z_1+z_2-z_3)$ (in blue), the zero set of its derivative by~$z_3$ (in red), and their intersection (in black). The respective subset~$\cB_3\subseteq\cZ_R$ in which Theorem~\ref{theorem:EIFT} applies is partitioned to six connected components~$\cC_1,\ldots,\cC_6$, in each of which~$z_3$ can be isolated as a function of~$z_1,z_2$. It can be seen that in~$\cC_1$ and~$\cC_4$ we have~$z_3=\sqrt{z_1+z_2}$, in~$\cC_2,\cC_3$, and~$\cC_6$ we have~$z_3=z_1+z_2$, and in~$\cC_5$ we have~$z_3=-\sqrt{z_1+z_2}$. (c) A similar figure for~$R(\boldz)=z_1^2+z_2^2+z_3^2-1$, where in~$\cC_1$ we have~$z_3=\sqrt{z_1^2+z_2^2}$ and in~$\cC_2$ we have~$z_3=-\sqrt{z_1^2+z_2^2}$.}
\label{fig:IFT_Example}
\end{figure*}

Going back to drawing conclusions from Theorem~\ref{theorem:epsCapture}, we begin with the case~$\epsilon=0$, in which the data satisfies~$R^{(1)}(Z)=\ldots=R^{(t)}(Z)=0$. 
We would like to identify precise conditions by which Theorem~\ref{theorem:epsCapture} holds, i.e., to properly interpret the conditions that the redundancies~$R^{(i)}$ are $\epsilon$-captured by~$F(\boldz)$. 
To this end, consider the case in which~$\frac{\partial}{\partial z_{j_i}}R^{(i)}(\boldz)\vert_{\boldz=\boldalpha}\ne 0$ for every~$\boldalpha\in\operatorname{Supp}(X)$ and every~$i\in[t]$.
In this case Theorem~\ref{theorem:EIFT} implies that there exists a function~$r^{(i)}:\bR^{d-1}\to\bR$ which enables~$z_{j_i}$ to be ``isolated'' from~$R^{(i)}(\boldz)=0$ in the form~$z_{j_i}=r^{(i)}(\boldz^{-j_i})$. More precisely, since~$R^{(i)}$ need not depend on all~$z_i$'s, let~$\cZ_i\subseteq[d]$ be the set of variables which appear in~$R^{(i)}$ (notice that~$j_i\in\cZ_i$). Then, the assumption~$\frac{\partial}{\partial z_{j_i}}R^{(i)}(\boldz_{\cZ_i})\vert_{\boldz=\boldalpha}\ne 0$ for every~$\boldalpha\in\operatorname{Supp}(X)$ implies the ability to write~$z_{j_i}=r^{(i)}(\boldz_{\cZ_i}^{-j_i})$. The existence of these~$r^{(i)}$'s enables two conclusions:
\begin{enumerate}
    \item[\textbf{C1}.] By composing functions, we can justify the ``inter-dependence'' in Assumption~\ref{assumtopn:distinctANDindependent}. For example, if~$r^{(2)}$ depends on~$z_{j_1}$, one can substitute~$z_{j_1}$ for its functional representation~$r^{(1)}(\boldz_{\cZ_1}^{-j_1})$, and eliminate the (direct) dependence of $r^{(2)}$ on~$z_{j_1}$. By applying this argument repeatedly for all ``inter-dependencies'' among the~$z_{j_i}$'s, one can write~$z_{j_i}=r^{(i)}(\boldz_{\cN_i})$ for all~$i$, where~$\cN_i=[j_i-1]\setminus\{j_\ell\}_{\ell=1}^{i-1}$.
    \item[\textbf{C2}.] Following~$\textbf{C1}$, we obtain a precise interpretation of the term ``the~$0$-redundancy $R^{(i)}$ is $0$-captured by~$\cF(\boldz)$.'' Clearly, in the case where $z_{j_i}=r^{(i)}(\boldz_{\cN_i})$ we have that~$\bE[(Z_{j_i}-\proj_{\cF(Z_{\cN_i})}Z_{j_i})^2]=0$ if and only if~$r^{(i)}(Z_{\cN_i})\in\Span\cF(Z_{\cN_i})$.
\end{enumerate}

Conclusions~\textbf{C1} and~\textbf{C2} readily imply the first corollary of Theorem~\ref{theorem:epsCapture}.

\begin{corollary}\label{corollary:zeroRedZeroApprox}
    Let~$R^{(1)},\ldots,R^{(t)}$ be the~$0$-redundancies of~$X$. If (1) the derivative of each~$R^{(i)}$ according to its least variant variable~$z_{j_i}$ is nonzero on the data support; and (2) the resulting functions~$r^{(i)}(Z_{\cN_i})$ are in~$\Span\cF(Z_{\cN_i})$ for all~$i$, then GCA outputs~$\cN^0=[d]\setminus\{j_1,\ldots,j_t\}$. In words, if the least variant variable in each~$0$-redundancy can be isolated as a function of the remaining variables, and if that function belongs to~$\Span\cF$, then GCA eliminates \emph{all} redundancies.
\end{corollary}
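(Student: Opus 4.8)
The plan is to derive the corollary directly from Theorem~\ref{theorem:epsCapture} by specializing to~$\epsilon=0$ and verifying its two hypotheses—Assumption~\ref{assumtopn:distinctANDindependent} and the $0$-capturing condition—using the machinery already assembled in Conclusions \textbf{C1} and \textbf{C2}. Since~$\bE[R^{(i)}(Z)^2]\le 0$ forces~$R^{(i)}(Z)=0$ almost surely, each redundancy is an exact functional identity on the data support, which is precisely the regime in which the Extended Implicit Function Theorem applies.

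First I would invoke hypothesis~(1): the derivative~$\frac{\partial}{\partial z_{j_i}}R^{(i)}(\boldz)\vert_{\boldz=\boldalpha}\ne 0$ for all~$\boldalpha\in\operatorname{Supp}(X)$. By Theorem~\ref{theorem:EIFT}, this allows~$z_{j_i}$ to be isolated from~$R^{(i)}(\boldz)=0$ as~$z_{j_i}=r^{(i)}(\boldz_{\cZ_i}^{-j_i})$ for a continuous~$r^{(i)}$. Following Conclusion~\textbf{C1}, I would then eliminate any inter-dependence among the redundant variables by repeated substitution: whenever some~$r^{(i)}$ depends on a lower-indexed redundant variable~$z_{j_\ell}$, I replace it by its own representation~$r^{(\ell)}$. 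This yields~$z_{j_i}=r^{(i)}(\boldz_{\cN_i})$ with~$\cN_i=[j_i-1]\setminus\{j_\ell\}_{\ell=1}^{i-1}=\cN_i^0$, and simultaneously establishes that Assumption~\ref{assumtopn:distinctANDindependent} holds (the~$j_i$ are distinct and no~$z_{j_i}$ appears in another redundancy after this reduction).

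Next I would use hypothesis~(2) together with Conclusion~\textbf{C2}. Passing the identity~$z_{j_i}=r^{(i)}(\boldz_{\cN_i})$ to random variables gives~$Z_{j_i}=r^{(i)}(Z_{\cN_i})$, and by hypothesis~$r^{(i)}(Z_{\cN_i})\in\Span\cF(Z_{\cN_i})$. Hence the projection of~$Z_{j_i}$ onto~$\Span\cF(Z_{\cN_i^0})$ recovers~$Z_{j_i}$ exactly, so~$\bE[(Z_{j_i}-\proj_{\cF(Z_{\cN_i^0})}Z_{j_i})^2]=0$. This is exactly the statement that each~$R^{(i)}$ is $0$-captured by~$\cF(\boldz)$. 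With both hypotheses of Theorem~\ref{theorem:epsCapture} verified for~$\epsilon=0$, the theorem yields that GCA outputs the directions~$\{\boldrho_i\}_{i\in\cN^0}$, equivalently the index set~$\cN^0=[d]\setminus\{j_1,\ldots,j_t\}$, as claimed.

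The main obstacle is the subtlety hidden in the isolation step: Theorem~\ref{theorem:EIFT} only guarantees a single-valued representation~$z_{j_i}=r^{(i)}(\cdot)$ within one connected component of the set~$\cB_{j_i}$ on which the derivative is nonzero. To be fully rigorous one must argue that~$\operatorname{Supp}(X)$ meets only a single such component (or that the representations agree on overlaps), so that a globally defined~$r^{(i)}$ exists on the support; otherwise the two-branch behavior illustrated in Figure~\ref{fig:IFT_Example} (e.g.\ $z_3=\pm\sqrt{z_1+z_2}$) would obstruct a single function. Since this point is already folded into the established Conclusions~\textbf{C1} and~\textbf{C2}, the remaining argument is a routine assembly of those conclusions with Theorem~\ref{theorem:epsCapture}.
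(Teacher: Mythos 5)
Your proposal is correct and follows essentially the same route as the paper, which simply states that Conclusions~\textbf{C1} and~\textbf{C2} (i.e., isolating $z_{j_i}$ via the implicit function theorem, substituting away inter-dependencies to justify Assumption~\ref{assumtopn:distinctANDindependent}, and translating hypothesis~(2) into the $0$-capturing condition) readily imply the corollary from Theorem~\ref{theorem:epsCapture} with $\epsilon=0$. Your closing caveat about connected components is a fair observation, but it is exactly the issue the paper defers to Corollary~\ref{corollary:some0someeps}, so it does not affect the correctness of your argument here.
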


Next, to extend Corollary~\ref{corollary:zeroRedZeroApprox}, we consider cases in which the~$0$-redundancies satisfy the assumptions of the implicit function theorem only up to measure zero. As shown next, under additional assumptions GCA can guarantee successful elimination of the redundancies in this case as well.

Suppose as before that the data contains the $0$-redundancies~$R^{(1)},\ldots,R^{(t)}$, each with least variant variable~$z_{j_i}$. 
However, we relax the assumption of Corollary~\ref{corollary:zeroRedZeroApprox}, and only assume that the respective derivatives vanish on a subset of measure zero. 
That is, for every~$i\in[t]$ we assume that~$\Pr(Z\in \cB_i)=1$, where $\cB_i=\{\boldbeta\in\bR^d:\frac{\partial}{\partial z_{j_i}}R(\boldz)\vert_{\boldz=\boldbeta}\ne 0\}$.
This assumption is not overly restrictive since in many cases the zero set of the derivative will have Lebesgue measure zero, hence measure zero for many continuous distributions. For example, consider Figure~\ref{fig:IFT_Example_c} under the normalized Gaussian distribution. It is readily verified that the probability to sample a point on the black line is zero.

Clearly, the region~$\cB_i$ need not be connected (see Figure~\ref{fig:IFT_Example}), and hence we let~$\cC_{i,1},\ldots,\cC_{i,\ell_i}$ be its connected components. 
Using Theorem~\ref{theorem:EIFT} on each one of~$\cC_{i,1},\ldots,\cC_{i,\ell_i}$ separately guarantees that there exists functions~$g_{i,1},\ldots,g_{i,\ell_i}$ which enable to extract~$z_{j_i}$ as a function of~$\boldz^{-j_i}$ in each of~$\cC_{i,1},\ldots,\cC_{i,\ell_i}$, respectively. 
That is, we have that~$\alpha_{j_i}=g_{i,j}(\boldalpha^{-j_i})$ for every~$i\in[\ell]$ and every~$\boldalpha\in\cC_{i,j}$. 
The success of GCA in this case will follow if~$\cF(\boldz)$ can approximate each function~$g_{i,j}$ \textit{locally}, meaning, in the connected component~$\cC_{i,j}$ in which it applies. To this end, for~$i\in[t]$ we say that the~$0$-redundancy~$R^{(i)}$ is \textit{$\epsilon$-locally captured} by~$\cF(\boldz)$ if~$\bE[(Z_{j_i}-\proj_{\cF(Z_{\cN_i^\epsilon})}Z_{j_i})^2|Z\in \cC_{i,j}]\le \epsilon$ for every~$j\in[\ell_i]$.

\begin{corollary}\label{corollary:some0someeps}
    For~$\epsilon>0$, suppose that~$X$ contains redundancies~$R^{(1)},\ldots,R^{(t)}$ of either of two types:
    \begin{enumerate}
        \item[\textbf{A}.] $\epsilon$-redundancies; or
        \item[\textbf{B}.] $0$-redundancies whose derivatives by the least variant variable vanishes only on a set of measure zero.
    \end{enumerate}
     Further, suppose that every redundancy of type~\textbf{A} is $\epsilon$-captured by~$\cF(\boldz)$, and every redundancy of type~$\textbf{B}$ is $\epsilon$-locally captured by~$\cF(\boldz)$. Then, GCA outputs\footnote{Note that a $0$-redundancy is also an~$\epsilon$-redundancy for every~$\epsilon>0$, and hence one can denote~$\cR^\epsilon,\cN^\epsilon,\cN_i^\epsilon$ in accordance with Assumption~\ref{assumtopn:distinctANDindependent}.}~$\cN^\epsilon=[d]\setminus\{j_1,\ldots,j_t\}$.
\end{corollary}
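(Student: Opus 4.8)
The plan is to reduce Corollary~\ref{corollary:some0someeps} directly to Theorem~\ref{theorem:epsCapture}. Since a $0$-redundancy is in particular an $\epsilon$-redundancy (as noted in the accompanying footnote), all $t$ redundancies qualify as the $\epsilon$-redundancies of $X$ to which the theorem refers; and the theorem already delivers the output $\cN^\epsilon$ whenever \emph{every} such redundancy is $\epsilon$-captured in the global sense $\bE[(Z_{j_i}-\proj_{\cF(Z_{\cN_i^\epsilon})}Z_{j_i})^2]\le\epsilon$. The two redundancy types differ only in how their capture hypothesis is phrased, so I would show that each phrasing yields this one global inequality for every $i\in[t]$; once that holds, Theorem~\ref{theorem:epsCapture} applies verbatim. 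For a type~\textbf{A} redundancy there is nothing to prove, since being $\epsilon$-captured is exactly the global inequality, so all the real work concerns type~\textbf{B}.

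For a type~\textbf{B} redundancy $R^{(i)}$ I only have the weaker \emph{local} bounds $\bE[(Z_{j_i}-\proj_{\cF(Z_{\cN_i^\epsilon})}Z_{j_i})^2\mid Z\in\cC_{i,j}]\le\epsilon$ on each connected component $\cC_{i,j}$ of $\cB_i$. The crucial point I would stress is that $\proj_{\cF(Z_{\cN_i^\epsilon})}Z_{j_i}$ is a single fixed object---its Gram-Schmidt coefficients are computed against the full distribution---so the local and global quantities are linked purely by the law of total expectation. Setting $W\triangleq(Z_{j_i}-\proj_{\cF(Z_{\cN_i^\epsilon})}Z_{j_i})^2$ and summing over the components,
\begin{align*}
\bE[W]=\sum_{j=1}^{\ell_i}\Pr(Z\in\cC_{i,j})\,\bE[W\mid Z\in\cC_{i,j}]\le\epsilon\sum_{j=1}^{\ell_i}\Pr(Z\in\cC_{i,j})=\epsilon,
\end{align*}
where the last equality uses $\Pr(Z\in\cB_i)=1$ together with $\cB_i=\bigsqcup_j\cC_{i,j}$. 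Thus every type~\textbf{B} redundancy is $\epsilon$-captured globally as well, and invoking Theorem~\ref{theorem:epsCapture} yields the claimed output $\cN^\epsilon=[d]\setminus\{j_1,\ldots,j_t\}$.

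The step I expect to require the most care is justifying this tower-property identity, i.e., that $\cC_{i,1},\ldots,\cC_{i,\ell_i}$ form a measurable partition of a probability-one event. The components are disjoint because they are the equivalence classes of the path-connectedness relation on $\cB_i$ from Theorem~\ref{theorem:EIFT}, and they cover $\cB_i$, whose complement is null by the type~\textbf{B} hypothesis $\Pr(Z\in\cB_i)=1$; together these give $\sum_j\Pr(Z\in\cC_{i,j})=1$, which is exactly what collapses the convex combination to $\epsilon$. I note that the extended implicit function theorem enters only upstream---to define the $\cC_{i,j}$ and to furnish the local representations $z_{j_i}=g_{i,j}(\boldz^{-j_i})$ that make $\epsilon$-local capture a meaningful hypothesis---and is not re-invoked inside the expectation computation itself. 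Finally, no assumptions beyond Assumption~\ref{assumtopn:distinctANDindependent} are needed, since the ordering $j_1<\cdots<j_t$ and the index sets $\cN_i^\epsilon$ carry over unchanged from the setup of Theorem~\ref{theorem:epsCapture}.
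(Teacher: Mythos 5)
Your proposal is correct and follows essentially the same route as the paper's proof: both reduce type~\textbf{B} redundancies to globally $\epsilon$-captured ones via the law of total expectation over the connected components $\cC_{i,a}$, using $\Pr\bigl(Z\notin\cup_a\cC_{i,a}\bigr)=0$ to collapse the convex combination to at most $\epsilon$, and then invoke Theorem~\ref{theorem:epsCapture}. Your additional remarks on the measurable partition and the fixed projection object are sound elaborations of the same argument.
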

\begin{proof}
    Observe that for every redundancy~$R^{(i)}$ of type \textbf{B} we have
    \begin{align*}
        \bE&[(Z_{j_i}-\proj_{\cF(Z_{\cN_i^\epsilon})}Z_{j_i})^2]\\
        &=\textstyle\sum_{a=1}^{\ell_i}\Pr(Z\in\cC_{i,a})\bE[(Z_{j_i}-\proj_{\cF(Z_{\cN_i^\epsilon})}Z_{j_i})^2|Z\in\cC_{i,a}]\\
        &\le\textstyle \epsilon\sum_{a=1}^{\ell_1}\Pr(Z\in\cC_{i,a})\le \epsilon,
    \end{align*}
    where the latter inequality follows since~$\Pr(Z\notin\cup_{a=1}^{\ell_1}\cC_{i,a})=0$ by the assumption on the respective derivative of redundancies of type~\textbf{B}. Therefore, it follows that each of~$R^{(1)},\ldots,R^{(t)}$ is an~$\epsilon$-redundancy which is captured by~$\cF(\boldz)$, and hence the lemma follows from Theorem~\ref{theorem:epsCapture}.
\end{proof}

In particular, for a redundancy~$R^{(i)}$ of type \textbf{B} in Corollary~\ref{corollary:some0someeps} we have that~$Z_{j_i}=r^{(i)}_j(Z_{\cN_i^\epsilon})$ for some function~$r^{(i)}_j$ in the connected component~$\cC_{i,j}$, and therefore the term ``$R^{(i)}$ is~$\epsilon$-locally captured by~$\cF(\boldz)$'' can be interpreted as
\begin{align*}
    \bE&[(Z_{j_i}-\proj_{\cF(Z_{\cN_i^\epsilon})}Z_{j_i})^2|Z\in \cC_{i,j}]\nonumber\\
    &=\bE[(r^{(i)}_j(Z_{\cN_i^\epsilon})-\proj_{\cF(Z_{\cN_i^\epsilon})}r^{(i)}_j(Z_{\cN_i^\epsilon}))^2|Z\in \cC_{i,j}]\le \epsilon,
\end{align*}
meaning,~$\cF(Z_{\cN_i^\epsilon})$ can~$\epsilon$-approximate the function~$r^{(i)}_j$ in the connected component~$\cC_{i,j}$.

In summary, Theorem~\ref{theorem:epsCapture} implies that whenever Assumption~\ref{assumtopn:distinctANDindependent} is satisfied, GCA successfully removes all $\epsilon$-redundancies from the data for any~$\cF(\boldz)$ which is sufficiently descriptive, i.e., it can approximate the least variant component in each redundancy. 
To interpret what this approximation means, in Corollary~\ref{corollary:zeroRedZeroApprox} it was shown that for~$\epsilon=0$, the implicit function theorem can further imply that the descriptiveness of~$\cF$ reduces to $\Span(\cF)$ containing the ``isolation'' of each least variant variable from its respective redundancy; this required nonzero derivative by that variable in all of the data support. The latter requirement was alleviated in Corollary~\ref{corollary:some0someeps} in the case where nonzero derivatives existed almost everywhere, where strict containment of the isolation in~$\Span(\cF)$ is relaxed to~$\epsilon$-approximation.

For $\epsilon$-redundancies with~$\epsilon>0$, interpreting Theorem~\ref{theorem:epsCapture} is more challenging. The main reason is that the existence of such redundancy~$R$ does not necessarily imply that~$R(Z)=0$, and hence Theorem~\ref{theorem:EIFT} cannot be directly applied. Therefore, the term ``capturing'' does not translate to containing some function in~$\Span(\cF)$ (Corollary~\ref{corollary:zeroRedZeroApprox}), or containing some local $\epsilon$-approximation (Corollary~\ref{corollary:some0someeps}).

However, an interesting extension of Corollary~\ref{corollary:zeroRedZeroApprox} arises in the case~$\epsilon>0$, which applies for the case with~$0$-redundancies and~$\cF(\boldz)$ that is \textit{not} sufficiently descriptive in order to capture, but can only~$\epsilon$-approximate, those redundancies. The next corollary follows immediately from Theorem~\ref{theorem:epsCapture} since a~$0$-redundancy is also an~$\epsilon$ redundancy for every~$\epsilon>0$.

\begin{corollary}\label{corollary:deg5polys}
    Suppose there exists~$0$-redundancies~$R^{(1)},\ldots,R^{(t)}$ in~$X$ such that $Z_{j_i}=r^{(i)}(Z_{\cN_i^0})$ for every~$i\in[t]$. Further, suppose that for~$\cF(\boldz)$ and some~$\epsilon>0$ we have
    \begin{align}\label{equation:lowdegapproximation}
        \bE[(r^{(i)}(Z_{\cN_i^0})-\proj_{\cF(Z_{\cN_{i}^0})}r^{(i)}(Z_{\cN_i^0}))^2]\le \epsilon.
    \end{align}
    Then GCA can eliminate all the redundancies.
\end{corollary}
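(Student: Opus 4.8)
The plan is to reduce the corollary directly to Theorem~\ref{theorem:epsCapture} by checking that its two hypotheses are met: that the $R^{(i)}$ are $\epsilon$-redundancies, and that each is $\epsilon$-captured by~$\cF(\boldz)$. First I would observe that each $R^{(i)}$, being a $0$-redundancy, satisfies $\bE[R^{(i)}(Z)^2]=0\le\epsilon$, so by Definition~\ref{definition:epsilon_redundancy} it is also an $\epsilon$-redundancy. The least-variant participating variable $z_{j_i}$ of each $R^{(i)}$ is determined solely by which variables appear in $R^{(i)}$ together with the fixed variance ordering of the principal components; hence the indices $j_1,\ldots,j_t$, and therefore the index-sets, are unchanged when a $0$-redundancy is reinterpreted as an $\epsilon$-redundancy. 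In particular $\cN_i^\epsilon=\cN_i^0$ for every~$i$, which is exactly the identification that lets me match the objects in Theorem~\ref{theorem:epsCapture} with those in the hypothesis~\eqref{equation:lowdegapproximation}.

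Next I would verify the $\epsilon$-capturing condition. By hypothesis $Z_{j_i}=r^{(i)}(Z_{\cN_i^0})$ as elements of $\cL^2(P_X)$, so for every $f$ we have $\bE[Z_{j_i} f]=\bE[r^{(i)}(Z_{\cN_i^0}) f]$, and the orthogonal projection onto the fixed subspace $\Span\cF(Z_{\cN_i^\epsilon})=\Span\cF(Z_{\cN_i^0})$ is insensitive to this substitution. Consequently the two random variables $Z_{j_i}-\proj_{\cF(Z_{\cN_i^\epsilon})}Z_{j_i}$ and $r^{(i)}(Z_{\cN_i^0})-\proj_{\cF(Z_{\cN_i^0})}r^{(i)}(Z_{\cN_i^0})$ coincide, so that
\begin{align*}
    \bE[(Z_{j_i}-\proj_{\cF(Z_{\cN_i^\epsilon})}Z_{j_i})^2]=\bE[(r^{(i)}(Z_{\cN_i^0})-\proj_{\cF(Z_{\cN_i^0})}r^{(i)}(Z_{\cN_i^0}))^2]\le\epsilon,
\end{align*}
where the final inequality is precisely~\eqref{equation:lowdegapproximation}. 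Thus every $R^{(i)}$ is $\epsilon$-captured by~$\cF(\boldz)$, and Theorem~\ref{theorem:epsCapture} applies verbatim to conclude that GCA outputs $\{\boldrho_i\}_{i\in\cN^\epsilon}$, i.e., it skips each $Z_{j_i}$ and thereby eliminates all $t$ redundancies.

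I do not expect a genuine obstacle here, since the corollary is essentially a specialization of Theorem~\ref{theorem:epsCapture}. The one point requiring care is the bookkeeping that Assumption~\ref{assumtopn:distinctANDindependent} is already encoded in the hypothesis: writing $Z_{j_i}=r^{(i)}(Z_{\cN_i^0})$ with $\cN_i^0=[j_i-1]\setminus\{j_\ell\}_{\ell=1}^{i-1}$ presupposes both the distinctness of the $j_i$ and the resolution of inter-dependencies exactly as in Conclusion~\textbf{C1}, so no separate verification of the assumption is needed. Everything beyond that identification is a direct substitution, which is why the statement follows immediately.
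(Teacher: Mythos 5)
Your proposal is correct and follows essentially the same route as the paper, which dispatches this corollary in one line by noting that a $0$-redundancy is an $\epsilon$-redundancy for every $\epsilon>0$ and then invoking Theorem~\ref{theorem:epsCapture}; your additional bookkeeping (the identification $\cN_i^\epsilon=\cN_i^0$ and the observation that $Z_{j_i}=r^{(i)}(Z_{\cN_i^0})$ makes~\eqref{equation:lowdegapproximation} literally the $\epsilon$-capturing condition) is exactly the implicit content of that one-line argument, spelled out. No gaps.
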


Corollary~\ref{corollary:deg5polys} states that if~$\cF(\boldz)$ is insufficiently descriptive in order to \textit{contain} the~$0$-redundancies in its span (as in Corollary~\ref{corollary:zeroRedZeroApprox}), but can reasonably \textit{approximate} them, then GCA still succeeds in eliminating the redundancy. 
One can extend Corollary~\ref{corollary:deg5polys} by employing the implicit function theorem globally (as in Corollary~\ref{corollary:zeroRedZeroApprox}), or locally (as in Corollary~\ref{corollary:some0someeps}), and the details are omitted. 

\begin{example}\label{example:deg5polys}
    Suppose that~$X$ satisfies some~$0$-redundancies such that the respective~$r^{(i)}$ from Corollary~\ref{corollary:deg5polys} are polynomials of degree~$5$. Further suppose that~$\cF(\boldz)$ are polynomials of degree~$3$ which can~$\epsilon$-approximate the~$r^{(i)}$'s in the sense of~\eqref{equation:lowdegapproximation}. Then GCA with~$\cF(\boldz)$ and~$\epsilon$ eliminates the redundancy.
\end{example}

A complexity tradeoff is evident from Corollary~\ref{corollary:deg5polys} and Example~\ref{example:deg5polys}. One can use simpler function family~$\cF(\boldz)$ with~$\epsilon>0$ instead of a more complex one with~$\epsilon=0$, and reduce the runtime of GCA. 
As long as the simpler~$\cF(\boldz)$ is able to $\epsilon$-approximate the redundancies, GCA will succeed. If the simpler~$\cF(\boldz)$ cannot~$\epsilon$-approximate some redundancy, it will be treated as non-redundant.

\begin{remark}
    In order to corroborate Corollaries~\ref{corollary:zeroRedZeroApprox}, \ref{corollary:some0someeps}, and~\ref{corollary:deg5polys}, GCA was implemented and tested on synthetic data sets successfully, and the results are reported in Section~\ref{section:experiments_GCA}.
\end{remark}
\section{Gram-Schmidt Functional Selection (GFS)}\label{section:GFS}
Throughout the feature selection algorithms (in this section and in the following one), we make use of the Hadamard product~$\otimes$, where for two vectors~$\boldx=(x_i)_{i=1}^n,\boldy=(y_i)_{i=1}^n$ we have~$\boldx\otimes\boldy=(x_iy_i)_{i=1}^n$; similarly~$\boldx^{\otimes 2}=(x_i^2)_{i=1}^n$.

In what follows, it is shown that a slight variant of GFR provides a feature selection algorithm with similar information-theoretic guarantees. 
Specifically, we replace the maximization step (which finds the eigenvector~$\boldnu_j$, Line~\ref{line:eigenvector_maximization} of Algorithm~\ref{algorithm:GFR}) by a discrete version, and similarly terminate if the resulting variance is smaller than the threshold~$\epsilon^2$. 
The output of the algorithm is a subset~$\cS_\epsilon\subseteq[d]$ of selected features. 
In feature selection algorithms, we only need to compute the alternative \textit{variance vectors}~$\boldsigma_j$ (a variance vector of a random variable is the diagonal of its covariance matrix) instead of covariance matrices~$\Sigma_j$ in their entirety.
Using~$\boldsigma_j$'s instead of~$\Sigma_j$'s allows some simplifications. For instance, if Algorithm~\ref{algorithm:orthogonalize} is used for feature \textit{selection} purposes, we make the following simple changes:
\begin{align*}
    &(\text{Line }\ref{line:covariance_matrix})~\text{Define }\Sigma_{j+1} = \bE[d_j(X) d_j^\intercal(X)] \\
    &\phantom{(\text{Line }\ref{line:covariance_matrix})}\rightarrow \text{Define: }\boldsigma_{j+1}=\bE[d_j(X)^{\otimes 2}].\\
    &(\text{Line }\ref{line:return})~\text{Return }\Sigma_{j+1}, \cT \rightarrow \text{Return }\boldsigma_{j+1}, \cT.
\end{align*}
In Algorithm~\ref{algorithm:GFS} below we propose \textit{Gram-Schmidt Functional Selection} (GFS), which at every step~$j$ uses the 
variance vector~$\boldsigma_j$ to select the most variant feature of~$d_{j-1}$. 
If the variance of this feature is less than some threshold~$\epsilon^2$, the algorithm stops, and otherwise it continues by specifying~$Z_j$ as the highest variance feature of $d_{j-1}$, which enables the next call to ``Orthogonalize.'' 
The equivalent algebraic counterpart of GFS, called \textit{Algebraic-GFS}, is presented in Algorithm~\ref{algorithm:GFS_algebraic} in Appendix~\ref{section:algebraic-GFS-GFA}.
The formal guarantees are very similar to those of GFR (Section~\ref{section:GFR}), and also require a discrete~$X$. 
Yet, experiments in Section~\ref{section:experiments} show significant gains in real-world setting in which the data is continuous.
The information-theoretic guarantee of GFS is as follows.

\begin{algorithm}[!h] 
\caption{Gram-Schmidt Functional Selection (GFS)}\label{algorithm:GFS}
\begin{algorithmic}[1]
\STATE {\bfseries Input:} A function family~$\cF(\boldz)$ and a threshold $\epsilon>0$.
\STATE {\bfseries Output:} Selected features~$s_1,\ldots,s_m$ ($m$ is a varying number that depends on~$\cF(\boldz)$, $\epsilon$, and~$X$)
\STATE {\bfseries Initialize:} $\boldsigma_1=\bE[X^{\otimes 2}]$.
\FOR{$j\gets1$ {\bfseries to} $d$}
\STATE Let~$s_j\!\triangleq\!\arg\max_{i\in[d]}\{\sigma_{j,i}\}_{i=1}^d$, where $\boldsigma_j\!=\!(\sigma_{j,i})_{i=1}^d$.
\IF{$\norm{\boldsigma_j}_\infty\le\epsilon^2$}\label{line:GFS_epsilon}
\STATE break. 
\ELSE
\STATE Define~$Z_j=X_{s_j}$.
\ENDIF
\STATE $\boldsigma_{j+1},\hat{\cF}(Z_1,\ldots,Z_j)=\text{Orthogonalize}(\hat{\cF}(Z_1,\ldots,Z_{j-1}),\cF(\boldz_{[j]})\setminus\cF(\boldz_{[j-1]}), \{Z_i\}_{i=1}^j)$
\ENDFOR
\end{algorithmic}
\end{algorithm}

\begin{theorem}\label{theorem:GFS}
Suppose~$X$ is over a discrete domain~$\cX^d$, for some~$\cX\subseteq\bR$, and let~$\cS_{\epsilon}=\{s_1,\ldots,s_m\}$ be the out of GFS whose input is a given~$\epsilon$ and~$\cF(\boldz)$. Then~$H(X|X_{\cS_\epsilon})\le dO(\epsilon)$.
\end{theorem}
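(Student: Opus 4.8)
The plan is to mirror the proof of Theorem~\ref{theorem:GFR} almost verbatim, replacing the eigenvectors~$\boldnu_j$ by the standard basis vectors~$\bolde_{s_j}$ selected by GFS (so that~$Z_j=X_{s_j}=\bolde_{s_j}^\intercal X$). The key simplification is that these selection vectors are coordinate vectors, hence automatically orthonormal, and the matrix whose rows are~$\bolde_{s_1},\ldots,\bolde_{s_d}$ is a permutation matrix of determinant~$\pm1$. Consequently the analog of Lemma~\ref{lemma:GFRaux}.$(b)$ (orthonormality of the chosen directions) is immediate and costs nothing, which removes the most computationally involved part of the GFR argument.

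First I would introduce an orthonormal completion analogous to Definition~\ref{definition:orthogonalCompletion}: run GFS with~$\epsilon=0$ to obtain a selection order~$s_1,\ldots,s_{m'}$, and if~$m'<d$ complete arbitrarily with the remaining coordinate indices~$s_{m'+1},\ldots,s_d$, artificially continuing the orthogonalization to produce~$\hat{\cF}(Z_1,\ldots,Z_d)$ with~$Z_j=X_{s_j}$ and associated variance vectors~$\{\boldsigma_j\}_{j=1}^d$. Next I would establish the diagonal version of Lemma~\ref{lemma:GFRaux}.$(a)$: for~$i<j$ and any coordinate~$k$,
\begin{align*}
    \sigma_{j,k}=\sigma_{i,k}-\sum_{f\in\hat{\cF}_{j-1}\setminus\hat{\cF}_{i-1}}\bE[fX_k]^2,
\end{align*}
which follows by reading off the~$(k,k)$ diagonal entry of the matrix identity in Lemma~\ref{lemma:GFRaux}.$(a)$; that identity depends only on the orthogonalization process and not on how the directions are chosen, so its proof transfers unchanged. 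Taking~$k=s_j$ and~$i=m+1$, together with the stopping rule~$\norm{\boldsigma_{m+1}}_\infty\le\epsilon^2$ (Line~\ref{line:GFS_epsilon}), yields the GFS analog of Lemma~\ref{lemma:alldirections}: for every~$j\in\{m+1,\ldots,d\}$ we have~$\sigma_{j,s_j}\le\epsilon^2$, since the subtracted sum of squares is nonnegative.

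With these in hand the entropy bound proceeds exactly as in the proof of Theorem~\ref{theorem:GFR}. Because~$X\mapsto(X_{s_1},\ldots,X_{s_d})$ is a coordinate permutation,
\begin{align*}
    H(X\vert X_{\cS_\epsilon})=H((X_{s_j})_{j=m+1}^d\vert(X_{s_j})_{j=1}^m)=\sum_{i=m+1}^d H(X_{s_i}\vert(X_{s_j})_{j=1}^{i-1}),
\end{align*}
by the chain rule. Writing~$\tilde{Z}_i=Z_i-\sum_{f\in\hat{\cF}_{i-1}}\bE[Z_i f]f$ with~$Z_i=X_{s_i}$, the same ``subtract a deterministic function of the earlier variables, then drop the conditioning'' step gives~$H(X_{s_i}\vert(X_{s_j})_{j=1}^{i-1})\le H(\tilde{Z}_i)$. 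Finally I would bound~$H(\tilde{Z}_i)$ by noting~$\bE[\tilde{Z}_i^2]=\bolde_{s_i}^\intercal\Sigma_i\bolde_{s_i}=\sigma_{i,s_i}\le\epsilon^2$, so that Markov's inequality controls~$\Pr(\tilde{Z}_i\ne0)$ by~$\epsilon^2/\alpha_{\min}^2$ (with~$\alpha_{\min}$ defined as in the proof of Theorem~\ref{theorem:GFR}), and the grouping rule followed by~$h_b(p)\le2\sqrt{p}$ gives~$H(X\vert X_{\cS_\epsilon})\le dO(\epsilon)$.

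The point requiring the most care---rather than a genuine obstacle---is verifying that the diagonal recursion and the stopping-rule propagation together yield~$\sigma_{j,s_j}\le\epsilon^2$ for \emph{all} later indices~$j\in\{m+1,\ldots,d\}$, and not merely at the stopping step~$m+1$. This is the feature-selection counterpart of Lemma~\ref{lemma:alldirections}, and it is the only place where the discrete (coordinatewise) maximization of GFS, as opposed to the continuous eigenvector maximization of GFR, must be separately accounted for. Everything else, including the definition of~$\alpha_{\min}$ and the two information-theoretic inequalities, carries over verbatim from the proof of Theorem~\ref{theorem:GFR}.
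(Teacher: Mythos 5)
Your proposal follows essentially the same route as the paper's proof in Appendix~\ref{appendix:GFS}: the diagonal recursion you state is exactly the paper's Lemma~\ref{lemma:GFSaux}.$(a)$ (written there with the Hadamard power~$\bE[fX]^{\otimes 2}$), the propagation of the stopping rule to all later indices is Lemma~\ref{lemma:alldirectionsFS}, and the entropy argument (chain rule, subtracting the deterministic projection, Markov, grouping rule, $h_b(p)\le 2\sqrt{p}$) is carried over verbatim from Theorem~\ref{theorem:GFR} just as you describe. Your direct derivation of~$\sigma_{j,s_j}\le\sigma_{m+1,s_j}\le\epsilon^2$ is in fact slightly cleaner than the paper's proof by contradiction of Lemma~\ref{lemma:alldirectionsFS}, though the content is identical.

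The one place where you are too quick is the claim that the analog of Lemma~\ref{lemma:GFRaux}.$(b)$ is ``immediate and costs nothing'' because coordinate vectors are automatically orthonormal. The vectors~$\bolde_{s_1},\ldots,\bolde_{s_m}$ are orthonormal, and the map~$X\mapsto(X_{s_1},\ldots,X_{s_d})$ is a permutation, \emph{only if} the indices~$s_1,\ldots,s_m$ are distinct --- and distinctness is not a tautology; it is the statement that GFS never re-selects a feature. The paper proves this as Lemma~\ref{lemma:GFSaux}.$(b)$: if~$i=s_k$ was selected at an earlier step~$k$, then~$Z_k=X_i$ and its orthonormalized version~$\hat{Z}_k$ lies in~$\hat{\cF}_{r-1}$ for every later step~$r$, so the residual~$X_i-\sum_{f\in\hat{\cF}_{r-1}}\bE[X_i f]f$ collapses to~$\tilde{Z}_k-\Vert\tilde{Z}_k\Vert\hat{Z}_k=0$, giving~$\sigma_{r,i}=0$; hence~$i$ cannot again be the maximizer while the algorithm is still running. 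This is a short computation, but without it your chain-rule decomposition~$H(X\vert X_{\cS_\epsilon})=\sum_{i=m+1}^d H(X_{s_i}\vert(X_{s_j})_{j=1}^{i-1})$ and the identification~$\{s_{m+1},\ldots,s_d\}=[d]\setminus\{s_1,\ldots,s_m\}$ are not justified. Adding this one observation makes your argument complete and equivalent to the paper's.
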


The proof of Theorem~\ref{theorem:GFS} is very similar to that of Theorem~\ref{theorem:GFR}, and is given in full in Appendix~\ref{appendix:GFS}.

\section{Gram-Schmidt Feature Analysis (GFA)}\label{section:GFA}
As mentioned in Section~\ref{section:GCA}, Gram-Schmidt Component Analysis (GCA) can serve as an additional layer of redundancy removal beyond PCA. 
While PCA identifies if the data lies on a subspace (or closely so, if one wishes to ignore principal directions whose respective eigenvalue is small), GCA can further identify if nonlinear redundancies exist within that subspace. 
However, in GCA it is merely assumed that~$X=\sum_{i=1}^d\boldrho_iZ_i$, and the fact that the~$\boldrho_i$'s are the principal directions does not play a role other than them being orthonormal. 
Therefore, one can substitute the principal directions~$\{\boldrho_i\}_{i=1}^d$ in GCA by any other orthonormal basis of~$\bR^d$, and obtain similar results.

One such orthonormal basis which is of interest, other than the principal directions, is the standard basis.
By substituting the principal directions in GCA by the standard basis, the guarantees of GCA pertaining to redundancy elimination along the principal directions are substituted by similar redundancy elimination guarantees along the \textit{features themselves}.
Arguably, the ability to eliminate redundancy along the features themselves is also of great importance.
Therefore, in this section we present \textit{Gram-Schmidt Feature Analysis} (GFA), an extension of GCA which serves a different purpose. 
GFA is given in Algorithm~\ref{algorithm:GFA}, which is obtained directly from GCA by substituting principal directions by the standard basis, and covariance matrices by covariance vectors. 
To maintain notational consistency with GCA we denote~$Z_i=X_i$.
The corresponding equivalent algebraic formulation, denoted \textit{Algebraic-GFA}, is presented in Algorithm~\ref{algorithm:GFA_algebraic} in Appendix~\ref{section:algebraic-GFS-GFA}.
\begin{algorithm}[h] 
\caption{Gram-Schmidt Feature Analysis (GFA)}
\label{algorithm:GFA}
\begin{algorithmic}[1]
\STATE {\bfseries Input:} A function family~$\cF(\boldz)$ and a threshold~$\epsilon>0$.
\STATE {\bfseries Output:} A set~$\cS\subseteq[d]$ of indices of selected features.
\STATE {\bfseries Initialize:} $\boldsigma_1=\bE[X^{\otimes 2}]$ and~$\cS_0=\varnothing$.
\FOR{$j\gets1$ {\bfseries to} $d$}
\STATE Let $\cE_j=\{ i\vert \sigma_{j,i} <\epsilon \}$.
\IF{$\cE_j=[d]$}
\RETURN{}\hspace{-1mm}$\cS_{j-1}$.
\ENDIF
\STATE Let $s_j\triangleq\arg\max_{i\in[d]\setminus\cE_j}\{\sigma_{1,i}\}$ \\ and $\cS_j=\cS_{j-1}\cup\{s_j\}$. 
\STATE $\boldsigma_{j+1},\hat{\cF}(Z_{\cS_j})=\text{Orthogonalize}(\hat{\cF}(Z_{\cS_{j-1}}),\cF(\boldz_{[j]})\setminus\cF(\boldz_{[j-1]}), \{Z_{s_i}\}_{i=1}^j)$\\
\ENDFOR
\RETURN{}\hspace{-1mm}$\cS_{d}$.
\end{algorithmic}
\end{algorithm}

The proof of the following theorem can be easily verified.

\begin{theorem}\label{theorem:GFA}
    Gram-Schmidt Feature Analysis (GFA, Algorithm~\ref{algorithm:GFA}) satisfies Theorem~\ref{theorem:epsCapture}, as well as Corollaries~\ref{corollary:zeroRedZeroApprox}, \ref{corollary:some0someeps}, and~\ref{corollary:deg5polys}, while replacing each principal direction~$\boldrho_i$ by the standard basis vector~$\bolde_i$ (and in particular replacing~$Z_i=X^\intercal\boldrho_i$ by~$Z_i=X_i$).
\end{theorem}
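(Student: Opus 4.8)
The plan is to show that the entire proof of Theorem~\ref{theorem:epsCapture}, together with its three corollaries, transfers \emph{verbatim} to GFA once each principal direction $\boldrho_i$ is replaced by the standard basis vector $\bolde_i$ and each principal component $Z_i=X^\intercal\boldrho_i$ by the feature $Z_i=X_i$. The crucial observation is that the proof of Theorem~\ref{theorem:epsCapture} invokes the principal directions only through two structural properties: (i) that $\{\boldrho_i\}_{i=1}^d$ forms an orthonormal basis, used to obtain the diagonal identity~\eqref{equation:rhoSigmarho}; and (ii) that $X$ admits the expansion $X=\sum_{i=1}^d\boldrho_i Z_i$. Neither the optimality of PCA nor the ordering $\var(Z_1)\ge\var(Z_2)\ge\cdots$ is used in any essential way beyond these two facts. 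Both properties hold trivially for the standard basis, since $\{\bolde_i\}_{i=1}^d$ is orthonormal and $X=\sum_{i=1}^d\bolde_i X_i$ under the identification $Z_i=X_i$. So the first step is simply to record that PCA entered the GCA argument only via (i) and (ii).

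The second step is to re-derive the central identity~\eqref{equation:rhoSigmarho} in the GFA setting. Because GFA maintains the variance vectors $\boldsigma_j=\bE[d_{j-1}(X)^{\otimes 2}]$ rather than the full matrices $\Sigma_j$, the relevant scalar is the diagonal entry $\sigma_{j,i}=\bolde_i^\intercal\Sigma_j\bolde_i$. Using $X=\sum_i\bolde_i X_i$ and the definition of $d_{j-1}(X)$ from Algorithm~\ref{algorithm:orthogonalize}, the very computation underlying~\eqref{equation:rhoSigmarho} yields
\begin{align*}
    \sigma_{j,i}=\bolde_i^\intercal\bE[d_{j-1}(X)d_{j-1}(X)^\intercal]\bolde_i=\bE[(Z_i-\proj_{\cF(Z_{\cL_{j-1}})}Z_i)^2],
\end{align*}
where $\cL_{j-1}$ denotes the selected-index set (called $\cS_{j-1}$ in GFA). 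Consequently $\cE_j=\{i:\sigma_{j,i}<\epsilon\}$ in GFA coincides exactly, under the substitution, with the set $\cE_j$ of GCA. This is the only place where the passage from covariance matrices to variance vectors has to be checked, and it goes through because the thresholding in both algorithms depends on $\Sigma_j$ only through its diagonal.

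The third step is to verify that the two algorithms make the same (analogous) selection at each iteration, after which the combinatorial induction of Theorem~\ref{theorem:epsCapture} applies unchanged. In GCA, Line~\ref{line:GCA_most_variant_component_selection} picks $\arg\max_{i\notin\cE_j}\boldrho_i^\intercal\bE[\bar{X}_j\bar{X}_j^\intercal]\boldrho_i$; writing $\bar{X}_j=\sum_{i\notin\cE_j}Z_i\boldrho_i$ and invoking orthonormality, this reduces to $\arg\max_{i\notin\cE_j}\var(Z_i)$. In GFA the rule $\arg\max_{i\notin\cE_j}\sigma_{1,i}=\arg\max_{i\notin\cE_j}\var(X_i)$ is precisely its standard-basis analog. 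With the identity above and this selection correspondence established, the induction over $i\in[t]$ in the proof of Theorem~\ref{theorem:epsCapture}---which manipulates only the sets $\cE_j,\cL_{j-1}$ and the quantities $\bE[(Z_{j_i}-\proj_{\cF(Z_{\cN_i^\epsilon})}Z_{j_i})^2]$---carries over with no modification. The three corollaries then follow immediately, since their extra content (the implicit function theorem and its extension, Theorem~\ref{theorem:EIFT}) concerns only the redundancy functions $R^{(i)}(\boldz)$ and their zero sets and is entirely indifferent to the choice of orthonormal basis; one merely reads $Z_i=X_i$ throughout. The only real obstacle is the bookkeeping of the second and third steps: confirming that the diagonal-only formulation of GFA faithfully reproduces both the thresholding set $\cE_j$ and the variance-maximizing selection of GCA. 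Once that is in place, the transfer is automatic.
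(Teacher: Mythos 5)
Your proposal is correct and is precisely the verification the paper leaves implicit (the paper offers no proof beyond asserting that the claim ``can be easily verified''): you correctly isolate the only two properties of the principal directions used in the proof of Theorem~\ref{theorem:epsCapture} --- orthonormality and the expansion $X=\sum_{i}\boldrho_i Z_i$ --- both of which hold trivially for $\{\bolde_i\}_{i=1}^d$, and you check that the diagonal-only bookkeeping of GFA reproduces both the thresholding set $\cE_j$ (via $\sigma_{j,i}=\bolde_i^\intercal\Sigma_j\bolde_i=\bE[(Z_i-\proj_{\cF(Z_{\cS_{j-1}})}Z_i)^2]$) and the variance-maximizing selection rule of GCA. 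The only cosmetic point worth recording is that the decreasing-variance indexing $\var(Z_1)\ge\cdots\ge\var(Z_d)$, which the induction in Theorem~\ref{theorem:epsCapture} uses as a labeling convention (e.g., ``the more variant $Z_1,\ldots,Z_{j_1-1}$''), is automatic for principal components but requires relabeling the features in decreasing order of variance for GFA; this is pure re-indexing and changes nothing in the argument.
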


\section{Literature review}\label{section:literature}
Unsupervised dimension reduction is an old topic, dating as far back as 1901. The literature in this area is vast, out of which we summarize the main key contributions below.
PCA is a well-known technique~\cite{pearson1901liii}; for a contemporary introduction see~\cite{theodoridis2015machine}. Multidimensional scaling (MDS)~\cite{cox2000multidimensional} is a class of methods (which includes Isomap~\cite{tenenbaum2000global}) whose objective is to maximize the scatter of the projection, under the rationale that doing so would yield the most informative projection. 
There exists a duality between PCA and MDS where scatter is measured by Euclidean distance~\cite{torgerson1952multidimensional,cox2000multidimensional}. 
Maximum Autocorrelation Factors (MAF) and Slow Feature Analysis are linear feature extraction methods tailored to multivariate time series data \cite{larsen2002decomposition,wiskott2002slow}. 
Locality Preserving Projections (LPP) aim to find a linear transformation of the data that preserves the local pairwise relationships among data points~\cite{he2003locality}. 
Locally Linear Embedding (LLE)~\cite{roweis2000nonlinear} finds representations by preserving local linear relationships between adjacent points. Independent component analysis (ICA)~\cite{stone2004independent} is used to separate a multivariate signal into independent, non-Gaussian components, but is also popular for feature extraction. 
t-Distributed Stochastic Neighbor Embedding (t-SNE) is a nonlinear technique widely used for visualizing high-dimensional data.
It minimizes the divergence between two probability distributions: one representing pairwise similarities of data points in the original space and the other in the reduced space. 
This method excels at preserving local structures and revealing intricate patterns, making it an invaluable tool for exploring and interpreting complex datasets in two or three dimensions~\cite{hinton2002stochastic}. 
Uniform Manifold Approximation and Projection (UMAP) is a dimensionality reduction technique that can be used for visualization purposes similar to t-SNE, as well as for general nonlinear dimensionality reduction tasks. 
It is particularly good at preserving both local and global structure of the data. 
UMAP is based on three key assumptions: the data is uniformly distributed on a Riemannian manifold, the Riemannian metric is locally constant or can be approximated as such, and the manifold is locally connected \cite{mcinnes2018umap}. 
Autoencoders are hourglass shaped neural networks usually trained to minimize the Euclidean deviation of their input from their output; the output of the narrowest intermediate layer is the ensuing nonlinear dimension reduction mechanism. 
Their ability to learn nonlinear transformations makes them a powerful tool for reducing the dimensionality of complex datasets~\cite{hinton2006reducing}. 

Main techniques in the area include Laplacian score~\cite{he2005laplacian,huang2018manifold}, Fisher score~\cite{duda2006pattern,urbanowicz2018relief}, and trace ratio~\cite{nie2008trace}, which assess the importance of features by their ability to preserve data similarity; 
Multi-cluster feature selection~\cite{cai2010unsupervised,wang2021multi}, nonnegative discriminative feature selection~\cite{li2012unsupervised}, and unsupervised discriminative feature selection~\cite{yang2011l2,nie2020subspace}, which assume sparsity and employ lasso type minimization; and 
minimal-redundancy-maximal-relevance~\cite{peng2005feature}, mutual information based feature selection~\cite{battiti1994using}, and fast correlation based filter~\cite{yu2003feature}, which maximize mutual information between the original and selected features.

Ref.~\cite{dash1999handling} proposed to use information entropy to evaluate the importance of features, and used the trace criterion to select the most relevant feature subset. Ref.~\cite{vandenbroucke2000unsupervised} proposes to use a competitive learning to divides the original feature into subsets, and the subset which minimizes dispersion is selected. Ref.~\cite{alibeigi2011unsupervised} analyzed the distribution of the feature data by use of the probability density of different features; a feature is selected by the data distribution relation among the other features. Ref.~\cite{mitra2002unsupervised} uses the maximum information compression index to measure the similarity between features. Ref.~\cite{zhou2015unsupervised} first calculates the maximal information coefficient for each attribute pair to construct an attribute distance matrix, and then clusters all attributes and selects features according to their cluster affiliation. Some further techniques use the Laplacian score~\cite{padungweang2009univariate,he2005laplacian,saxena2010evolutionary}, which measures the local topology of the data clusters. 

Additionally, several generalizations of PCA were proposed in the literature.
First, kernel PCA~\cite[Ch.~19.9.1]{theodoridis2015machine} is a popular method for \textit{nonlinear} dimension reduction, in which data is embedded into a higher dimensional space using an embedding function~$\Phi:\bR^d\to\bR^D$, which is in general not known a priori. 
Then, principal directions are found in~$\bR^D$, aided by the kernel trick to reduce computational complexity.
The resulting dimension reduction is obtained by projecting data transformed by~$\Phi$ on the principal directions in~$\bR^D$; since each entry in~$\bR^D$ is a nonlinear function of the entries in~$\bR^d$, this yields a \textit{nonlinear} dimension reduction method.
Therefore, kernel PCA is different from all methods proposed in this paper, which are linear.
Yet, for completeness we discuss several similarities and differences between our approach and kernel PCA.
In terms of similarities, both methods seek high variance directions in a space transformed by a nonlinear function---the function~$\Phi$ mentioned above in the case of kernel PCA, and the function~$\boldz\mapsto (f(\boldz))_{f\in \cF}$ in our case---in which each entry is a nonlinear function of the entries in~$\bR^d$.
However, our methods rely on devising a \textit{linear} transform~$V$ in~$\bR^d$ which is applied before the function~$\boldz\mapsto(f(\boldz))_{f\in\cF}$, i.e., in a sense our methods are based on exploring high variance directions in a space transformed by~$\boldz\mapsto(f(\boldz^\intercal V))_{f\in\cF}$.
More so, the linear transform~$V$ is not determined a priori (unlike kernel PCA, in which~$\Phi$ must be determined a priori), but discovered throughout the algorithm---each row~$\boldnu_i$ of~$V$ is determined according to variance properties in the space~$(f(X^\intercal\boldnu_1,\ldots,X^\intercal\boldnu_{i-1}))_{f\in\cF_{i-1}}$ (where~$\cF_i\subseteq \cF$ is a subset which depends only on the first~$i-1$ variables).
The resulting dimension reduction in our case is given by projecting the original data on the rows of~$V$, which is fundamentally different from kernel PCA where it is given by computing~$\Phi$ and projecting over the principal directions in~$\bR^D$.

Second, generalized PCA~\cite{vidal2005generalized} is a framework which assumes the data lies on a \textit{union} of subspaces of~$\bR^d$. Using De-Morgan's laws, it is shown~\cite[Thm.~2]{vidal2005generalized} that data which lies on a union of~$\ell$ subspaces also lies in the kernel of a homogeneous polynomial system. The number of polynomials in this system is the product of the co-dimensions of the subspaces, and all polynomials only contain monomials of degree exactly~$\ell$. Then, using differential calculus, the bases of these subspaces are found. Whenever~$\cF(\boldz)$ is a set of monomials, the definition of redundancy in Section~\ref{section:GCA} (also Section~\ref{section:GFA})
implies that $X$ satisfies a certain polynomial system as well, albeit not necessarily homogeneous nor containing only monomials of a certain degree. The similarities to generalized PCA do not seem to extend beyond that, as generalized PCA framework addresses only a very particular form of redundancy.

Finally, similar to our work, function-space orthogonality is also employed in the theory of~\textit{Principal Inertia Components} (PIC)~\cite{hsu2021generalizing,du2017principal}, where it is used to define the \textit{principal functions} of a pair of random variables~$(X,Y)$. 
These principal functions can then be used to compute functions of~$X$ from samples of~$Y$, and vice versa.
While evaluations of the principal functions can be seen as a dimension reduction method, there are several substantial difference between the PIC paradigm and this paper: (a) the principal functions are highly nonlinear and in practice computed via a neural network (as opposed to our linear methods); (b) PIC is tailored for simultaneous analysis of two variables (rather than one variable in our methods); and (c) in our methods the orthogonal functions are a tool in the dimension reduction process (e.g., to define the alternative covariance matrices), and not the dimension reduction itself as in PIC.

\section{Experimental Results}\label{section:experiments}
In this section the theoretical results and performance of the proposed algorithms are evaluated through simulation studies over synthetic and real-world datasets. 
Below we detail the methods and the rationale behind each set of experiments.

Section~\ref{section:experiments_GFR_residual_variance_reduction} supports the claim that GFR significantly reduces the data's residual variance in comparison with PCA.
Intuitively, residual variance captures how variant the data is after extracting features. 
Formally, in PCA residual variance after extracting~$m$ features is captured by the~$(m+1)$'th eigenvalue of the covariance matrix~$\Sigma_{m+1}$, and in GFR it is the largest eigenvalue of the alternative covariance matrix~$\Sigma_{m+1}$.
In both cases, residual variance is the largest eigenvalue of the covariance matrix, after setting to zero the parts of the data which were extracted so far. 
We show that for a similar threshold~$\epsilon^2$, GFR achieves similar residual variance using significantly fewer extracted features in comparison to PCA, by almost an order of magnitude in certain cases. 

In section~\ref{section:experiments_GFR_classification_accuracy} we evaluate GFR for classification tasks over benchmark datasets and compare the results to other well-known unsupervised feature extraction algorithms: Kernel PCA, FastICA, Locality Preserving Projections (LPP), UMAP, and autoencoders; this is a representative list of the best known linear methods, and the most popular nonlinear ones.
We use each method for feature extraction while disregarding the labels. The labels are later re-attached, a classifier is trained over the extracted features, and the resulting accuracies are compared.

In section~\ref{section:experiments_GCA}, we study the performance of GCA based on its success in removing all redundant features among principal components in synthetic datasets. 
Although its correctness is formally proven, we study the effect of employing empirical covariance matrices, and its connection to dataset size. 
We show that GCA can indeed remove all redundant principal components from the data, given moderately large datasets.
On the other hand, GCA's performance degrades for smaller datasets, particularly in ones with higher-degree multilinear polynomial redundancy.

In section~\ref{section:experiments_GFS_selected_features} we turn to experiment on our feature selection techniques, and similar to the experiments for feature extraction, we measure variance reduction and classification accuracy. 
We begin by showing that selecting higher degree multilinear polynomials as the redundant functions~$\cF(\boldz)$ in GFS results in a significant reduction  in maximum variance among the entries of~$d_j(X)$ in each iteration (i.e.,~$\max\{\var[d_j(X)_i]\}_{i=1}^d$).
We proceed in Section~\ref{section:experiments_GFS_classification_accuracy} to validate the performance of GFS for classification tasks on the benchmark datasets. 
The numerical experiments show superior performance of GFS in comparison to other algorithms. 

In Section~\ref{section:experiments_GFS_vs_UFFS}, We corroborate GFS's superiority over Unsupervised Fourier Feature Selection (UFFS)~\cite{heidari2022sufficiently} in terms of running time, the ability to capture redundant features, and classification accuracy, over synthetic datasets. 
Finally, we study the performance of GFA over synthetic datasets in section~\ref{section:experiments_GFA}.

The experiments were performed on a laptop with Intel(R) Core(TM) i9-9880H CPU @ 2.30GHz and 64GB RAM, and the source code is available at~\url{https://github.com/byaghooti/Gram_schmidt_feature_extraction}.

\subsection{Residual Variance Reduction of GFR}\label{section:experiments_GFR_residual_variance_reduction}
To support our claim that GFR results in significant variance reduction in comparison to PCA, we apply GFR to benchmark datasets taken from UCI repository \cite{Dua:2017} and \cite{li2017feature}. The properties of the tested benchmark datasets are provided in Table~\ref{table:properties-of-benchmark-datasets}. 
\begin{table}[!ht]
\centering
\begin{tabular}{ l | c c c c c } 
\toprule
\textbf{Dataset} & \textbf{USPS} & \textbf{MNIST} & \textbf{COIL-20} & \textbf{Musk} & \textbf{Credit Approval} \\
\midrule
\rowcolor{gray!20}
Features & 256 & 784 & 1024 & 166 & 15 \\
Samples & 7291 & 60000 & 1440 & 6597 & 690 \\
\bottomrule
\end{tabular}
\caption{Properties of the tested benchmark datasets.}
\label{table:properties-of-benchmark-datasets}
\end{table}

We compare PCA against GFR, where~$\cF(\boldz)$ in GFR is set as multilinear polynomials of degrees up to~$2$,~$3$, or~$4$,
and the results are shown in Table~\ref{table:number-of-extracted-features-benchmark-datasets}. 
We compare the amount of residual variance left in the dataset with the number of extracted features. 
Specifically, Table~\ref{table:number-of-extracted-features-benchmark-datasets} shows the number of features that had to be extracted to achieve a certain amount of residual variance. 
\begin{table}[h]
\centering
\subfloat[USPS Dataset]{%
\begin{tabular}{c | c c c c c}
    \toprule
    \multirow{2}{*}{\makecell{Degree of \\ Polynomial}} & \multicolumn{5}{c}{$\epsilon^2$} \\
    \cmidrule{2-6}
     & 0.1 & 0.15 & 0.2 & 0.5 & 1 \\
    \midrule
    \rowcolor{gray!20}
    1 (PCA) & 208 & 130 & 113 & 70 & 45 \\
    2 & 61 & 52 & 47 & 30 & 21 \\
    \rowcolor{gray!20}
    3 & 29 & 27 & 25 & 20 & 16 \\
    4 & 20 & 19 & 19 & 17 & 14 \\
    \bottomrule
\end{tabular}
}
\hspace{45pt}
\subfloat[MNIST Dataset]{%
\begin{tabular}{c | c c c c c c}
    \toprule
    \multirow{2}{*}{\makecell{Degree of \\ Polynomial}} & \multicolumn{6}{c}{$\epsilon^2$} \\
    \cmidrule{2-7}
     & $0.01$ & $0.05$ & $0.1$ & $0.2$ & $0.5$ & $0.75$ \\
    \midrule
    \rowcolor{gray!20}
    1 (PCA) & 696 & 576 & 451 & 344 & 233 & 191 \\
    2 & 205 & 145 & 114 & 84 & 53 & 43 \\
    \rowcolor{gray!20}
    3 & 55 & 49 & 45 & 39 & 31 & 26 \\
    4 & 29 & 28 & 27 & 25 & 22 & 20 \\
    \bottomrule
\end{tabular}
}
\\
\subfloat[COIL-20 Dataset]{%
\begin{tabular}{c | c c c c c c}
    \toprule
    \multirow{2}{*}{\makecell{Degree of \\ Polynomial}} & \multicolumn{6}{c}{$\epsilon^2$} \\
    \cmidrule{2-7}
     & 0.01 & 0.02 & 0.05 & 0.1 & 0.2 & 0.5 \\
    \midrule
    \rowcolor{gray!20}
    1 (PCA) & 577 & 477 & 353 & 264 & 189 & 120 \\
    2 & 50 & 47 & 42 & 38 & 33 & 27 \\
    \rowcolor{gray!20}
    3 & 21 & 20 & 19 & 18 & 17 & 15 \\
    4 & 14 & 14 & 14 & 14 & 13 & 12 \\
    \bottomrule
\end{tabular}
}
\hspace{20pt}
\subfloat[Musk Dataset]{%
\begin{tabular}{c | c c c c c c c}
    \toprule
    \multirow{2}{*}{\makecell{Degree of \\ Polynomial}} & \multicolumn{7}{c}{$\epsilon^2$} \\
    \cmidrule{2-8}
     & 0.01 & 0.02 & 0.05 & 0.1 & 0.2 & 0.5 & 1 \\
    \midrule
    \rowcolor{gray!20}
    1 (PCA) & 122 & 104 & 83 & 68 & 52 & 37 & 26 \\
    2 & 51 & 46 & 38 & 32 & 27 & 20 & 16 \\
    \rowcolor{gray!20}
    3 & 27 & 26 & 23 & 21 & 19 & 15 & 13 \\
    4 & 20 & 19 & 18 & 17 & 16 & 14 & 12 \\
    \bottomrule
\end{tabular}
}
\\
\subfloat[Credit Approval Dataset]{%
\begin{tabular}{c | c c c}
    \toprule
    \multirow{2}{*}{\makecell{Degree of \\ Polynomial}} & \multicolumn{3}{c}{$\epsilon^2$} \\
    \cmidrule{2-4}
     & $0.5$ & $0.75$ & $1$ \\
    \midrule 
    \rowcolor{gray!20}
    1 (PCA) & 14 & 11 & 7 \\
    2 & 11 & 8 & 6 \\
    \rowcolor{gray!20}
    3 & 9 & 8 & 6 \\
    4 & 9 & 8 & 5 \\
    \bottomrule
\end{tabular}
}
\caption{Number of extracted features by GFR with multilinear polynomials on benchmark datasets for different values of the threshold~$\epsilon^2$.}
\label{table:number-of-extracted-features-benchmark-datasets}
\end{table}

Table~\ref{table:number-of-extracted-features-benchmark-datasets} clearly shows a drastic reduction of residual variance as the degree of the multilinear polynomials increases, showcasing that the redundancies in these datasets are well described by them. We emphasize the particular power of multilinear polynomials of degree~$4$, achieving up to $6$-$10$ times fewer extracted features in USPS (as opposed to~$2$-$5$ times in other cases with USPS), up to~$24$ fewer features for MNIST, and up to~$6$ times fewer features for Musk.

We also compare GFR with Kernel PCA, where~$\cF(\boldz)$ in GFR consists of multilinear polynomials of degrees up to 4 and general polynomials of degrees up to 4, and Kernel PCA is used with a degree~$4$ polynomial kernel.
Kernel PCA maps the features to a higher-dimensional space using a nonlinear kernel and computes the principal components in the transformed space, resulting in a feature representation with a different statistical distribution compared to the original space.
Thus, we cannot directly compare the variance of the extracted features from GFR and Kernel PCA. 
Instead, when extracting~$m$ features, we compare the ratio of the variance of the~$(m+1)$'th feature to the variance of the first extracted feature, i.e., $\var(Z_{m+1})/\var(Z_{1})$, where~$Z_1$ and~$Z_{m+1}$ are the first and $(m+1)$'th extracted features, respectively. This ratio normalizes the variance of the features with respect to the first extracted feature.
The lower this ratio, the more effectively the algorithm removes redundant features, as it indicates how much of the variance of the first extracted feature remains in the data after extracting~$m$ features.
Table~\ref{table:residual-variance-GFR-KPCA} shows this ratio for different numbers of extracted features on benchmark datasets.  
The results show that, in most benchmark datasets (except Credit Approval) GFR achieves a lower ratio, demonstrating its ability to remove redundant features better than Kernel PCA.

\begin{table}[h]
\centering
\subfloat[USPS Dataset]{
\begin{tabular}{c | c c c c}
    \toprule
    \multirow{2}{*}{Method} & \multicolumn{4}{c}{Number of Extracted Features} \\
    \cmidrule{2-5}
     & 20 & 19 & 17 & 14 \\
    \midrule
    \rowcolor{gray!20}
    \makecell{GFR with \\ multilinear polynomial} & 0.26 & 0.52 & 1.32 & 2.61 \\
    \makecell{GFR with \\ polynomial} & 0.08 & 0.09 & 0.11 & 1.03 \\
    \rowcolor{gray!20}
    Kernel PCA & 7.95 & 8.50 & 9.92 & 11.59 \\
    \bottomrule
\end{tabular}
}
\hspace{45pt}
\subfloat[MNIST Dataset]{
\begin{tabular}{c | c c c c c c}
    \toprule
    \multirow{2}{*}{Method} & \multicolumn{6}{c}{Number of Extracted Features} \\
    \cmidrule{2-7}
    & 29 & 28 & 27 & 25 & 22 & 20 \\
    \midrule
    \rowcolor{gray!20}
    \makecell{GFR with \\ multilinear polynomial} & 0.02 & 0.12 & 0.24 & 0.49 & 1.22 & 1.82 \\
    \makecell{GFR with \\ polynomial} & 0.01 & 0.04 & 0.17 & 0.34 & 0.91 & 1.48 \\
    \rowcolor{gray!20}
    Kernel PCA & 0.59 & 0.61 & 0.62 & 0.68 & 0.90 & 0.91 \\
    \bottomrule
\end{tabular}
}
\\
\subfloat[COIL-20 Dataset]{
\begin{tabular}{c | c c c}
    \toprule
    \multirow{2}{*}{Method} & \multicolumn{3}{c}{Number of Extracted Features} \\
    \cmidrule{2-4}
     & 14 & 13 & 12 \\
    \midrule
    \rowcolor{gray!20}
    \makecell{GFR with \\ multilinear polynomial} & 0.45 & 0.91 & 2.27 \\
    \makecell{GFR with \\ polynomial} & 0.55 & 0.063 & 0.087 \\
    \rowcolor{gray!20}
    Kernel PCA & 15.08 & 16.15 & 16.62 \\
    \bottomrule
\end{tabular}
}
\hspace{20pt}
\subfloat[Musk Dataset]{
\resizebox{\columnwidth}{!}{\begin{tabular}{c | c c c c c c c}
    \toprule
    \multirow{2}{*}{Method} & \multicolumn{7}{c}{Number of Extracted Features} \\
    \cmidrule{2-8}
     & 20 & 19 & 18 & 17 & 16 & 14 & 12 \\
    \midrule
    \rowcolor{gray!20}
    \makecell{GFR with \\ multilinear\\ polynomial} & 0.02 & 0.04 & 0.10 & 0.21 & 0.41 & 1.03 & 2.06 \\
    \makecell{GFR with \\ polynomial} & 0.01 & 0.02 & 0.04 & 0.051 & 0.057 & 0.31 & 0.76 \\
    \rowcolor{gray!20}
    Kernel PCA & 8.71 & 8.93 & 10.02 & 10.45 & 11.22 & 12.93 & 14.17 \\
    \bottomrule
\end{tabular}
}}
\\
\subfloat[Credit Approval Dataset]{
\begin{tabular}{c | c c c}
    \toprule
    \multirow{2}{*}{Method} & \multicolumn{3}{c}{Number of Extracted Features} \\
    \cmidrule{2-4}
     & 9 & 8 & 5 \\
    \midrule
    \rowcolor{gray!20}
    \makecell{GFR with \\ multilinear polynomial} & 1.87 & 2.81 & 3.74 \\
    \makecell{GFR with \\ polynomial} & 1.69 & 2.05 & 2.56 \\
    \rowcolor{gray!20}
    Kernel PCA & 0.07 & 0.10 & 0.21 \\
    \bottomrule
\end{tabular}
}
\caption{Comparison of the residual variance reduction of GFR with multilinear polynomials of degree up to 4 and  polynomials of degree up to 4 and Kernel PCA with a polynomial kernel of degree 4 on benchmark datasets. 
The entries represent the ratio between the variance of the~$(m+1)$'th extracted feature and that of the first extracted feature, i.e.,$\var(Z_{m+1})/\var(Z_{1})$, for different number of extracted features~$m$.}
\label{table:residual-variance-GFR-KPCA}
\end{table}

\subsection{Classification Accuracy of GFR}\label{section:experiments_GFR_classification_accuracy}
\subsubsection{Classification Accuracy of GFR in Benchmark Datasets}
We validate GFR for classification tasks on the benchmark datasets in Table~\ref{table:properties-of-benchmark-datasets}. 
In Table~\ref{table:GFR-classification-accuracy-benchmark-datasets}, we apply GFR with multilinear polynomials of degree up to~$4$ 
against Kernel PCA, FastICA, PCA, and LPP.
The classification accuracy in the benchmark datasets shows overall superior performance of GFR in comparison to other algorithms. 
It is worth mentioning that as we were expecting, the classification accuracy of GFR is higher than PCA in all the experiments in the mentioned benchmark datasets. 

We used a support vector machine classifier with radial basis function as kernel. 
$5$-fold cross-validation on the entire datasets is used to validate the performance of the algorithms. 
In multi-class classification tasks, we used One-vs-Rest strategy. 
To implement Kernel PCA, FastICA, LPP, and UMAP, we used \texttt{KernelPCA}, \texttt{FastICA}, \texttt{LocalityPreservingProjection}, and \texttt{UMAP} from \texttt{sklearn.decomposition}, \texttt{lpproj} packages, and \texttt{umap-learn}, respectively. In all classification tasks utilizing Kernel PCA, polynomial kernels are employed, whose degree matches that of the function family~$\cF$.

Since authoencoders and UMAP have some hyperparameters, we tuned them to find some rough optimum points, and then changed their values around those points to show their behavior for different values of hyperparameters. Specifically, in autoencoders, we change the number of hidden layers, the number of nodes in hidden layers, and the number of epochs. For UMAP, we change \texttt{min\_dist}, which is the effective minimum distance between embedded points, and \texttt{n\_neighbors}, which is the size of the local neighborhood (in terms of the number of neighboring sample points) used for manifold approximation. For more details about these hyperparameters, we refer readers to~\cite{mcinnes2018umap}.

Figures~\ref{fig:GFR-UMAP-classification-accuracy-benchmark-datasets}-\ref{fig:GFR-Autoencoder-classification-accuracy-benchmark-datasets} compare the classification accuracy of GFR with UMAP and autoencoders, respectively. 
The numerical results show that GFR outperforms both UMAP and autoencoder in terms of classification accuracy. 
Note that on each box, the central mark indicates the median, and the bottom and top edges of the box indicate the $25$\textsuperscript{th} and $75$\textsuperscript{th} percentiles, respectively. 
The whiskers extend to the most extreme data points not considered outliers, and the outliers are plotted individually using the ``\red{+}'' marker symbol. Another important result of these experiments is the sensitivity of the autoencoders and UMAP with respect to the hyperparameters, which results in a higher uncertainty of these methods.
As we see in these tables and figures, in all the experiments, the classification accuracy of GFR with multilinear polynomials of degree up to~$4$ over USPS, MNIST, and COIL-20 datasets is higher than all other algorithms.   
The results for the Musk data set demonstrate that none of the GFR, Kernel PCA, FastICA, PCA, and LPP are better than others in most cases, but the classification accuracy of GFR is higher than the average classification accuracy of UMAP and autoencoder.
Finally, from the simulation results for the Credit Approval dataset, it is obvious that GFR's performance is better than other algorithms in most cases. Only in cases with lower extracted features, LPP performs better than GFR.

\begin{table}[h]
\centering
\subfloat[USPS Dataset]{%
\begin{tabular}{c | c c c c}
    \toprule
    \multirow{2}{*}{Method} & \multicolumn{4}{c}{Number of Extracted Features} \\
    \cmidrule{2-5}
     & 20 & 19 & 17 & 14 \\
    \midrule
    \rowcolor{gray!20}
    GFR & \textbf{97.02} & \textbf{97} & \textbf{96.86} & \textbf{96.04} \\
    Kernel PCA & 96.90 & 96.69 & 96.52 & 95.86 \\
    \rowcolor{gray!20}
    FastICA & 96.14 & 95.77 & 95.61 & 95.54 \\
    PCA & 96.18 & 95.89 & 95.72 & 95.33 \\
    \rowcolor{gray!20}
    LPP & 93.54 & 93.48 & 93.29 & 92.32 \\
    \bottomrule
\end{tabular}
}
\hspace{45pt}
\subfloat[MNIST Dataset]{%
\begin{tabular}{c | c c c c c c}
    \toprule
    \multirow{2}{*}{Method} & \multicolumn{6}{c}{Number of Extracted Features} \\
    \cmidrule{2-7}
    & 29 & 28 & 27 & 25 & 22 & 20 \\
    \midrule
    \rowcolor{gray!20}
    GFR & \textbf{96.91} & \textbf{96.90} & \textbf{96.87} & \textbf{96.79} & \textbf{96.63} & \textbf{96.46} \\
    Kernel PCA & 93.11 & 93.02 & 92.87 & 92.72 & 92.69 & 92.58 \\
    \rowcolor{gray!20}
    FastICA & 95.01 & 94.98 & 94.94 & 94.85 & 94.52 & 94.35 \\
    PCA & 90.11 & 90.02 & 89.54 & 89.12 & 88.77 & 88.35 \\
    \rowcolor{gray!20}
    LPP & 87.13 & 86.95 & 86.8 & 86.31 & 86.03 & 85.94 \\
    \bottomrule
\end{tabular}
}
\\
\subfloat[COIL-20 Dataset]{%
\begin{tabular}{c | c c c}
    \toprule
    \multirow{2}{*}{Method} & \multicolumn{3}{c}{Number of Extracted Features} \\
    \cmidrule{2-4}
     & 14 & 13 & 12 \\
    \midrule
    \rowcolor{gray!20}
    GFR & \textbf{97.99} & \textbf{97.92} & \textbf{97.85} \\
    Kernel PCA & 96.67 & 96.81 & 96.25 \\
    \rowcolor{gray!20}
    FastICA & 95.63 & 95.28 & 92.85 \\
    PCA & 96.04 & 95.13 & 92.68 \\
    \rowcolor{gray!20}
    LPP & 90.56 & 90.35 & 89.03 \\
    \bottomrule
\end{tabular}
}
\hspace{20pt}
\subfloat[Musk Dataset]{%
\resizebox{\columnwidth}{!}{\begin{tabular}{c | c c c c c c c}
    \toprule
    \multirow{2}{*}{Method} & \multicolumn{7}{c}{Number of Extracted Features} \\
    \cmidrule{2-8}
     & 20 & 19 & 18 & 17 & 16 & 14 & 12 \\
    \midrule
    \rowcolor{gray!20}
    GFR & 83.32 & 83.24 & 83.57 & \textbf{83.1} & 82.21 & 82.24 & \textbf{82.21} \\
    Kernel PCA & \textbf{83.97} & \textbf{84.31} & 83.99 & 82.79 & 81.37 & 79.25 & 78.87 \\
    \rowcolor{gray!20}
    FastICA & 83.24 & 82.9 & \textbf{84.44} & 82.65 & \textbf{83.78} & \textbf{82.49} & 82.1 \\
    PCA & 81.04 & 81.04 & 81.81 & 82.55 & 82.12 & 81.34 & 80.68 \\
    \rowcolor{gray!20}
    LPP & 82.67 & 82.69 & 81.49 & 81.49 & 82.57 & 81.78 & 81.1 \\
    \bottomrule
\end{tabular}
}}
\\
\subfloat[Credit Approval Dataset]{%
\begin{tabular}{c | c c c}
    \toprule
    \multirow{2}{*}{Method} & \multicolumn{3}{c}{Number of Extracted Features} \\
    \cmidrule{2-4}
     & 9 & 8 & 5 \\
    \midrule
    \rowcolor{gray!20}
    GFR & \textbf{84.78} & \textbf{84.49} & 81.59 ($2^{nd}$ best) \\
    Kernel PCA & 81.30 & 79.13 & 80 \\
    \rowcolor{gray!20}
    FastICA & 82.46 & 82.46 & 81.44 \\
    PCA & 82.46 & 82.46 & 81.44 \\
    \rowcolor{gray!20}
    LPP & 84.34 & 84.34 & \textbf{82.31} \\
    \bottomrule
\end{tabular}
}
\caption{Classification accuracy (\%) of GFR with multilinear polynomials of degree up to 4, Kernel PCA, FastICA, PCA, and LPP on benchmark datasets.}
\label{table:GFR-classification-accuracy-benchmark-datasets}
\end{table}

\begin{figure}[h]
\centering
\begin{minipage}[b]{0.24\textwidth}
    \centering
    \includegraphics[width=\textwidth]{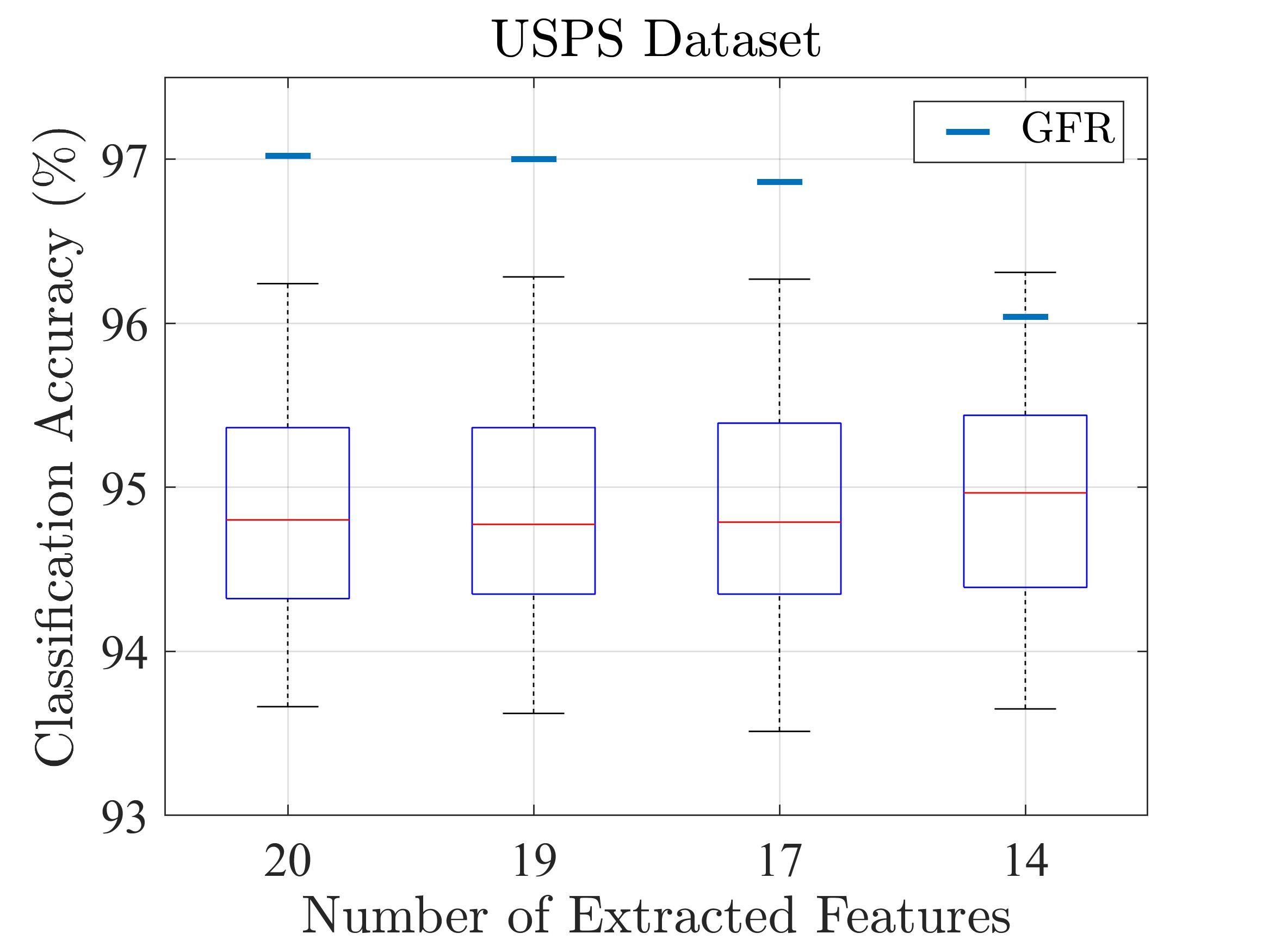}
\end{minipage}
\hfill
\begin{minipage}[b]{0.24\textwidth}
    \centering
    \includegraphics[width=\textwidth]{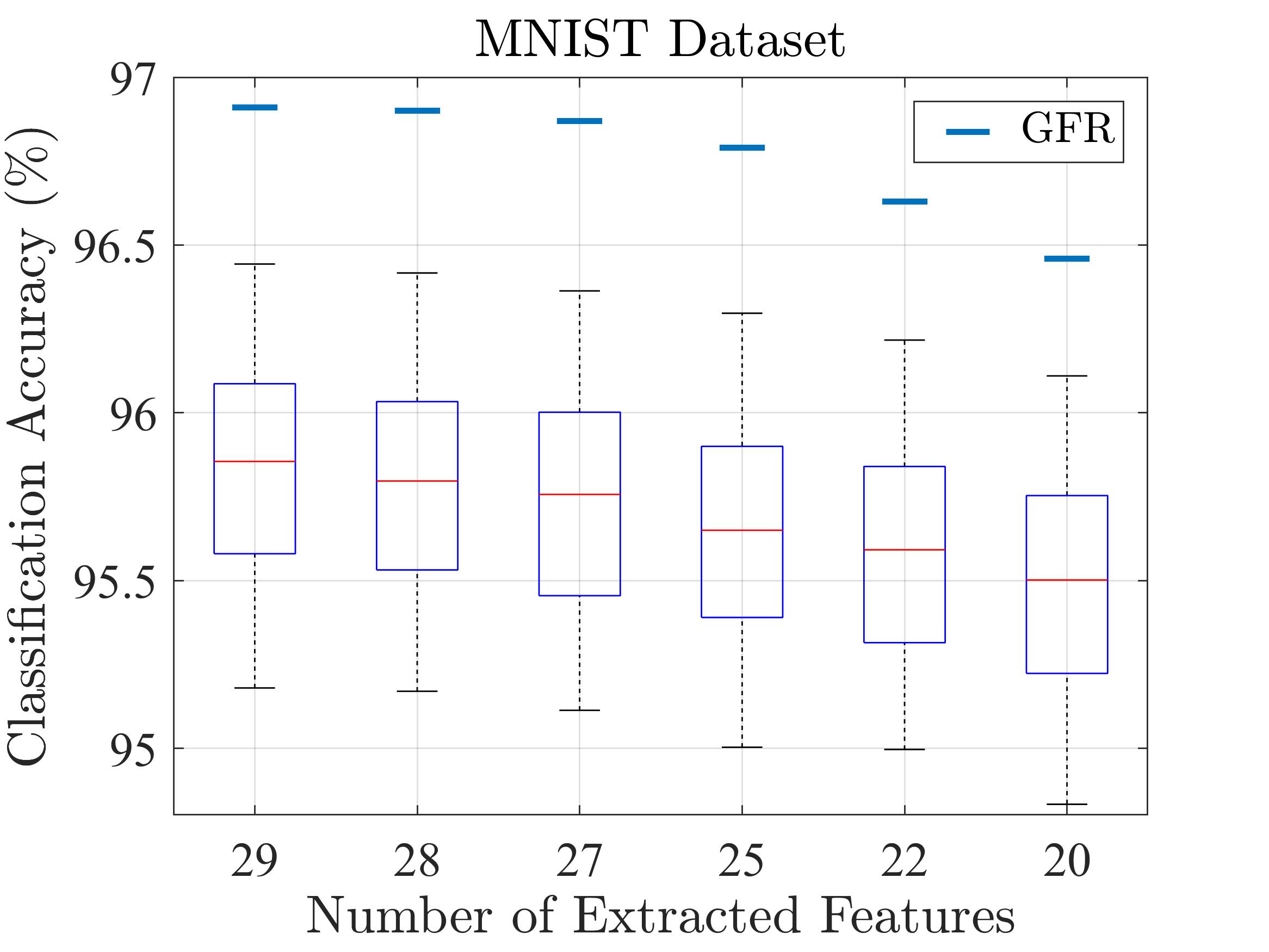}
\end{minipage}
\hfill
\begin{minipage}[b]{0.24\textwidth}
    \centering
    \includegraphics[width=\textwidth]{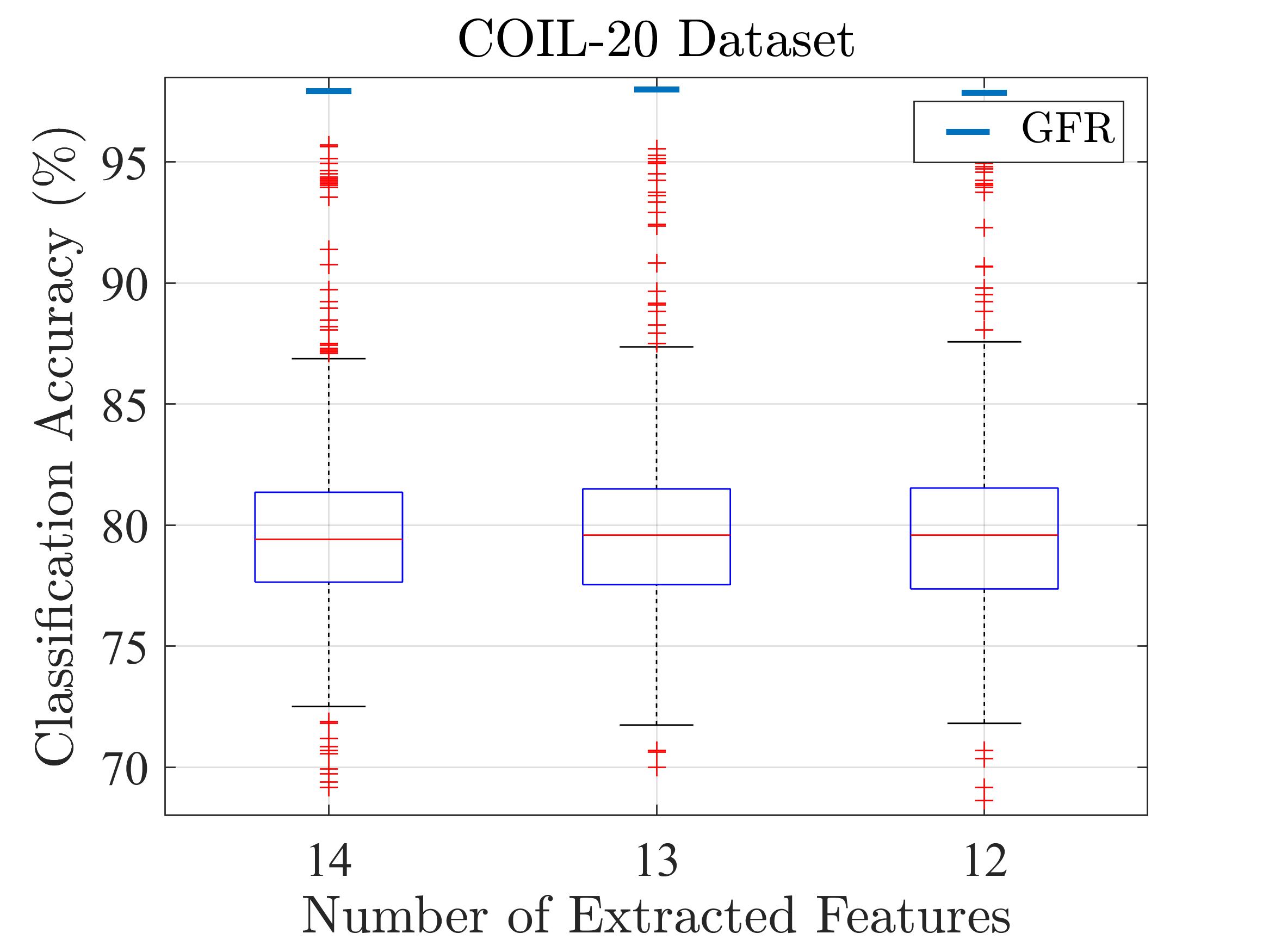}
\end{minipage}
\hfill
\begin{minipage}[b]{0.24\textwidth}
    \centering
    \includegraphics[width=\textwidth]{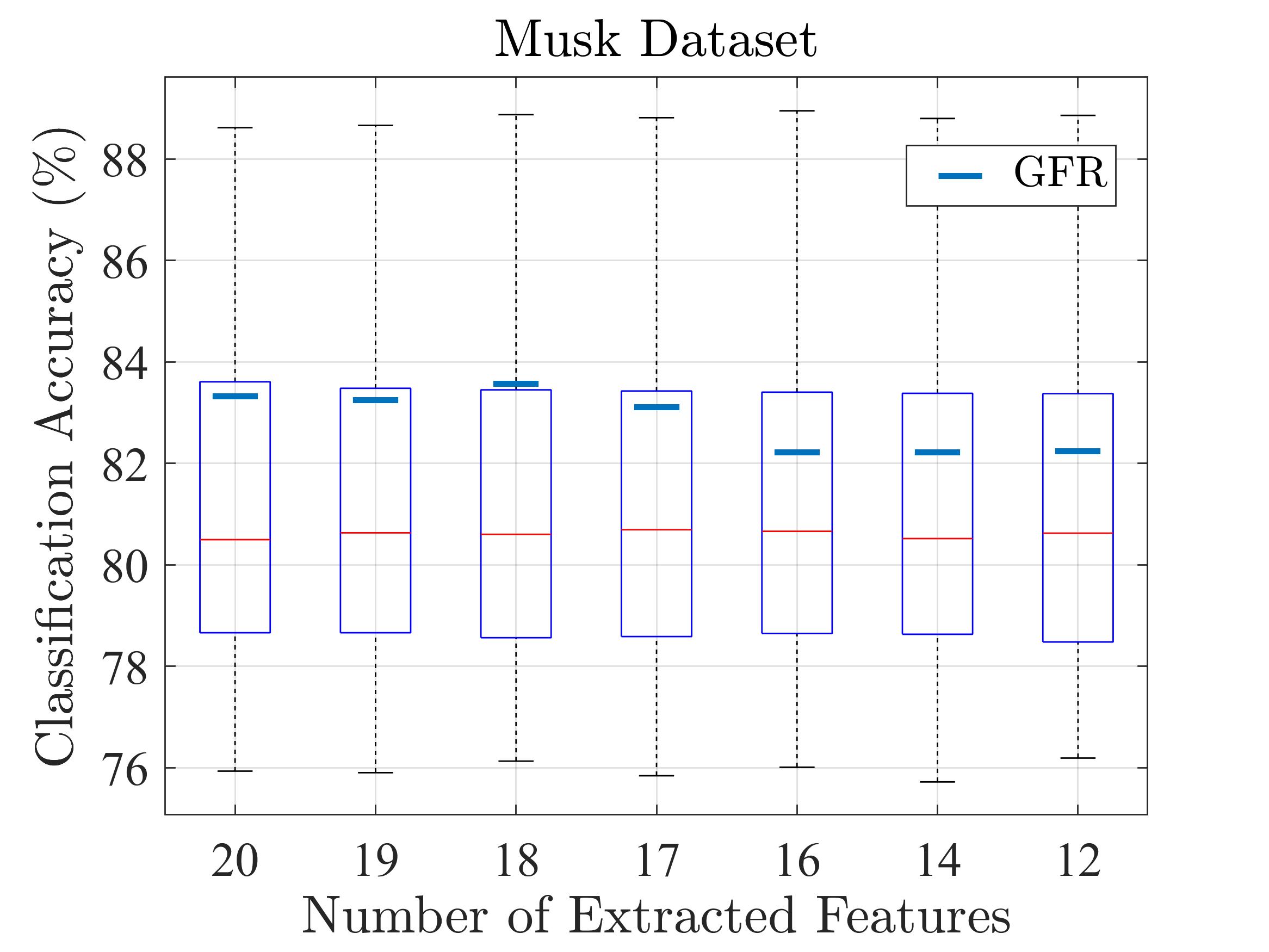}
\end{minipage}
\caption{Classification accuracy (\%) of GFR with multilinear polynomials of degree up to~$4$ versus UMAP on benchmark datasets.}
\label{fig:GFR-UMAP-classification-accuracy-benchmark-datasets}
\end{figure}

\begin{figure}[h]
\centering
\begin{minipage}[b]{0.24\textwidth}
    \centering
    \includegraphics[width=\textwidth]{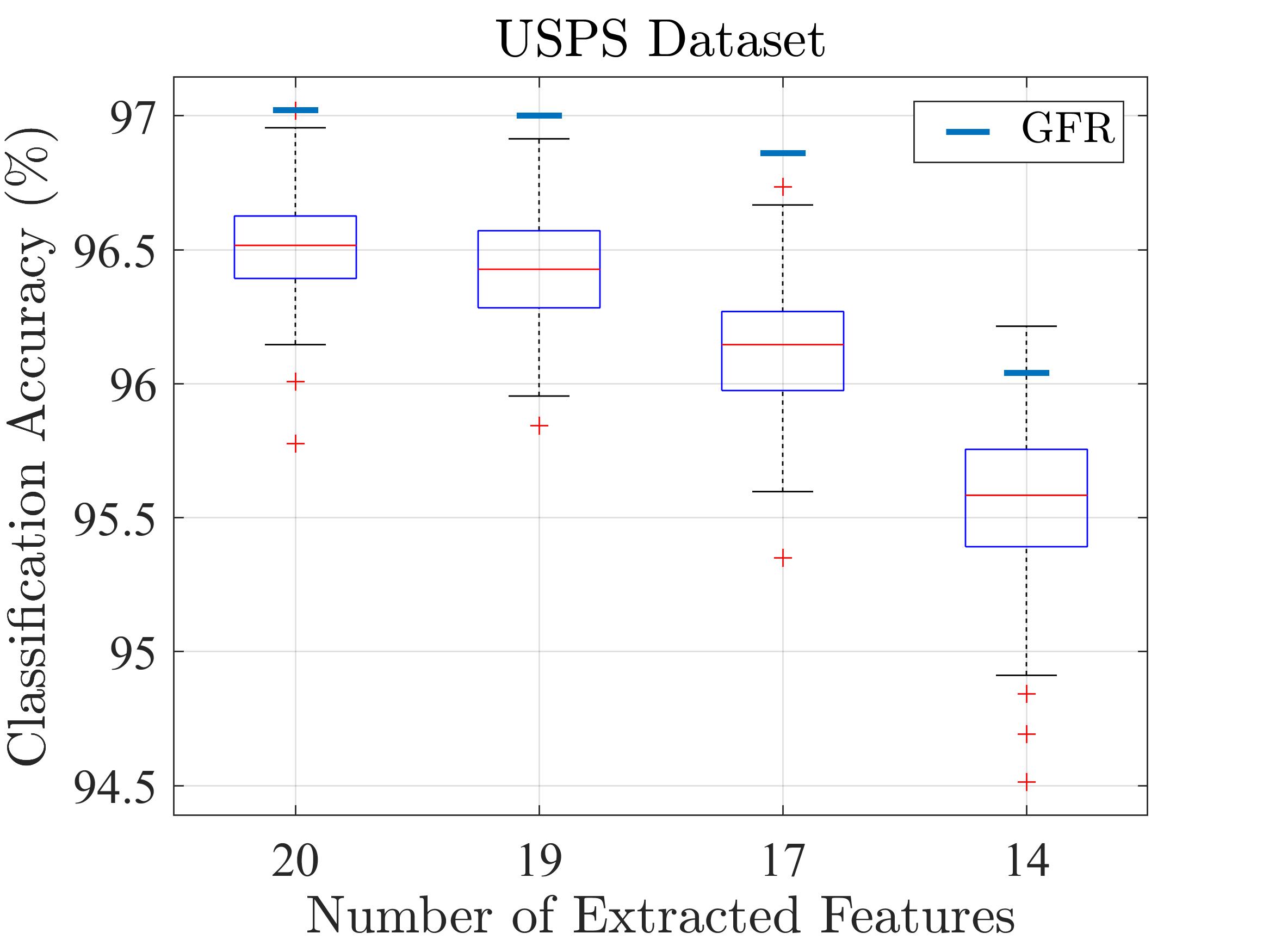}
\end{minipage}
\hfill
\begin{minipage}[b]{0.24\textwidth}
    \centering
    \includegraphics[width=\textwidth]{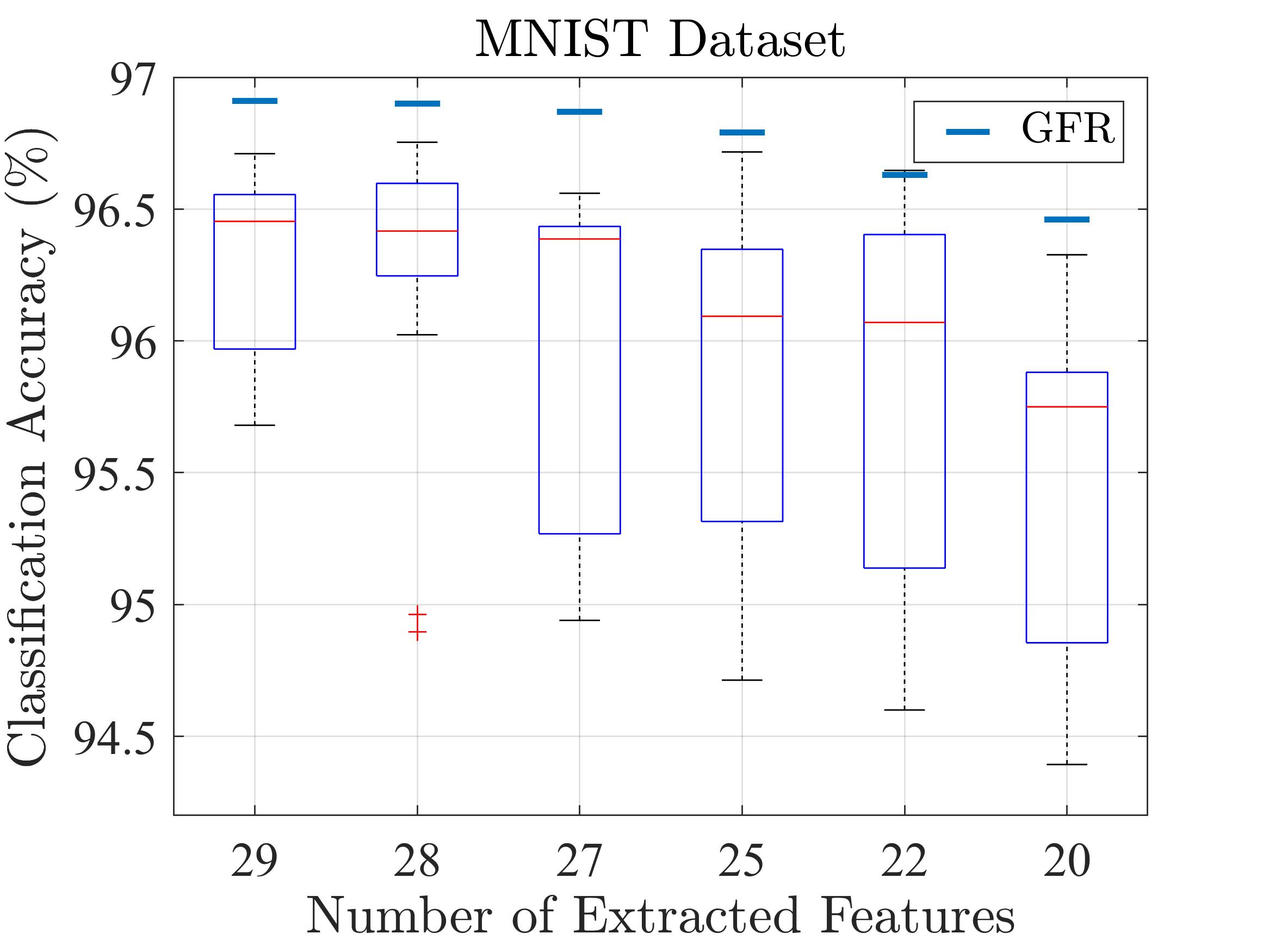}
\end{minipage}
\hfill
\begin{minipage}[b]{0.24\textwidth}
    \centering
    \includegraphics[width=\textwidth]{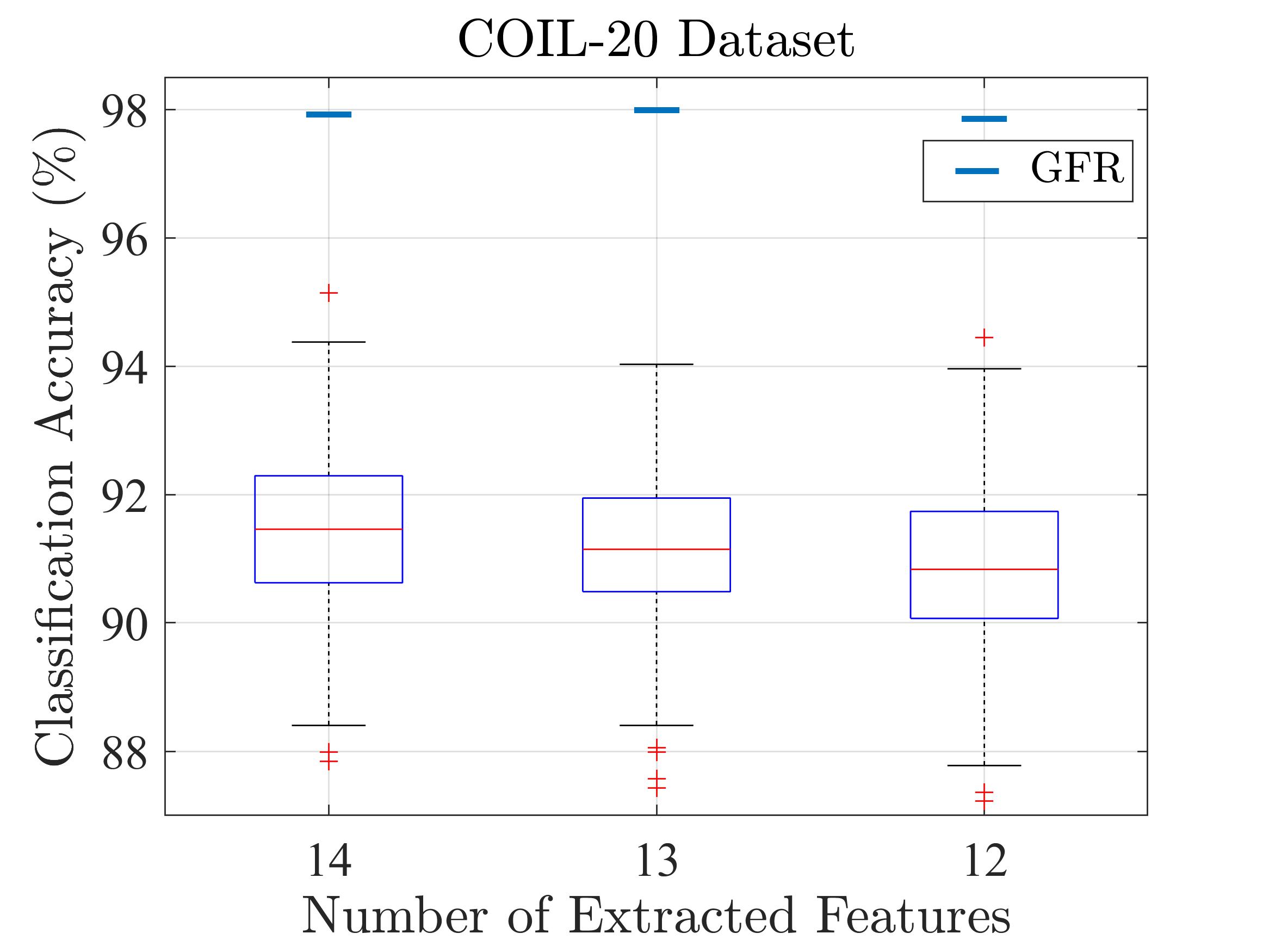}
\end{minipage}
\hfill
\begin{minipage}[b]{0.24\textwidth}
    \centering
    \includegraphics[width=\textwidth]{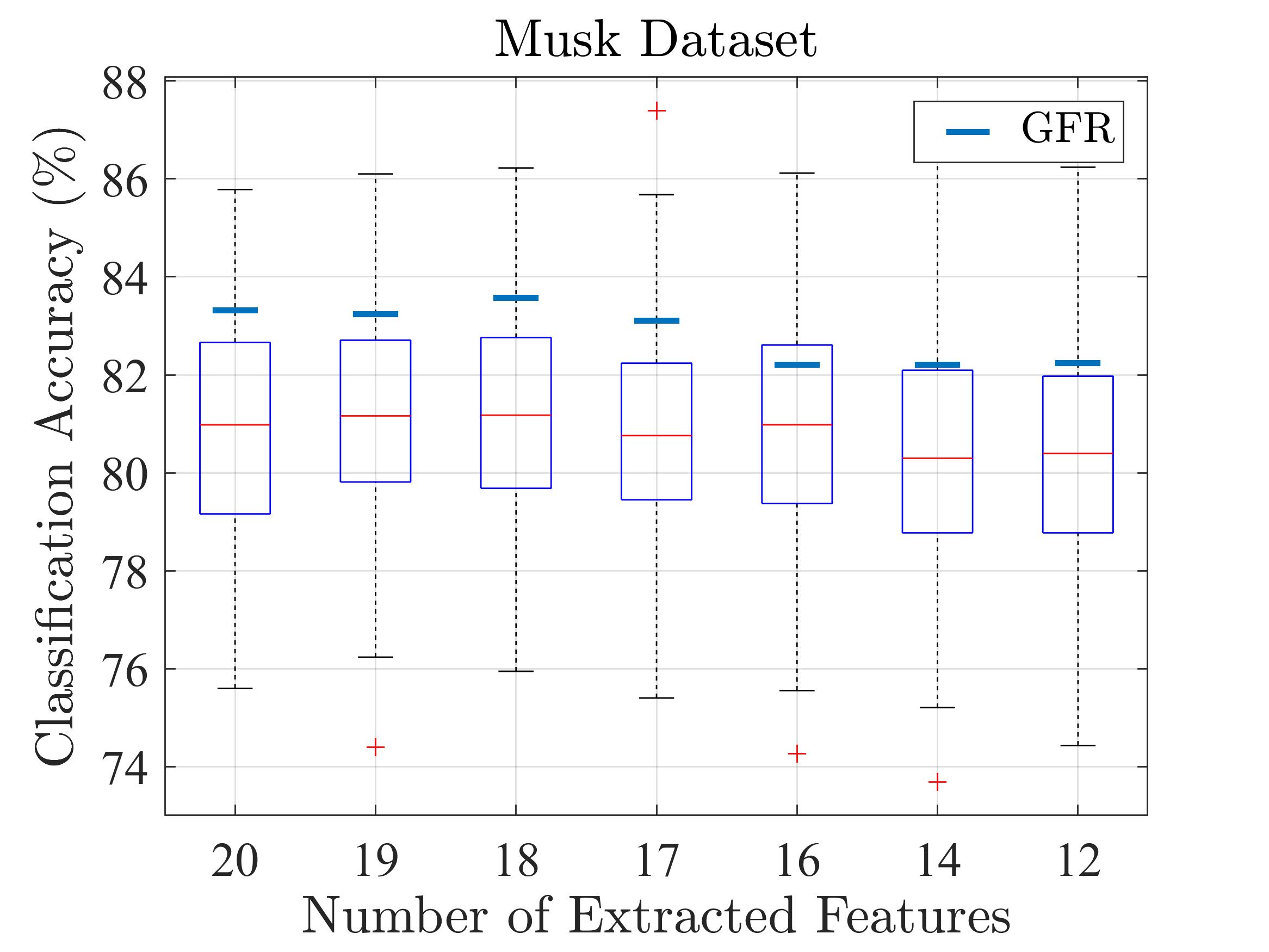}
\end{minipage}
\caption{Classification accuracy (\%) of GFR with multilinear polynomials of degree up to~$4$ versus autoencoders on benchmark datasets.}
\label{fig:GFR-Autoencoder-classification-accuracy-benchmark-datasets}
\end{figure}

\newpage

\subsubsection{GFR vs PCA in Synthetic Datasets}
In this subsection, we evaluate GFR's performance in terms of classification accuracy in synthetic datasets. 
The main goal of these experiments is to show that GFR outperforms PCA in datasets with underlying normal distribution.
We generate the synthetic datasets with~$20$ features, including~$10$ mutually independent random variables $X_1,\ldots,X_{10}$, where each $X_i$ is normally distributed with zero mean and a random variance chosen from~$\operatorname{Unif}(0,2)$. 
For~$i\in\{11,\ldots,20\}$ we chose~$X_i=X_jX_k$ with uniformly random distinct~$j,k\in\{1,\ldots,10\}$.
We applied GFR with~$\cF(\boldz)$ being multilinear polynomials of degree up to two, 
and after extracting features we applied PCA to extract exactly the same number of features obtained by GFR. 
To evaluate their performance, we use a common method in unsupervised feature extraction \cite{heidari2021information,scholkopf2005kernel,he2003locality}, where we assign labels to the data points but never use them in the feature extraction procedure. 
After extracting new features, we use the labels for computing the error of binary classification tasks. 
To generate labels we used either a polynomial threshold function (PTF)
\begin{align*}
    \textstyle f(X_1,\ldots,X_{10}) = \operatorname{sign}\left[ \prod_{1\leq j \leq 3} \left(b_{0,j} + \sum_{i=1}^{10} b_{i,j} X_i \right) \right],
\end{align*}
with $b_{i,j}  \sim \operatorname{Unif}(0, 1)$ and mutually independent, or a linear threshold function (LTF)
\begin{align*}
    \textstyle f(X_1,\ldots,X_{10}) = \operatorname{sign}\left[ b_{0} + \sum_{i=1}^{10} b_{i} X_i \right],
\end{align*}
with $b_{i}  \sim \operatorname{Unif}(0, 1)$ and mutually independent.

We used a support vector machine classifier with a radial basis function as kernel. $5$-fold cross-validation on the entire datasets is used to validate the performance of the algorithms. We repeated the experiments for different dataset sizes, and the simulation results are shown in Table~\ref{table:classification-synthetic-GFR-PCA-20-features}. As we see in these tables, the classification accuracy of GFR for both LTF and PTF label functions is larger than PCA.

\begin{table}[h]
\centering
\subfloat[Synthetic Dataset with PTF Labels]{%
\begin{tabular}{c | c c c c}
    \toprule
    \multirow{2}{*}{Method} & \multicolumn{4}{c}{Dataset Size} \\
    \cmidrule{2-5}
     & 1000 & 2000 & 5000 & 10000 \\
    \midrule
    \rowcolor{gray!20}
    GFR & \textbf{57.03} & \textbf{58.62} & \textbf{61.21} & \textbf{63.93} \\
    PCA & 56.43 & 57.41 & 58.43 & 59.4 \\
    \bottomrule
\end{tabular}
}
\hspace{45pt}
\subfloat[Synthetic Dataset with LTF Labels]{%
\begin{tabular}{c | c c c c}
    \toprule
    \multirow{2}{*}{Method} & \multicolumn{4}{c}{Dataset Size} \\
    \cmidrule{2-5}
     & 1000 & 2000 & 5000 & 10000 \\
    \midrule
    \rowcolor{gray!20}
    GFR & \textbf{87.72} & \textbf{89.96} & \textbf{92.92} & \textbf{95.15} \\
    PCA & 84.97 & 85.66 & 86.22 & 86.36 \\
    \bottomrule
\end{tabular}
}
\caption{Classification accuracy (\%) of GFR with multilinear polynomials of degree up to~$2$ versus PCA in synthetic datasets with 20 features.}
\label{table:classification-synthetic-GFR-PCA-20-features}
\end{table}

To show the effect of the number of extracted features on the performance of GFR and PCA, we performed another experiment with a dataset dimension of~$d = 60$, among which~$30$ are independent, and LTF label functions (Table~\ref{table:classification-synthetic-GFR-PCA-60-features}). 
The result show that GFR retains its advantage over PCA even as the number of extracted features grows.

\begin{table*}[!h]
\centering
\begin{tabular}{c | c c c c c c c c c c}
    \toprule
    \multirow{2}{*}{Method} & \multicolumn{10}{c}{Number of Extracted Features} \\
    \cmidrule{2-11}
     & 1 & 2 & 3 & 4 & 5 & 10 & 15 & 20 & 25 & 30 \\
    \midrule
    \rowcolor{gray!20}
    GFR & \textbf{59.26} & \textbf{61.25} & \textbf{63.20} & \textbf{64.60} & \textbf{66.14} & \textbf{71.64} & \textbf{76.62} & \textbf{79.92} & \textbf{82.42} & \textbf{84.28} \\
    PCA & 59.26 & 61.25 & 62.92 & 64.13 & 65.43 & 70.02 & 74.80 & 77.42 & 79.73 & 80.95 \\
    \bottomrule
\end{tabular}
\caption{Classification accuracy (\%) of GFR with multilinear polynomials of degree up to~$2$ against PCA in synthetic datasets with LTF labels and~$60$ features, among which~$30$ are mutually independent and normally distributed, and the remaining ones are randomly chosen monomials of degree~$2$ of the~$30$ independent ones.}
\label{table:classification-synthetic-GFR-PCA-60-features}
\end{table*}

\subsection{GCA's Performance}\label{section:experiments_GCA}
We evaluate GCA on the basis of its success in eliminating all redundant features among the principal components in synthetic datasets. 
Similar to Section~\ref{section:GCA}, suppose~$X=\sum_{i=1}^d\boldrho_iZ_i$.
We let~$Z_{\ell_1},\ldots,Z_{\ell_n}$ for $\ell_1,\ldots,\ell_n\in[d]$ be mutually independent non-redundant principal components chosen from~$\cN(0,\operatorname{Unif}(0,n))$, and let~$Z_{\ell_{n+1}},\ldots,Z_{\ell_d}$ be functions of~$Z_{\ell_1},\ldots,Z_{\ell_n}$, induced by the redundancies as explained shortly. 
The experiments detailed below were repeated~$1000$ times, in each experiment GCA was applied, and its output was compared against the non-redundant principal components.

\subsubsection{Experiments for Corollary~\ref{corollary:zeroRedZeroApprox}}\label{section:experiments-GCA-Corollary1}
We define~$t$ $0$-redundancies using two types of multilinear polynomials up to degree~$2$ or~$3$, i.e.,
\begin{align*}
    R^{(i)}&=Z_{\ell_{i+n}}-k_iZ_{\ell_j}Z_{\ell_k} \text{ or}\\
    R^{(i)}&=Z_{\ell_{i+n}}-k_iZ_{\ell_j}Z_{\ell_k}Z_{\ell_r},
\end{align*}
where~$\ell_j,\ell_k,\ell_r$ are chosen at random, and~$k_i$ is chosen\footnote{For example, it is readily seen that $k_i<\sqrt{\min\{\var(Z_{\ell_j}),\var(Z_{\ell_k})\}/(\var(Z_{\ell_j})\var(Z_{\ell_k}))}$ suffices since the~$Z_{\ell_i}$'s are mutually independent.} so that~$\var(Z_{\ell_{i+n}})$ is smallest among the variables participating in~$R^{(i)}$.

We apply GCA with~$(d,n)=(30,15)$ and~$\cF(\boldz)$ being multilinear polynomials of degree up to~$2$ or~$3$, according to the degree of the~$R^{(i)}$'s mentioned above. 
In our experiments, we vary dataset sizes from $300$ to $2000$ and in each experiment GCA is applied, and its output is compared against the non-redundant principal components. 
We repeat the experiments for different dataset sizes~$1000$ times and the success rate of these experiments, expressed as a percentage, is reported in Table~\ref{table:GCA-corollary-1} for the two different types of redundancies. 
The results show that if the dataset size is large enough (roughly 1000 in these experiments), GCA can remove all redundant principal components from the data. 
However, as the dataset size decreases, the approximation of the covariance matrices degrades. 
This degradation affects GCA's performance, particularly in datasets with redundancies of higher-degree multilinear polynomials. 

To show the effect of dataset dimension on GCA's performance, we also generate synthetic datasets with $d=50$ features, among them $n=25$ are non-redundant principal components. The success rate of GCA in removing redundant principal components are shown in Table~\ref{table:GCA-corollary-1}. It is obvious that as the dimension of datasets increases, more samples are needed to achieve similar performance compared to lower-dimensional datasets. 
We mention that even though we consider $0$-redundancies in these experiments, due to numerical errors, the parameter~$\epsilon$ in line~\ref{line:GCA_E_j} of Algorithm~\ref{algorithm:GCA} is chosen as a small constant $\epsilon=10^{-4}$.

\begin{table*}[!h]
\centering
\begin{tabular}{c | c c c c c c c c c c c c c c}
    \toprule
    \multirow{2}{*}{$\big(d,n,\operatorname{deg}(R^{(i)})\big)$} & \multicolumn{14}{c}{Dataset Size} \\
    \cmidrule{2-15}
     & 300 & 350 & 400 & 450 & 500 & 550 & 600 & 700 & 800 & 900 & 1000 & 1200 & 1500 & 2000 \\
    \midrule
    \rowcolor{gray!20}
    $(30,15,2)$ & 65.5 & 76.0 & 83.5 & 87.5 & 93.6 & 96.2 & 97.6 & 99.0 & 99.4 & 100.0 & 100.0 & 100.0 & 100.0 & 100.0 \\
    $(30,15,3)$ & 0 & 0 & 0.8 & 12.1 & 68.6 & 84.5 & 96.5 & 97.9 & 99.2 & 99.8 & 100.0 & 100.0 & 100.0 & 100.0 \\
    \rowcolor{gray!20}
    $(50,25,2)$ & 5.1 & 32.5 & 47.9 & 62.1 & 72.7 & 75.6 & 82.3 & 92.0 & 95.7 & 98.7 & 99.0 & 99.9 & 100.0 & 100.0 \\
    \bottomrule
\end{tabular}
\caption{The success rate (\%) of GCA in extracting correct non-redundant principal components from $1000$ experiments conducted on synthetic datasets with~$d$ features, where~$n$ features are non-redundant (mutually independent)~$Z_{\ell_1},\ldots,Z_{\ell_n}$, chosen from~$\cN(0,\operatorname{Unif}(0,n))$, and the remaining~$d-n$ features are $0$-redundancies of multilinear polynomials up to degree $\operatorname{deg}(R^{(i)})$ (Corollary~\ref{corollary:zeroRedZeroApprox}).}
\label{table:GCA-corollary-1}
\end{table*}

\subsubsection{Experiments for Corollary~\ref{corollary:some0someeps}}\label{section:experiments-GCA-Corollary2} 
We repeat similar experiments to Section~\ref{section:experiments-GCA-Corollary1} with the addition of noise, i.e.,
\begin{align*}
    R^{(i)}&=Z_{\ell_{i+n}}-k_iZ_{\ell_j}Z_{\ell_k}+\delta_i \text{ or}\\
    R^{(i)}&=Z_{\ell_{i+n}}-k_iZ_{\ell_j}Z_{\ell_k}Z_{\ell_r}+\delta_i,
\end{align*}
with mutually independent~$\delta_i\sim\cN(0,0.1)$.
Again Similar to section~\ref{section:experiments-GCA-Corollary1}, we apply GCA with~$\cF(\boldz)$ being multilinear polynomials of degree up to~$2$ or~$3$.
The parameter~$\epsilon$ in line~\ref{line:GCA_E_j} of Algorithm~\ref{algorithm:GCA} is chosen as the empirical variance of the added noise $\delta_i$ in each experiment. 
The results, which are summarized in Table~\ref{table:GCA-corollary-2}, demonstrate resilience of GCA to stochastic noise.
\begin{table*}[!h]
\centering
\begin{tabular}{c | c c c c c c c c c c c c c c}
    \toprule
    \multirow{2}{*}{$\big(d,n,\operatorname{deg}(R^{(i)})\big)$} & \multicolumn{14}{c}{Dataset Size} \\
    \cmidrule{2-15}
     & 300 & 350 & 400 & 450 & 500 & 550 & 600 & 700 & 800 & 900 & 1000 & 1200 & 1500 & 2000 \\
    \midrule
    \rowcolor{gray!20}
    $(30,15,2)$ & 63 & 68.9 & 75 & 81.5 & 83.2 & 85.9 & 93.9 & 96.8 & 98.2 & 98.3 & 98.7 & 98.8 & 99.9 & 100 \\
    $(30,15,3)$ & 0 & 0 & 1.2 & 25.3 & 74.3 & 83.7 & 85.1 & 88.2 & 89.1 & 91.3 & 94 & 96.9 & 99 & 100 \\
    \rowcolor{gray!20}
    $(50,25,2)$ & 0.5 & 25.5 & 44.2 & 48.6 & 56.3 & 60.1 & 65.8 & 75.1 & 81.2 & 83.7 & 90.5 & 95.3 & 100 & 100 \\
    \bottomrule
\end{tabular}
\caption{The success rate (\%) of GCA in extracting correct non-redundant principal components from $1000$ experiments conducted on synthetic datasets with~$d$ features, where~$n$ features are non-redundant (mutually independent)~$Z_{\ell_1},\ldots,Z_{\ell_n}$, chosen from~$\cN(0,\operatorname{Unif}(0,n))$, and the remaining~$d-n$ features are $\epsilon$-redundancies (stochastic noise) of multilinear polynomials up to degree $\operatorname{deg}(R^{(i)})$ (Corollary~\ref{corollary:some0someeps}).}
\label{table:GCA-corollary-2}
\end{table*}

\subsubsection{Experiments for Corollary~\ref{corollary:deg5polys}}\label{section:experiments-GCA-Corollary3} 
We repeat experiments similar to Section~\ref{section:experiments-GCA-Corollary1}, with an added mismatch between the degree of the~$0$-redundancies and the degree of the polynomials in~$\cF(\boldz)$. That is, we take 
\begin{align*}
    R^{(i)}&=Z_{\ell_{i+n}}-k_iZ_{\ell_j}Z_{\ell_k}Z_{\ell_r}Z_{\ell_a}, \text{ or}\\
    R^{(i)}&=Z_{\ell_{i+n}}-k_iZ_{\ell_j}Z_{\ell_k}Z_{\ell_r}Z_{\ell_a}Z_{\ell_b},
\end{align*}
but apply GCA with~$\cF(\boldz)$ containing multilinear polynomials only up to degree three.

The~$\epsilon$ parameter in GCA is chosen as the maximum value of 
\begin{align*}
    &\bE[(Z_{\ell_{i+n}}-\proj_{\cF(Z_{\ell_j},Z_{\ell_k},Z_{\ell_r},Z_{\ell_a})}Z_{\ell_{i+n}})^2], \text{or}\\
    &\bE[(Z_{\ell_{i+n}}-\proj_{\cF(Z_{\ell_j},Z_{\ell_k},Z_{\ell_r},Z_{\ell_a},Z_{\ell_b})}Z_{\ell_{i+n}})^2],
\end{align*}
respectively.
The results, which are summarized in Table~\ref{table:GCA-corollary-3}, demonstrate resilience of GCA to so-called \textit{deterministic} noise, i.e., the inability of~$\cF(\boldz)$ to represent the redundancies.
\begin{table*}[!h]
\centering
\begin{tabular}{c | c c c c c c c c c c c c c c}
    \toprule
    \multirow{2}{*}{$\big(d,n,\operatorname{deg}(R^{(i)})\big)$} & \multicolumn{14}{c}{Dataset Size} \\
    \cmidrule{2-15}
     & 300 & 350 & 400 & 450 & 500 & 550 & 600 & 700 & 800 & 900 & 1000 & 1200 & 1500 & 2000 \\
    \midrule
    \rowcolor{gray!20}
    $(30,15,4)$ & 0.7 & 28.7 & 47.7 & 50.2 & 59.6 & 63.7 & 70.1 & 73.2 & 82.2 & 87.9 & 91.3 & 95.2 & 97.8 & 100 \\
    $(30,15,5)$ & 0 & 0 & 0 & 11.6 & 37.4 & 45.3 & 60.2 & 62.1 & 74.5 & 79.3 & 89.1 & 90.1 & 93.3 & 99.3 \\
    \rowcolor{gray!20}
    $(50,25,4)$ & 0 & 0 & 0.1 & 13.4 & 42.1 & 46.7 & 63.1 & 67.2 & 78.9 & 81.1 & 90 & 90.1 & 95.2 & 99.5 \\
    \bottomrule
\end{tabular}
\caption{The success rate (\%) of GCA in extracting correct non-redundant principal components from $1000$ experiments conducted on synthetic datasets with~$d$ features, where~$n$ features are non-redundant (mutually independent)~$Z_{\ell_1},\ldots,Z_{\ell_n}$, chosen from~$\cN(0,\operatorname{Unif}(0,n))$, and the remaining~$d-n$ features are $\epsilon$-redundancies (deterministic noise) of multilinear polynomials up to degree $\operatorname{deg}(R^{(i)})$ (Corollary~\ref{corollary:deg5polys}).}
\label{table:GCA-corollary-3}
\end{table*}

\subsection{The Number of Selected Features using GFS}\label{section:experiments_GFS_selected_features}
To validate the performance of GFS, we begin by demonstrating that selecting higher degree multilinear polynomials as~$\cF(\boldz)$ leads to a significant reduction in maximum variance among the entries of~$d_j(X)$ in each iteration (i.e.,~$\max\{\var[d_j(X)_i]\}_{i=1}^d$). 
Consequently, GFS effectively detects and removes more redundant features from the datasets as the complexity of~$\cF(\boldz)$ increases.
To support this claim, we apply GFS with~$\cF(\boldz)$ being multilinear polynomials of degrees at most~$1$, at most~$2$, at most~$3$ and at most~$4$, 
to the benchmark datasets in Table~\ref{table:properties-of-benchmark-datasets}. 
The number of selected features for different values of the threshold~$\epsilon^2$ and degree of the polynomials are shown in Tables~\ref{table:number-of-selected-features-benchmark-datasets}. 
As demonstrated in the experimental results, by increasing the degree of multilinear polynomials, GFS selects fewer features given the same threshold $\epsilon^2$. 
\begin{table}[h]
\centering
\subfloat[USPS Dataset]{%
\begin{tabular}{c | c c c c c c}
    \toprule
    \multirow{2}{*}{\makecell{Degree of \\ Polynomial}} & \multicolumn{6}{c}{$\epsilon^2$} \\
    \cmidrule{2-7}
     & $0.03$ & $0.04$ & $0.05$ & $0.06$ & $0.07$ & $0.08$ \\
    \midrule
    \rowcolor{gray!20}
    1 & 85 & 60 & 48 & 41 & 36 & 29 \\
    2 & 39 & 31 & 25 & 21 & 18 & 16 \\
    \rowcolor{gray!20}
    3 & 25 & 20 & 18 & 16 & 15 & 12\\
    4 & 24 & 17 & 14 & 13 & 12 & 11 \\
    \bottomrule
\end{tabular}
}
\\
\subfloat[COIL-20 Dataset]{%
\begin{tabular}{c | c c c c c}
    \toprule
    \multirow{2}{*}{\makecell{Degree of \\ Polynomial}} & \multicolumn{5}{c}{$\epsilon^2$} \\
    \cmidrule{2-6}
     & $0.01$ & $0.0125$ & $0.015$ & $0.0175$ & $0.02$ \\
    \midrule
    \rowcolor{gray!20}
    1 & 98 & 76 & 60 & 50 & 40  \\
    2 & 36 & 21 & 17 & 15 & 14  \\
    \rowcolor{gray!20}
    3 & 36 & 20 & 13 & 11 & 9  \\
    4 & 31 & 19 & 13 & 10 & 9  \\
    \bottomrule
\end{tabular}
}
\\
\subfloat[Credit Approval Dataset]{%
\begin{tabular}{c | c c}
    \toprule
    \multirow{2}{*}{\makecell{Degree of \\ Polynomial}} & \multicolumn{2}{c}{$\epsilon^2$} \\
    \cmidrule{2-3}
     & $0.1$ & $0.4$ \\
    \midrule 
    \rowcolor{gray!20}
    1 & 14 & 13  \\
    2 & 13 & 12  \\
    \rowcolor{gray!20}
    3 & 13 & 12  \\
    4 & 13 & 12  \\
    \bottomrule
\end{tabular}
}
\caption{Number of selected features by GFS with multilinear polynomials on benchmark datasets for different values of the threshold,~$\epsilon^2$.}
\label{table:number-of-selected-features-benchmark-datasets}
\end{table}

\subsection{Classification Accuracy of GFS}\label{section:experiments_GFS_classification_accuracy} 
We turn to validate the performance of GFS for classification tasks on the benchmark datasets in Table~\ref{table:properties-of-benchmark-datasets}. 
We applied GFS with multilinear polynomial of degree up to~$4$ against several well-known feature selection algorithms: Multi-Cluster Feature Selection (MCFS)~\cite{cai2010unsupervised}, Nonnegative Discriminative Feature Selection (NDFS)~\cite{li2012unsupervised}, Unsupervised Discriminative Feature Selection (UDFS)~\cite{yang2011l2}, Laplacian Score (LS)~\cite{he2005laplacian}, Trace Ratio (TR)~\cite{nie2008trace}, and Fisher Score (FS)~\cite{duda2006pattern}. 

We used a support vector machine classifier with radial basis function as kernel, and 5-fold cross-validation on the entire dataset to validate the performance of the algorithms. 
To implement MCFS, NDFS, UDFS, LS, TR, and FS, we used \texttt{skfeature-chappers} package.
The results, given in Table~\ref{table:GFS-classification-accuracy-benchmark-datasets}, demonstrate superior performance in terms of  classification accuracy in the benchmark datasets in comparison to other algorithms.
\begin{table}[h]
\centering
\subfloat[USPS Dataset]{%
\begin{tabular}{c | c c c c c c}
    \toprule
    \multirow{2}{*}{Method} & \multicolumn{6}{c}{Number of Selected Features} \\
    \cmidrule{2-7}
     & 24 & 17 & 14 & 13 & 12 & 11 \\
    \midrule
    \rowcolor{gray!20}
    GFS & \textbf{92.62} & \textbf{89.27} & \textbf{85.10} & \textbf{83.73} & \textbf{82.25} & \textbf{79.60} \\
    MCFS & 90.99 & 87.37 & 84.39 & 82.81 & 79.76 & 75.13 \\
    \rowcolor{gray!20}
    NDFS & 89.48 & 82.84 & 78.38 & 73.98 & 72.92 & 69.98 \\
    UDFS & 86.37 & 78.81 & 76.56 & 75.71 & 73.06 & 69.03 \\
    \rowcolor{gray!20}
    LS & 85.21 & 80.94 & 76.59 & 75.59 & 74.12 & 73.36 \\
    TR & 83.87 & 78.48 & 74.52 & 72.10 & 71.81 & 68.71 \\
    \rowcolor{gray!20}
    FS & 83.87 & 78.93 & 74.39 & 74.02 & 71.81 & 68.71 \\
    \bottomrule
\end{tabular}
}
\hspace{30pt}
\subfloat[COIL-20 Dataset]{%
\begin{tabular}{c | c c c c c}
    \toprule
    \multirow{2}{*}{Method}  & \multicolumn{5}{c}{Number of Selected Features} \\
    \cmidrule{2-6}
     & 31 & 19 & 13 & 10 & 9 \\
    \midrule
    \rowcolor{gray!20}
    GFS & \textbf{90.76} & \textbf{87.08} & \textbf{84.72} & \textbf{81.18} & \textbf{78.19} \\
    MCFS & 79.03 & 77.57 & 67.57 & 61.18 & 59.93 \\
    \rowcolor{gray!20}
    NDFS & 87.78 & 83.89 & 81.94 & 77.64 & 75.76 \\
    UDFS & 69.44 & 63.47 & 58.75 & 53.82 & 52.36 \\
    \rowcolor{gray!20}
    LS & 68.47 & 65.28 & 57.08 & 49.86 & 48.40 \\
    TR & 66.52 & 57.85 & 43.75 & 41.25 & 39.10 \\
    \rowcolor{gray!20}
    FS & 59.23 & 56.32 & 43.02 & 41.20 & 38.90 \\
    \bottomrule
\end{tabular}
}
\\
\subfloat[Credit Approval Dataset]{%
\begin{tabular}{c | c c}
    \toprule
    \multirow{2}{*}{Method}  & \multicolumn{2}{c}{Number of Selected Features} \\
    \cmidrule{2-3}
     & 13 & 12 \\
    \midrule 
    \rowcolor{gray!20}
    GFS & 84.20 ($2^{nd}$ best) & 83.91 ($2^{nd}$ best) \\
    MCFS & \textbf{84.34} & 83.62 \\
    \rowcolor{gray!20}
    NDFS & 83.62 & 83.47 \\
    UDFS & 75.79 & 75.22 \\
    \rowcolor{gray!20}
    LS & 84.05 & \textbf{84.20} \\
    TR & 75.79 & 75.51 \\
    \rowcolor{gray!20}
    FS & 75.80 & 75.51 \\
    \bottomrule
\end{tabular}
}
\caption{Classification Accuracy (\%) of GFS with~$\cF(\boldz)$ multilinear polynomials of degree up to 4 versus MCFS~\cite{cai2010unsupervised}, NDFS~\cite{li2012unsupervised}, UDFS~\cite{yang2011l2}, LS~\cite{he2005laplacian}, TR~\cite{nie2008trace}, FS~\cite{duda2006pattern} on benchmark datasets.}
\label{table:GFS-classification-accuracy-benchmark-datasets}
\end{table}

\subsection{Comparison between GFS and UFFS~\cite{heidari2022sufficiently}}\label{section:experiments_GFS_vs_UFFS}
As mentioned earlier (and shown in detail in Appendix~\ref{section:previousFourier}), GFS generalizes the Unsupervised Fourier Feature Selection (UFFS) algorithm of~\cite{heidari2022sufficiently} at significantly reduced complexity. Specifically, UFFS arises as a special case of GFS when~$\cF(\boldz)$ is chosen as multilinear polynomials up to degree~$d$. 
The reduction in complexity is due to our step-by-step orthogonalization process (Algorithm~\ref{algorithm:orthogonalize}), in contrast to UFFS which orthogonalizes all multilinear polynomials. 
We corroborate GFS's superiority over UFFS in terms of running time, the ability to capture redundant features, and classification accuracy over synthetic datasets. 
The synthetic datasets we use in these experiments include~$15$ independent normally distributed random variables $X_{\ell_1},\ldots,X_{\ell_{15}}\sim\cN(0,\operatorname{Unif}(0.5,1))$\footnote{We use the lower bound~$0.5$ in the variance of independent~$X_{\ell_i}$'s to make sure their variance is not very small and not considered as redundant in our analysis.} and~$15$ redundant features $X_{\ell_{16}},\ldots,X_{\ell_{30}}$, which are randomly and uniformly taken from $\{X_{\ell_i}X_{\ell_j}\}_{\mbox{\footnotesize distinct }i,j\in[15]}\cup\{X_{\ell_i}X_{\ell_j}X_{\ell_k}\}_{\mbox{\footnotesize distinct }i,j,k\in[15]}$; the integers~$\ell_1,\ldots,\ell_{30}$ are a random permutation of~$1,\ldots,30$.
We also consider 1000 sample points in each dataset. 
In all the following comparisons between GFS and UFFS, we choose multilinear polynomials of degree up to three as the function family~$\cF(\boldz)$ in GFS, and in UFFS we applied the heuristic proposed in~\cite{heidari2022sufficiently} to orthogonalize multilinear polynomials only up to degree three\footnote{Technically, UFFS requires one to orthogonalize \textit{all} multilinear polynomials up to degree~$30$, and as a heuristic,~\cite{heidari2022sufficiently} proposed to orthogonalize only up to a lower degree. Clearly, orthogonalizing all~$2^{30}$ multilinear polynomials to execute UFFS in its entirety would result in infeasible run times.}. 
In the experimental section of~\cite{heidari2022sufficiently}, the features are standardized\footnote{I.e., the mean vector is subtracted from all points of the data, and then every feature is divided by its variance. This results in data which has mean zero, and every feature has variance~$1$.} before applying UFFS. We apply both GFS and UFFS to the synthetic datasets and their standardized versions. 
\subsubsection{Running Time} 
In Table~\ref{table:processing-time}, the running time ratio of UFFS and GFS over synthetic datasets is compared for the non-standardized datasets and their standardized versions. We repeat the experiments for 1000 different synthetic datasets and report the average values in Table~\ref{table:processing-time}. The results show that on average GFS is about~$22$ times faster than UFFS for the chosen parameters. 
\begin{table}[!h]
\centering
\begin{tabular}{c !{\vrule} c | c}
    \toprule
    & \makecell{GFS over \\ non-standardized \\ datasets} & \makecell{GFS over \\ standardized \\ datasets} \\
    \midrule 
    \makecell{UFFS over \\ non-standardized \\ datasets} & 20.92 & 6.99 \\
    \midrule
    \makecell{UFFS over \\ standardized \\ datasets} & 22.11 & 7.39 \\
    \bottomrule
\end{tabular}
\caption{Running time ratio of GFS vs. UFFS, i.e. $t_{\text{UFFS}}/t_{\text{GFS}}$, for non-standardized and standardized synthetic datasets with $30$ features, where $15$ features are non-redundant (mutually independent) $X_{\ell_1},\ldots,X_{\ell_{15}}$ chosen from~$\cN(0,\operatorname{Unif}(0.5,1))$, and the remaining $15$ features are multilinear polynomials of these non-redundant features up to degree $3$.}
\label{table:processing-time}
\end{table}

\subsubsection{Capturing Redundant Features} 
In this section, we demonstrate that GFS can capture nonlinear redundancies better than UFFS. We perform 1000 experiments over synthetic datasets and in each of them we apply GFS and UFFS with the threshold~$\epsilon^2=0.01$ in line~\ref{line:GFS_epsilon} of Algorithm~\ref{algorithm:GFS}. 
Figure~\ref{fig:gfs_uffs_selected_features} shows the histograms of the number of non-redundant features selected by GFS and UFFS (out of~$1000$ experiments) on both non-standardized and standardized synthetic datasets. 
The results indicate that GFS selects fewer non-redundant features than UFFS on both non-standardized and standardized datasets, suggesting that GFS is more effective at removing redundant features compared to UFFS.
In~\cite{heidari2022sufficiently}, the authors apply UFFS three times and report the results after the third round. For a fair comparison, we apply both GFS and UFFS only once. 
Another insight from these simulations is the sensitivity of UFFS to the order of features. 
To validate this claim, we set the non-redundant features as the first~$15$ features in the datasets, i.e., $X_{\ell_1}=X_1,\ldots,X_{\ell_{15}}=X_{15}$, and repeated the experiments 1000 times. 
In all cases, UFFS successfully selected the non-redundant features and removed the redundant ones. However, in real datasets, the order is unknown. 
GFS, on the other hand, is not sensitive to the order of features because it uses a step-by-step orthogonalization process. Moreover, as seen in Figures~\ref{fig:gfs_uffs_selected_features_c}-\ref{fig:gfs_uffs_selected_features_d}, standardization does not affect UFFS's performance, as it starts from a random feature and orthogonalizes all multilinear polynomials up to degree 3. However, in Figures~\ref{fig:gfs_uffs_selected_features_a}-\ref{fig:gfs_uffs_selected_features_b} it is evident that standardization affects GFS's performance. GFS begins with the highest variant feature and uses step-by-step orthogonalization to select the most informative features. When the features are standardized, they are supposed to have unit variances. However, due to computational errors, one of them may end up with a higher variance, even if it is initially one of the features with the smallest variance. This discrepancy is a result of the standardization process. It impacts the rest of the orthogonalization process, causing some redundant features to be selected as non-redundant. Figures~\ref{fig:gfs_uffs_selected_features_b}-\ref{fig:gfs_uffs_selected_features_c} show that even in standardized datasets, GFS outperforms UFFS in terms of capturing nonlinear redundant features.

\begin{figure*}[!h] 
\centering
\subfloat[\label{fig:gfs_uffs_selected_features_a}]{%
\includegraphics[width=0.25\linewidth]{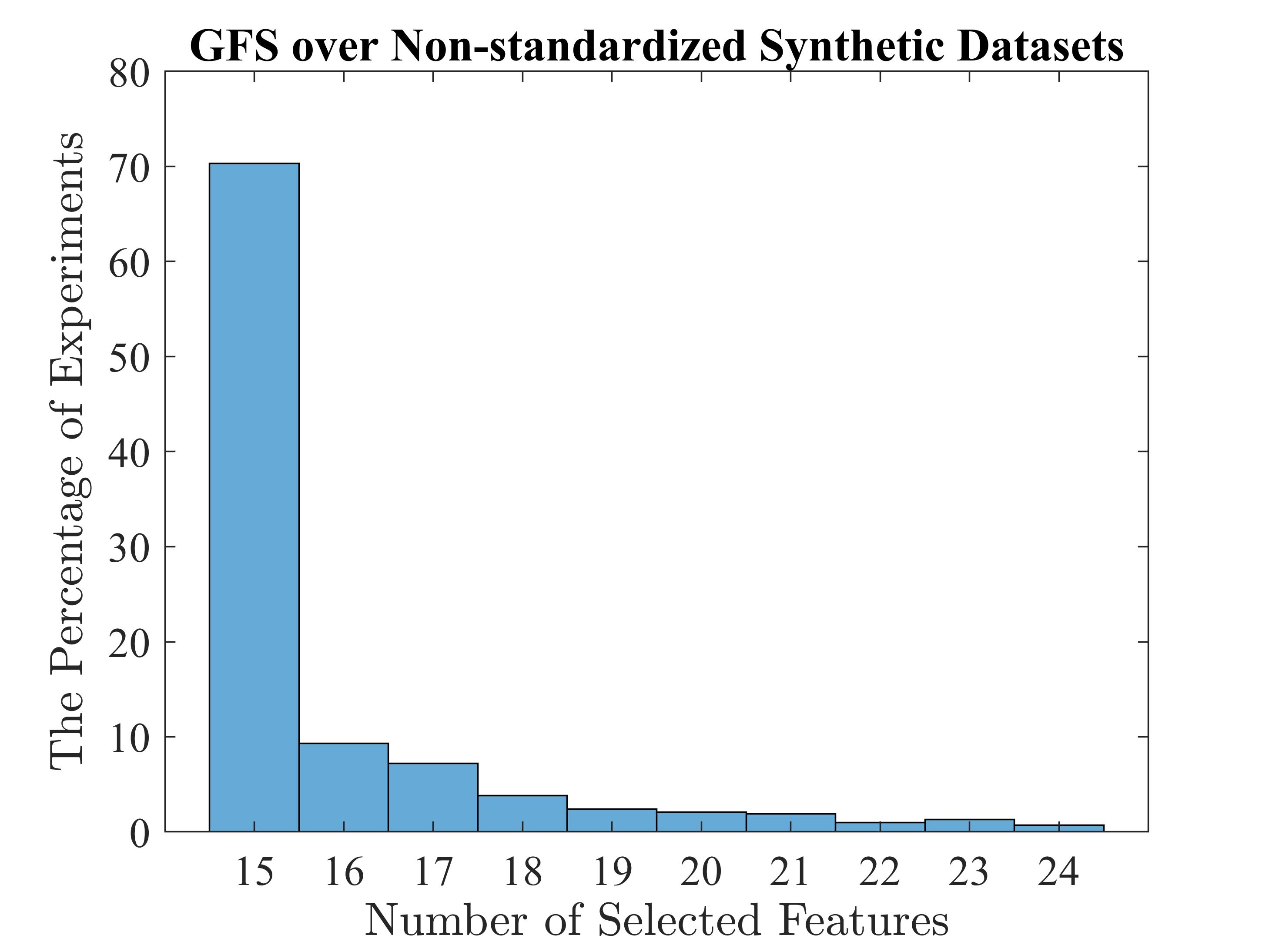}}
\hfill
\subfloat[\label{fig:gfs_uffs_selected_features_b}]{%
\includegraphics[width=0.25\linewidth]{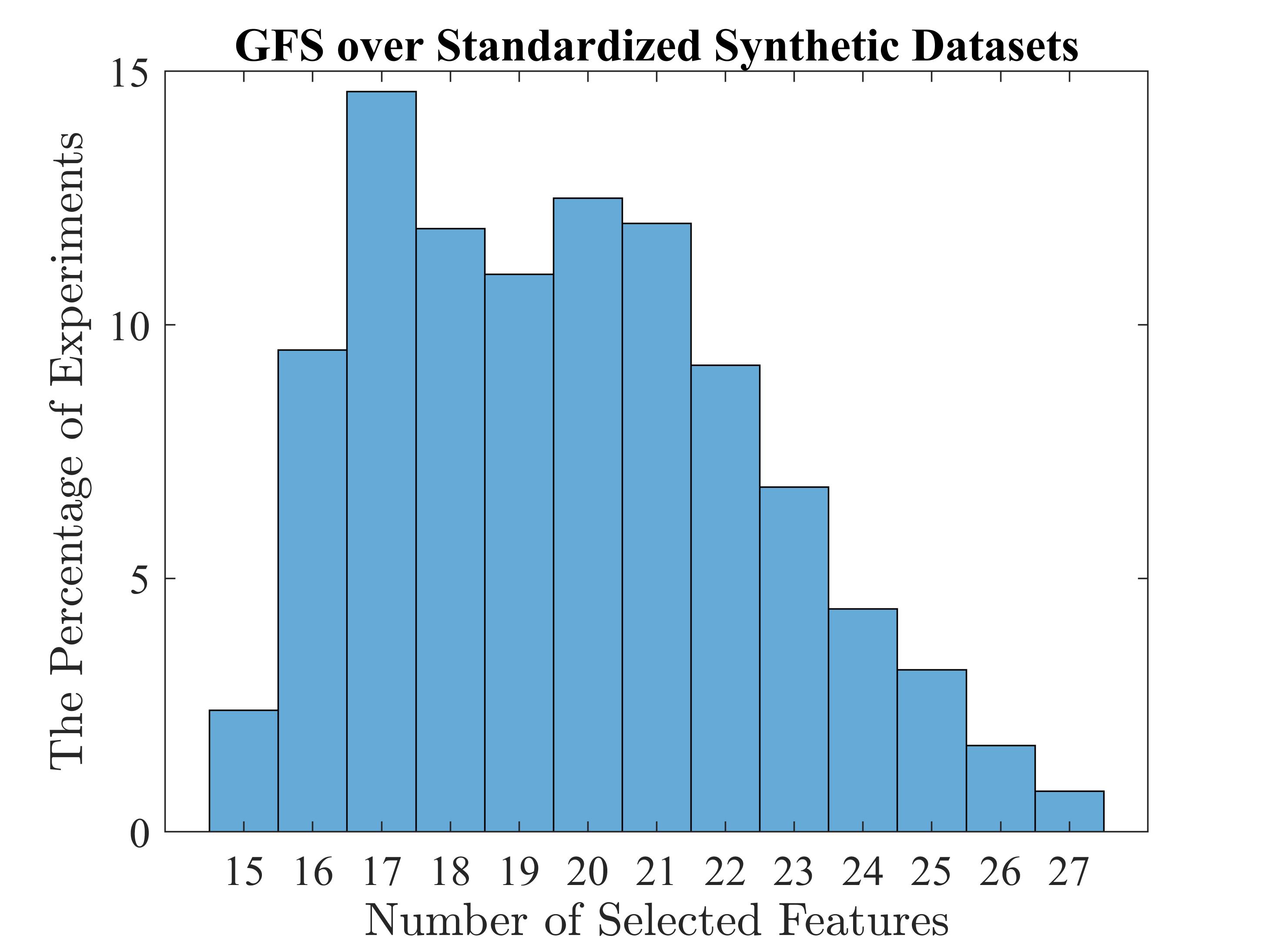}}
\hfill
\subfloat[\label{fig:gfs_uffs_selected_features_c}]{%
\includegraphics[width=0.25\linewidth]{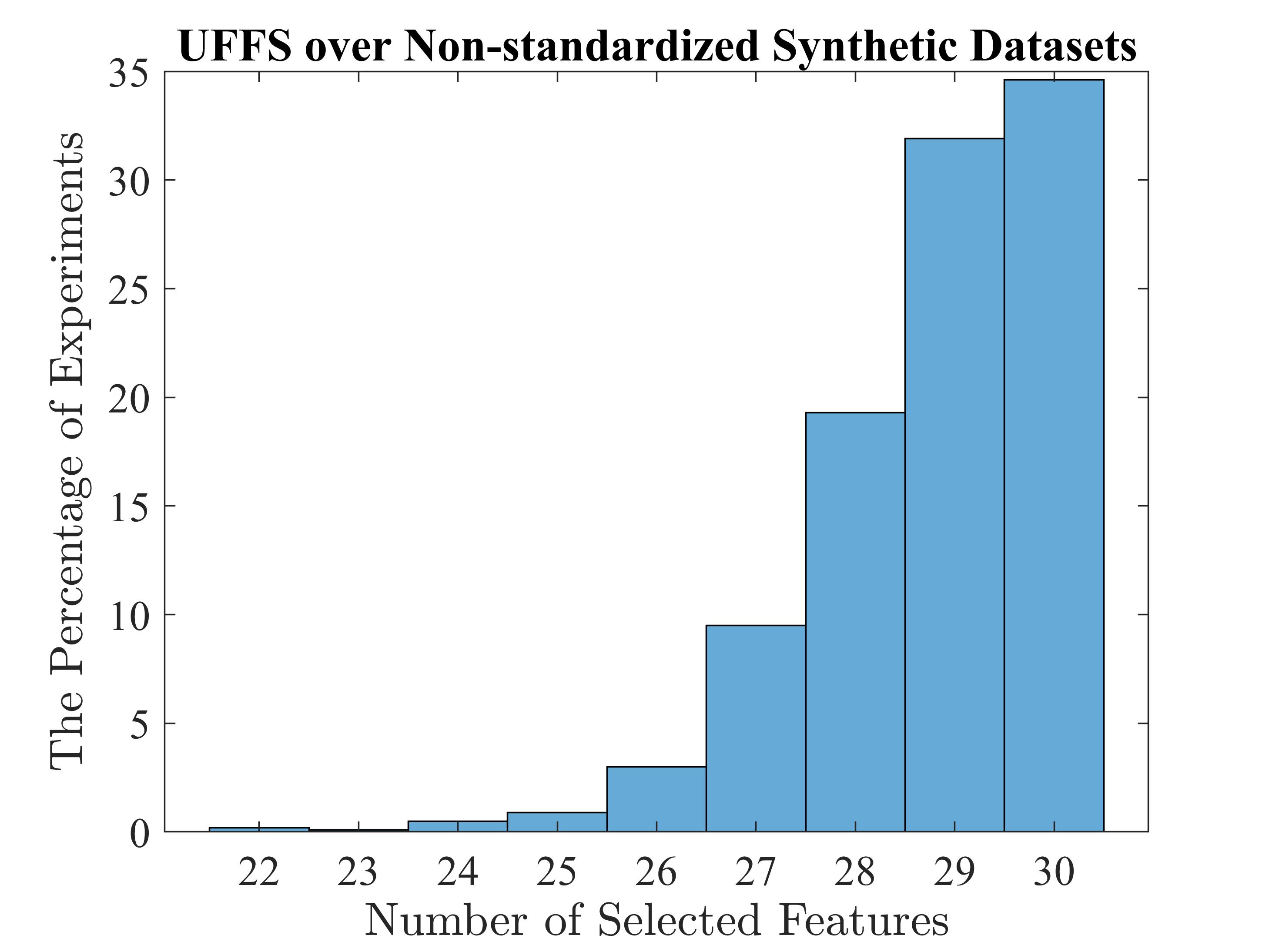}}
\hfill
\subfloat[\label{fig:gfs_uffs_selected_features_d}]{%
\includegraphics[width=0.25\linewidth]{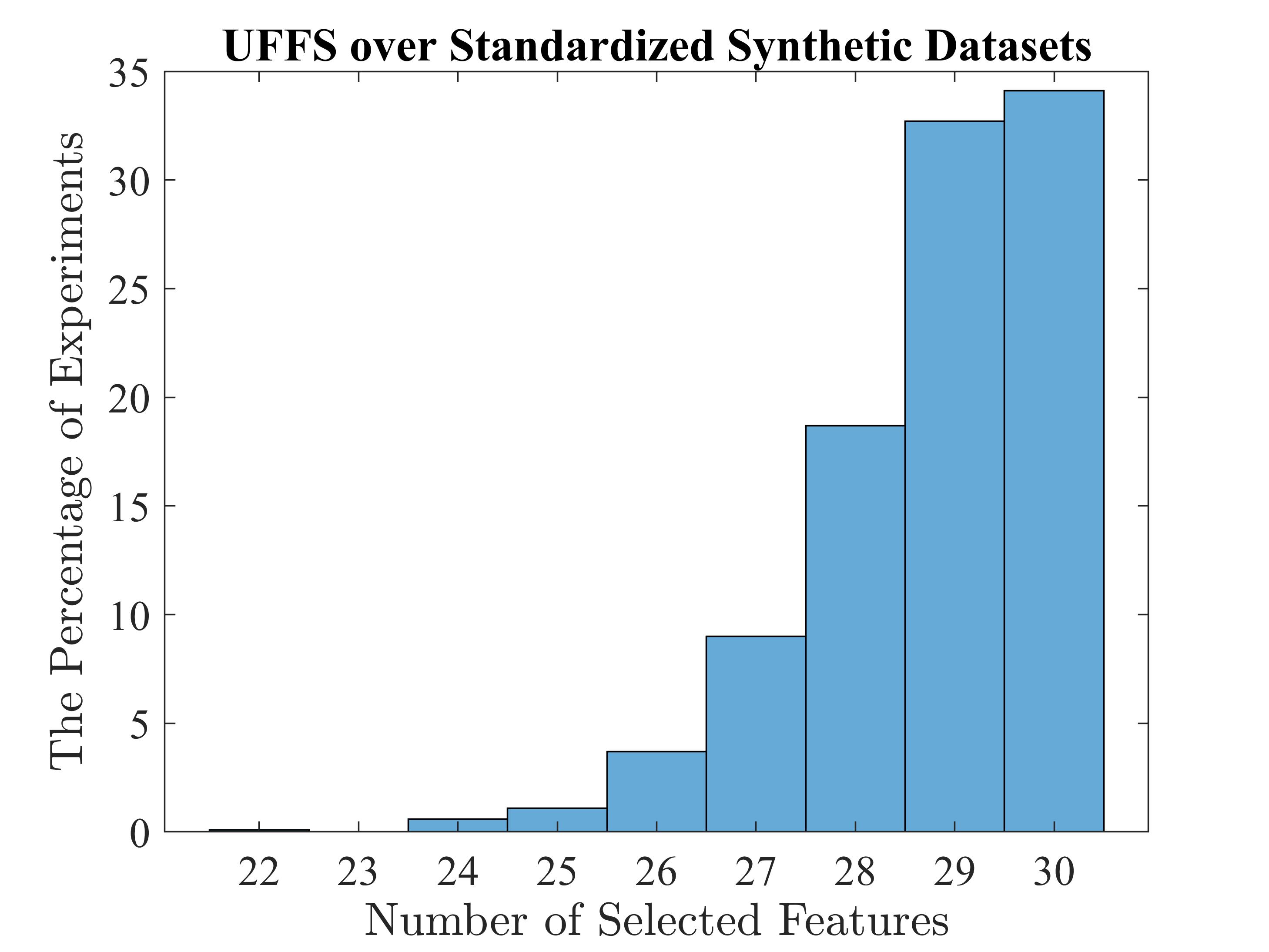}}
\caption{The percentage of experiments, out of 1000, corresponding to the number of non-redundant features selected by GFS and UFFS over synthetic datasets with $30$ features, where~$15$ features are non-redundant (mutually independent)~$X_{\ell_1},\ldots,X_{\ell_{15}}$ and the remaining~$15$ features are multilinear polynomials of these non-redundant features up to degree 3. 
(a) GFS over non-standardized datasets; (b) GFS over standardized datasets; (c) UFFS over non-standardized datasets; (d) UFFS over standardized datasets.}
\label{fig:gfs_uffs_selected_features}
\end{figure*}

\subsubsection{Classification Accuracy}
In this section, we compare GFS and UFFS in terms of classification accuracy in synthetic datasets. The main goal of these experiments is to show that GFS outperforms UFFS in datasets with nonlinear redundancies. To evaluate their performance, we use a commone method in unsupervied feature selection, where we assign labels to the data points but never use them in the feature selection procedure. After selecting non-redundant features, we use the labels for computing the error of binary classification tasks. To generate labels, we use the following polynomial threshold functions
\begin{align*}
    \textstyle f_1(X_{\ell_1},\ldots,X_{\ell_{15}}) = \operatorname{sign}\left[ \prod_{1\leq j \leq 2} \left(b_{0,j} + \sum_{i=1}^{15} b_{i,j} X_{\ell_i} \right) \right]
\end{align*}
or
\begin{align*}
    \textstyle f_2(X_{\ell_1},\ldots,X_{\ell_{15}}) = \operatorname{sign}\left[\prod_{1\leq j \leq 3} \left(b_{0,j} + \sum_{i=1}^{15} b_{i,j} X_{\ell_i} \right) \right],
\end{align*}
where $b_{i,j}\sim\operatorname{Unif}(0,1)$ and mutually independent. 
In all the experiments, GFS selects fewer features than UFFS. To have a fair comparison, we select an identical number of features using GFS and UFFS. Since UFFS does not rank the features, we complete the orthogonalization process for all features and then select the ones whose orthogonalized versions exhibit the greatest variance. 
We apply a support vector machine classifier with a radial basis function as kernel. $5$-fold cross-validation on the entire datasets is used and the results are shown in Table~\ref{table:GFS-UFFS-classification-accuracy} which show that GFS outperforms UFFS in terms of classification accuracy. 
\begin{table}[!h]
\centering
\begin{tabular}{ c | c c } 
    \toprule
    \multirow{2}{*}{Method} & \multicolumn{2}{c}{Labels} \\
    \cmidrule{2-3}
     & $f_1$ & $f_2$ \\
    \midrule
    \rowcolor{gray!20}
    GFS & 74.33 & 61.08 \\ 
    UFFS & 67.22 & 57.26 \\
    \bottomrule
\end{tabular}
\caption{Classification accuracy (\%) of GFS vs. UFFS over synthetic datasets with $30$ features, where~$15$ features are non-redundant (mutually independent)~$X_{\ell_{1}},\ldots,X_{\ell_{15}}$ and the remaining~$15$ features are multilinear polynomials up to degree 3.}
\label{table:GFS-UFFS-classification-accuracy}
\end{table}

\subsection{GFA's Performance}\label{section:experiments_GFA}
We evaluate GFA on the basis of its success in removing all redundant features in synthetic datasets. 
Similar to Section~\ref{section:GFA}, suppose~$X=(X_1,\ldots,X_d)^\intercal$. 
We let~$X_{\ell_1},\ldots,X_{\ell_n}$ for $\ell_1,\ldots,\ell_n\in[d]$ be mutually independent non-redundant features chosen from~$\cN(0,\operatorname{Unif}(0,n))$, and let~$X_{\ell_{n+1}},\ldots,X_{\ell_d}$ be functions of~$X_{\ell_1},\ldots,X_{\ell_n}$, induced by the redundancies as explained shortly. 
The experiments detailed below were repeated~$1000$ times, in each experiment GFA was applied, and its output was compared against the non-redundant features.

\subsubsection{Experiments for Corollary~\ref{corollary:zeroRedZeroApprox}}\label{section:experiments-GFA-Corollary1}
Similar to section~\ref{section:experiments-GCA-Corollary1}, We define~$t$ $0$-redundancies using two types of multilinear polynomials up to degree~$2$ or~$3$, i.e., 
\begin{align*}
    R^{(i)}&=X_{\ell_{i+n}}-k_iX_{\ell_j}X_{\ell_k} \text{ or}\\
    R^{(i)}&=X_{\ell_{i+n}}-k_iX_{\ell_j}X_{\ell_k}X_{\ell_r},
\end{align*}
where~$\ell_j,\ell_k,\ell_r$ are chosen at random, and~$k_i$ is chosen\footnote{For example, it is readily seen that $k_i<\sqrt{\min\{\var(X_{\ell_j}),\var(X_{\ell_k})\}/(\var(X_{\ell_j})\var(X_{\ell_k}))}$ suffices since the~$X_{\ell_i}$'s are mutually independent.} so that~$\var(X_{\ell_{i+n}})$ is smallest among the variables participating in~$R^{(i)}$.

We apply GFA with~$(d,n)=(30,15)$ and~$\cF(\boldz)$ being multilinear polynomials of degree up to~$2$ or~$3$, according to the degree of the~$R^{(i)}$'s mentioned above. 
In our experiments, we vary dataset sizes from $300$ to $2000$ and in each experiment GFA is applied, and its output is compared against the non-redundant features. 
We repeat the experiments for different dataset sizes~$1000$ times and the success rate of these experiments, expressed as a percentage, is reported in Table~\ref{table:GFA-corollary-1} for the two different types of redundancies. 
The results show that if the dataset size is large enough (roughly 800 in these experiments), GFA can remove all redundant features from the data. 
However, as the dataset size decreases, the approximation of the covariance matrices degrades. 
This degradation affects GFA's performance, particularly in datasets with redundancies of higher-degree multilinear polynomials. 

To show the effect of dataset dimension on GFA's performance, we also generate synthetic datasets with $d=50$ features, among them $n=25$ are non-redundant features. The success rate of GFA in removing redundant features is shown in Table~\ref{table:GFA-corollary-1}. It is obvious that as the dimension of datasets increases, more samples are needed to achieve similar performance compared to lower-dimensional datasets. 
\begin{table*}[!h]
\centering
\begin{tabular}{c | c c c c c c c c c c c c c c}
    \toprule
    \multirow{2}{*}{$\big(d,n,\operatorname{deg}(R^{(i)})\big)$} & \multicolumn{14}{c}{Dataset Size} \\
    \cmidrule{2-15}
     & 300 & 350 & 400 & 450 & 500 & 550 & 600 & 700 & 800 & 900 & 1000 & 1200 & 1500 & 2000 \\
    \midrule
    \rowcolor{gray!20}
    $(30,15,2)$ & 98.9 & 99.8 & 99.9 & 100 & 100 & 100 & 100 & 100 & 100 & 100 & 100 & 100 & 100 & 100 \\
    $(30,15,3)$ & 0 & 0.5 & 23.6 & 58.5 & 90.1 & 100 & 100 & 100 & 100 & 100 & 100 & 100 & 100 & 100 \\
    \rowcolor{gray!20}
    $(50,25,2)$ & 12.3 & 57 & 76.2 & 88.9 & 96.1 & 97.9 & 99.7 & 99.8 & 100 & 100 & 100 & 100 & 100 & 100 \\
    \bottomrule
\end{tabular}
\caption{The success rate (\%) of GFA in selecting correct non-redundant features from $1000$ experiments conducted on synthetic datasets with~$d$ features, where~$n$ features are non-redundant (mutually independent)~$X_{\ell_1},\ldots,X_{\ell_n}$, chosen from~$\cN(0,\operatorname{Unif}(0,n))$, and the remaining~$d-n$ features are $0$-redundancies of multilinear polynomials up to degree $\operatorname{deg}(R^{(i)})$ (Corollary~\ref{corollary:zeroRedZeroApprox}).}
\label{table:GFA-corollary-1}
\end{table*}

\subsubsection{Experiments for Corollary~\ref{corollary:some0someeps}} 
We repeat similar experiments to Section~\ref{section:experiments-GFA-Corollary1} with the addition of noise, i.e.,
\begin{align*}
    R^{(i)}&=X_{\ell_{i+n}}-k_iX_{\ell_j}X_{\ell_k}+\delta_i \text{ or}\\
    R^{(i)}&=X_{\ell_{i+n}}-k_iX_{\ell_j}X_{\ell_k}X_{\ell_r}+\delta_i,
\end{align*}
with mutually independent~$\delta_i\sim\cN(0,0.1)$.
Again Similar to section~\ref{section:experiments-GFA-Corollary1}, we apply GFA with~$\cF(\boldz)$ being multilinear polynomials of degree up to~$2$ or~$3$.
The parameter~$\epsilon$ in Algorithm~\ref{algorithm:GFA} is chosen as the empirical variance of the added noise $\delta_i$ in each experiment.  
The results, which are summarized in Table~\ref{table:GFA-corollary-2}, demonstrate resilience of GFA to stochastic noise. 
\begin{table*}[!h]
\centering
\begin{tabular}{c | c c c c c c c c c c c c c c}
    \toprule
    \multirow{2}{*}{$\big(d,n,\operatorname{deg}(R^{(i)})\big)$} & \multicolumn{14}{c}{Dataset Size} \\
    \cmidrule{2-15}
     & 300 & 350 & 400 & 450 & 500 & 550 & 600 & 700 & 800 & 900 & 1000 & 1200 & 1500 & 2000 \\
    \midrule
    \rowcolor{gray!20}
    $(30,15,2)$ & 82.9 & 86.6 & 88.6 & 90.8 & 92.8 & 94 & 94.4 & 96.6 & 97 & 97.9 & 98.8 & 98.8 & 99.5 & 99.9 \\
    $(30,15,3)$ & 1.24 & 2.5 & 27.3 & 60.1 & 72.1 & 79.3 & 80.8 & 81.7 & 82.3 & 83.8 & 90.9 & 96 & 98.8 & 99.8 \\
    \rowcolor{gray!20}
    $(50,25,2)$ & 8.7 & 43.2 & 52.8 & 63.8 & 72.2 & 77.1 & 82.2 & 86.9 & 91.8 & 93.9 & 94.9 & 96.2 & 99.1 & 99.8 \\
    \bottomrule
\end{tabular}
\caption{The success rate (\%) of GFA in selecting correct non-redundant features from $1000$ experiments conducted on synthetic datasets with~$d$ features, where~$n$ features are non-redundant (mutually independent)~$X_{\ell_1},\ldots,X_{\ell_n}$, chosen from~$\cN(0,\operatorname{Unif}(0,n))$, and the remaining~$d-n$ features are $\epsilon$-redundancies (stochastic noise) of multilinear polynomials up to degree $\operatorname{deg}(R^{(i)})$ (Corollary~\ref{corollary:some0someeps}).}
\label{table:GFA-corollary-2}
\end{table*}

\subsubsection{Experiments for Corollary~\ref{corollary:deg5polys}} 
We repeat experiments similar to Section~\ref{section:experiments-GFA-Corollary1}, with an added mismatch between the degree of the~$0$-redundancies and the degree of the polynomials in~$\cF(\boldz)$. That is, we take 
\begin{align*}
    R^{(i)}&=X_{\ell_{i+n}}-k_iX_{\ell_j}X_{\ell_k}X_{\ell_r}X_{\ell_a}, \text{ or}\\
    R^{(i)}&=X_{\ell_{i+n}}-k_iX_{\ell_j}X_{\ell_k}X_{\ell_r}X_{\ell_a}X_{\ell_b},
\end{align*}
but apply GFA with~$\cF(\boldz)$ containing multilinear polynomials only up to degree three.
The~$\epsilon$ parameter in GFA is chosen as the maximum value of 
\begin{align*}
    &\bE[(X_{\ell_{i+n}}- \proj_{\cF(X_{\ell_j},X_{\ell_k},X_{\ell_r},X_{\ell_a})}X_{\ell_{i+n}})^2], \text{or}\\
    &\bE[(X_{\ell_{i+n}}-\proj_{\cF(X_{\ell_j},X_{\ell_k},X_{\ell_r},X_{\ell_a},X_{\ell_b})}X_{\ell_{i+n}})^2],
\end{align*}
respectively.
The results, which are summarized in Table~\ref{table:GFA-corollary-3}, demonstrate resilience of GFA to so-called \textit{deterministic} noise, i.e., the inability of~$\cF(\boldz)$ to represent the redundancies.
\begin{table*}[!h]
\centering
\begin{tabular}{c | c c c c c c c c c c c c c c}
    \toprule
    \multirow{2}{*}{$\big(d,n,\operatorname{deg}(R^{(i)})\big)$} & \multicolumn{14}{c}{Dataset Size} \\
    \cmidrule{2-15}
     & 300 & 350 & 400 & 450 & 500 & 550 & 600 & 700 & 800 & 900 & 1000 & 1200 & 1500 & 2000 \\
    \midrule
    \rowcolor{gray!20}
    $(30,15,4)$ & 20.3 & 38.4 & 62.1 & 55.8 & 64.7 & 69.1 & 81.2 & 83.2 & 86.2 & 91.1 & 94.7 & 97.2 & 99.4 & 100 \\
    $(30,15,5)$ & 0 & 3.4 & 13.5 & 22.4 & 37.4 & 49.3 & 62.5 & 63.4 & 78.9 & 83.4 & 89.9 & 91.7 & 94.3 & 98.9 \\
    \rowcolor{gray!20}
    $(50,25,4)$ & 10.1 & 10.2 & 11.2 & 32.3 & 48.3 & 51.1 & 67.4 & 72.1 & 82.2 & 84.3 & 92.3 & 93.1 & 95.6 & 99.7 \\
    \bottomrule
\end{tabular}
\caption{The success rate (\%) of GFA in selecting correct non-redundant features from $1000$ experiments conducted on synthetic datasets with~$d$ features, where~$n$ features are non-redundant (mutually independent)~$X_{\ell_1},\ldots,X_{\ell_n}$, chosen from~$\cN(0,\operatorname{Unif}(0,n))$, and the remaining~$d-n$ features are $\epsilon$-redundancies (deterministic noise) of multilinear polynomials up to degree $\operatorname{deg}(R^{(i)})$ (Corollary~\ref{corollary:deg5polys}).}
\label{table:GFA-corollary-3}
\end{table*}

\section{Discussion and Future Work}\label{section:discussion}

The algorithms presented herein rely on an orthogonalization process of an arbitrary function family~$\cF(\boldz)$, whose variables are substituted one-by-one with functions of the data random variable~$X$. 
While we pose few constraints on the choice of~$\cF(\boldz)$, it is evident that informative dimension reduction strongly relies on choosing a ``proper'' function family for the data distribution.

The choice of~$\cF(\boldz)$ should correspond to a prior belief that the redundancies can be described, or can be well-approximated by, the functions in~$\Span\cF(\boldz)$. 
Several clear interpretations of the required correspondence between~$\cF(\boldz)$ and~$X$ are given in Section~\ref{section:GCA} (also Section~\ref{section:GFA}).
 
More broadly, the choice of~$\cF(\boldz)$ is tightly connected to Fourier analysis, an area which studies representation of functions over orthogonal bases (w.r.t some underlying distribution). 
For example, it is known~\cite{o2014analysis} that under the uniform distribution over the discrete hypercube~$\{\pm1\}^t$, the set of multilinear polynomials in~$t$ variables is an orthonormal basis to the set of functions of the form~$f:\{\pm1\}^t\to \bR$; cases where the distribution is not uniform can be treated using the GS process. 
Continuous variants exist (e.g., Hermite polynomials for Gaussian distribution), and the interested reader is referred to~\cite{ismail2017review}. 
Therefore, it is reasonable to assume that letting~$\cF(\boldz)$ be low-degree polynomials is a safe choice, and this is very clearly demonstrated in our experimental section. 
However, one might also suggest using Fourier basis, which is left for future work.

We once again emphasize the different approach taken in GFR and GFS vs. the approach taken in GCA and GFA.
In GFR and GFS, we extract/select variance maximizers of the \textit{reduced} data random variable~$d_j(X)$, whereas in GCA and GFA we merely use the~$d_j(X)$ to identify which features/components need to be subtracted from~$X$ itself. 
The latter enables a finer understanding of the exact structure of data distributions on which it is effective, and its correspondence with~$\cF(\boldz)$.
For the former we are not able to provide such analyses, but we are able to provide entropy reduction guarantees.

For future work we propose to study more closely which redundancy structures can be accommodated using GFR, GFS, and potentially, waive the requirement for discrete alphabets in their entropy reduction guarantees. 
Second, we implicitly assumed that expectations can be computed exactly, which is clearly not the case in practice. 
Nevertheless, our extensive experiments show that this does not pose a significant barrier on many benchmark and synthetic datasets. 
However, it is still imperative to study convergence results, i.e., the required dataset sizes for which empirical means are sufficiently close to the true expectations.
Although GCA is proposed to eliminate redundancy among the principal components, it can be applied using any orthonormal basis other than the principal directions. In our future work, we propose to study the extension of GCA to other linear and nonlinear (kernel-based) feature extraction methods.

\appendices

\section{Proof of Theorem~\ref{theorem:GFS}}\label{appendix:GFS}
Similar to the Proof of Theorem~\ref{theorem:GFR}, we begin by showing that the variance of the data in all features which are not selected in smaller than~$\epsilon^2$. 
To this end, let~$Z^\bot\triangleq(Z_{m+1},\ldots,Z_d)$ be the features of~$X$ that were \textit{not} selected by GFS, in any arbitrary order. 
Also, suppose we complete the orthogonalization process which GFS started and obtain $\hat{\cF}(Z_1,\ldots,Z_d)$, as well as the respective variance vector~$\boldsigma_{m+1},\ldots,\boldsigma_d$.

\begin{lemma}\label{lemma:alldirectionsFS}
For every~$i\in\{m+1,\ldots,d\}$ we have $\norm{\boldsigma_i}_{\infty}\le\epsilon^2$.
\end{lemma}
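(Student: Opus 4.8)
The plan is to mirror the proof of Lemma~\ref{lemma:alldirections}, while exploiting the fact that GFS selects standard-basis directions $\bolde_{s_j}$, so that only the \emph{diagonal} entries of the alternative covariance matrices are relevant; this makes the feature-selection argument even shorter than its feature-extraction counterpart. First I would record the stopping criterion: since GFS halts at iteration $m+1$ via Line~\ref{line:GFS_epsilon}, we have $\norm{\boldsigma_{m+1}}_\infty\le\epsilon^2$, that is, the $k$-th variance $\sigma_{m+1,k}=\bE[d_m(X)_k^2]=(\Sigma_{m+1})_{kk}$ satisfies $\sigma_{m+1,k}\le\epsilon^2$ for every $k\in[d]$. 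I would also note that the selected features $s_1,\dots,s_m$ are distinct: once $X_{s_j}$ is orthogonalized into $\hat{\cF}_j$ one has $Z_j=X_{s_j}\in\Span\hat{\cF}_j$, hence $d_j(X)_{s_j}=0$ and $\sigma_{j+1,s_j}=0$, so $s_j$ can never again maximize the variance vector. Consequently, completing the orthogonalization with the remaining features $Z^\bot=(Z_{m+1},\dots,Z_d)$ yields well-defined $\hat{\cF}_{m+1},\dots,\hat{\cF}_d$ and variance vectors $\boldsigma_{m+1},\dots,\boldsigma_d$, exactly as assumed in the statement.

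Next I would observe that the identity of Lemma~\ref{lemma:GFRaux}.$(a)$ carries over unchanged to the GFS setting, since its proof uses only the defining relations~\eqref{equation:dj-1}--\eqref{equation:di-1} for $d_j(X)$ together with the orthonormality of $\hat{\cF}_j$, and is entirely insensitive to how the directions are chosen. Thus for every $i\in\{m+1,\dots,d\}$,
\[
    \Sigma_i=\Sigma_{m+1}-\sum_{f\in\hat{\cF}_{i-1}\setminus\hat{\cF}_m}\bE[fX]\bE[X^\intercal f].
\]
Reading off the $k$-th diagonal entry, and using that the $(k,k)$-entry of the rank-one matrix $\bE[fX]\bE[X^\intercal f]$ is $\bE[fX_k]^2$, I obtain
\[
    \sigma_{i,k}=\sigma_{m+1,k}-\sum_{f\in\hat{\cF}_{i-1}\setminus\hat{\cF}_m}\bE[fX_k]^2\le\sigma_{m+1,k}\le\epsilon^2,
\]
where the first inequality holds because each summand is a square, and the second is the stopping criterion above. (Staying purely at the level of variance vectors, the same chain follows by expanding $\bE[d_{i-1}(X)_k^2]$ from the component identity $d_{i-1}(X)_k=d_m(X)_k-\sum_{f\in\hat{\cF}_{i-1}\setminus\hat{\cF}_m}\bE[X_k f]f$ and cancelling the cross terms through orthonormality of $\hat{\cF}_{i-1}\setminus\hat{\cF}_m$.) Taking the maximum over $k\in[d]$ gives $\norm{\boldsigma_i}_\infty=\max_k\sigma_{i,k}\le\epsilon^2$, which is the claim.

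The proof is essentially immediate once Lemma~\ref{lemma:GFRaux}.$(a)$ is available, so there is no genuine obstacle; the only points requiring care are bookkeeping. Specifically, I would verify that the index shift is correct (the halting matrix is $\Sigma_{m+1}$, not $\Sigma_m$), that the set $\hat{\cF}_{i-1}\setminus\hat{\cF}_m$ is empty when $i=m+1$ so the base case reduces to the stopping criterion itself, and that the artificial completion is legitimate---i.e. that distinctness of $s_1,\dots,s_m$ ensures $Z^\bot$ genuinely supplies the missing $d-m$ coordinates needed to form $\hat{\cF}_d$. In contrast to the GFR case, no arbitrary orthonormal completion and no appeal to Lemma~\ref{lemma:GFRaux}.$(b)$ are needed here, because the directions are already the standard basis; correspondingly, the contradiction packaging used for Lemma~\ref{lemma:alldirections} collapses into the direct diagonal estimate above.
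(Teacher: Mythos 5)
Your proposal is correct and follows essentially the same route as the paper: it rests on the same recursive identity $\boldsigma_i=\boldsigma_{m+1}-\sum_{f\in\hat{\cF}_{i-1}\setminus\hat{\cF}_m}\bE[fX]^{\otimes 2}$ (Lemma~\ref{lemma:GFSaux}.$(a)$, i.e.\ the diagonal of Lemma~\ref{lemma:GFRaux}.$(a)$), the stopping criterion $\norm{\boldsigma_{m+1}}_\infty\le\epsilon^2$, and nonnegativity of the subtracted squares. The only difference is that you state the bound directly for every coordinate $k$ rather than packaging it as a contradiction, which is a cosmetic (and arguably cleaner) rephrasing of the paper's argument.
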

The following technical statement is required for the subsequent proof of Lemma~\ref{lemma:alldirectionsFS}. For brevity we denote~$\hat{\cF}_j\triangleq\hat{\cF}(Z_1,\ldots,Z_j)$ for every~$j\in[d]$.

\begin{lemma}\label{lemma:GFSaux}
    In GFS, we have the following.
    \begin{enumerate}
        \item[$(a)$] For~$i<j$ in~$[d]$ we have \\$\boldsigma_j = \boldsigma_i-\sum_{ f\in\hat{\cF}_{j-1}\setminus\hat{\cF}_{i-1}}\bE[ fX]^{\otimes 2}$.
        \item[$(b)$] The integers~$s_1,\ldots,s_m$ are distinct.
    \end{enumerate}
\end{lemma}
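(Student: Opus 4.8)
The plan is to read off part~$(a)$ as the main diagonal of the covariance identity already proved in Lemma~\ref{lemma:GFRaux}.$(a)$, and to deduce part~$(b)$ from the fact that a selected coordinate is annihilated in every subsequent reduced variable. For part~$(a)$, I would first note that the derivation of Lemma~\ref{lemma:GFRaux}.$(a)$ never uses the rule by which directions are chosen—it relies only on the recursive form of $d_{j-1}(X)$ (equations~\eqref{equation:dj-1} and~\eqref{equation:di-1}) and on the orthonormality of $\hat{\cF}_{j-1}$, both of which are produced by the \emph{same} Orthogonalize subroutine that GFS invokes. Hence the identity $\Sigma_j=\Sigma_i-\sum_{f\in\hat{\cF}_{j-1}\setminus\hat{\cF}_{i-1}}\bE[fX]\bE[X^\intercal f]$, with $\Sigma_\ell=\bE[d_{\ell-1}(X)d_{\ell-1}(X)^\intercal]$, holds verbatim in the present setting. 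Restricting to the diagonal, the diagonal of $\Sigma_\ell$ is precisely $\boldsigma_\ell=\bE[d_{\ell-1}(X)^{\otimes 2}]$, and the diagonal of the rank-one matrix $\bE[fX]\bE[X^\intercal f]=\bE[fX]\bE[fX]^\intercal$ is the entrywise square $\bE[fX]^{\otimes 2}$; taking diagonals on both sides yields $\boldsigma_j=\boldsigma_i-\sum_{f\in\hat{\cF}_{j-1}\setminus\hat{\cF}_{i-1}}\bE[fX]^{\otimes 2}$, as required.

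For part~$(b)$, the crux is that a selected feature enters the orthogonalized span and therefore vanishes from all later reduced variables. Fix $i<j$. At step~$i$ the singleton $z_i$ heads the list of functions to be orthogonalized (Line~\ref{line:orthogonalization_order}), so the normalized residual of $Z_i=X_{s_i}$ is retained in $\hat{\cF}_i\subseteq\hat{\cF}_{j-1}$ (this normalization is legitimate because $\|\tilde{Z}_i\|^2=\sigma_{i,s_i}>\epsilon^2>0$), whence $Z_i\in\Span\hat{\cF}_{j-1}$. The $s_i$-th coordinate of $d_{j-1}(X)=X-\sum_{f\in\hat{\cF}_{j-1}}\bE[Xf]f$ (Line~\ref{line:subtraction_of_data_distribution}) therefore equals $Z_i-\sum_{f\in\hat{\cF}_{j-1}}\bE[Z_if]f=Z_i-\proj_{\hat{\cF}_{j-1}}Z_i=0$, and so $\sigma_{j,s_i}=\bE[(d_{j-1}(X))_{s_i}^2]=0$ for every previously selected index.

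Combining this with the stopping rule finishes the argument: whenever GFS selects at step~$j$, the test in Line~\ref{line:GFS_epsilon} did not fire, so $\sigma_{j,s_j}=\norm{\boldsigma_j}_\infty>\epsilon^2\ge 0$, i.e. the freshly selected coordinate has strictly positive variance, while every earlier selected coordinate has variance $0$ at step~$j$ by the previous paragraph. Hence $s_j\notin\{s_1,\ldots,s_{j-1}\}$, and distinctness of $s_1,\ldots,s_m$ follows. I expect the only genuinely delicate point to be the claim $Z_i\in\Span\hat{\cF}_{j-1}$, which rests on the bookkeeping that $z_i$ is the first function orthogonalized at step~$i$ so that its residual is preserved in $\hat{\cF}_i$; the remainder parallels the GFR argument and is routine.
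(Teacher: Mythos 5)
Your proposal is correct and follows essentially the same route as the paper: part~$(a)$ is the same computation (the paper redoes the expansion of $\bE[d_{j-1}(X)^{\otimes 2}]$ with Hadamard products, while you more economically take the diagonal of the already-proven identity in Lemma~\ref{lemma:GFRaux}.$(a)$, whose derivation indeed never uses the direction-selection rule), and part~$(b)$ rests on the same observation that a previously selected coordinate lies in $\Span\hat{\cF}_{j-1}$ and hence has zero residual variance, so the stopping criterion forces $s_j$ to be fresh. Your explicit appeal to the stopping rule at the end of $(b)$ is in fact slightly more careful than the paper's terse conclusion.
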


\begin{proof}[Proof of Lemma~\ref{lemma:GFSaux}.(a)]
It is readily verified that~$d_{j-1}(X)$ can be written as
\begin{align}\label{equation:dj-1-FS}
    d_{j-1}(X)=d_{i-1}(X)-\sum_{ f\in\hat{\cF}_{j-1}\setminus\hat{\cF}_{i-1}}\bE[X f] f,
\end{align}
and that~$d_{i-1}(X)$ can be written as
\begin{align}\label{equation:di-1-FS}
    d_{i-1}(X)=X-\textstyle\sum_{ f\in\hat{\cF}_{i-1}}\bE[X f] f.
\end{align}
Hence, it follows from~\eqref{equation:dj-1-FS} that
\begin{align}\label{equation:sigmajassigmai-FS}
    \boldsigma_j&=\bE[d_{j-1}(X)^{\otimes 2}]\nonumber\\
    &=\textstyle\bE\bigg[\left(d_{i-1}(X)-\sum_{ f\in\hat{\cF}_{j-1}\setminus\hat{\cF}_{i-1}}\bE[X f] f\right)^{\otimes 2}\bigg]\nonumber\\    
    &= \boldsigma_i-\textstyle\sum_{ f\in\hat{\cF}_{j-1}\setminus\hat{\cF}_{i-1}}\bE[  fd_{i-1}(X)]\otimes\bE[X f]\nonumber\\
    &\phantom{=\boldsigma_i}-\textstyle\sum_{ f\in\hat{\cF}_{j-1}\setminus\hat{\cF}_{i-1}}\bE[ fX]\otimes\bE[ d_{i-1}(X) f]\nonumber\\
    &\phantom{=\boldsigma_i}+\textstyle\sum_{ f\in \hat{\cF}_{j-1}\setminus\hat{\cF}_{i-1}} \bE[X f]^{\otimes 2},
\end{align}
where the last summand follows from the orthonormality of~$\hat{\cF}$. Further, it follows from~\eqref{equation:di-1-FS} that every~$ f\in\hat{\cF}_{j-1}\setminus\hat{\cF}_{i-1}$ satisfies
\begin{align*}
    \bE[ fd_{i-1}(X)]=\bE[ f\cdot(X-\textstyle\sum_{ g\in\hat{\cF}_{i-1}}\bE[X g] g)]=\bE[ fX],
\end{align*}
and therefore~$\eqref{equation:sigmajassigmai-FS}=\boldsigma_i-\sum_{ f\in\hat{\cF}_{j-1}\setminus\cF_{i-1}}\bE[X f]^{\otimes 2}$ as required.
\end{proof}

\begin{proof}[Proof of Lemma~\ref{lemma:GFSaux}.(b)]
Notice that for every~$i\in[d]$ and every~$r\in[m]$ we have
\begin{align}\label{equation:eisigmaei_}
    \sigma_{r,i}=\bE[d_{r-1}(X)_{i}^2].
\end{align}
and since $d_{r-1}(X)=X-\sum_{ f\in\hat{\cF}_{r-1}}\bE[X  f] f$, it follows that 
\begin{align}\label{equation:alternativemaximizer_}
    \eqref{equation:eisigmaei_}=\bE\left[\left( X_{i}-\textstyle\sum_{ f\in\hat{
    \cF}_{r-1}}\bE[X_{i} f] f \right)^2\right]
\end{align}
That is, at the beginning of the~$r$'th iteration for any~$r$, GFS will find the maximizer~$s_r$ over~$i$ of~\eqref{equation:alternativemaximizer_}. 
Now, observe that if~$i=s_k$ for some~$k\in\{1,\ldots,r-1\}$, i.e., if the~$i$'th feature was already selected in an earlier iteration~$k$, then~$Z_k=X_i$. 
Let~$\tilde{Z}_k$ and~$\hat{Z}_k$ be the orthogonalized and orthonormalized versions of~$Z_k$.
Since~$Z_k = \Tilde{Z}_k + \sum_{ f\in\hat{\cF}_{k-1}}\bE[Z_k f] f$, we have
\begin{align}\label{equation:alternativemaximizer__}
\eqref{equation:alternativemaximizer_}
&=\textstyle \bE\bigg[\Big(
    \Tilde{Z}_k 
    + \sum_{g\in \hat{\cF}_{k-1}} \bE[Z_k g]\, g \nonumber\\
&\qquad\textstyle
    - \sum_{f\in\hat{\cF}_{r-1}} 
        \bE\Big[
            \Big(\Tilde{Z}_k 
            + \sum_{g\in \hat{\cF}_{k-1}} \bE[Z_k g]\, g 
            \Big) f
        \Big] f 
    \Big)^2\bigg] \nonumber\\[0.5em]
&=\textstyle \bE\bigg[\Big(
    \Tilde{Z}_k 
    + \sum_{g\in \hat{\cF}_{k-1}} \bE[Z_k g]\, g \nonumber\\
&~~~\textstyle
    - \sum_{f\in\hat{\cF}_{r-1}} 
        \Big(
            \bE[\Tilde{Z}_k f] 
            + \sum_{g\in \hat{\cF}_{k-1}} \bE[Z_k g] \bE[g f]
        \Big) f 
    \Big)^2\bigg] \nonumber\\[0.5em]
&=\textstyle \bE\bigg[\Big(
    \Tilde{Z}_k 
    + \sum_{g\in \hat{\cF}_{k-1}} \bE[Z_k g]\, g \nonumber\\
&\qquad\textstyle
    - \sum_{f\in\hat{\cF}_{r-1}} \bE[\Tilde{Z}_k f]\, f 
    - \sum_{g\in \hat{\cF}_{k-1}} \bE[Z_k g]\, g 
    \Big)^2\bigg] \nonumber\\[0.5em]
&=\textstyle \bE\bigg[\Big(
    \Tilde{Z}_k 
    - \sum_{f\in\hat{\cF}_{r-1}} \bE[\Tilde{Z}_k f]\, f 
    \Big)^2\bigg].
\end{align}
Since~$\Tilde{Z}_k$ is orthogonalized, it follows that 
\begin{align*}
    \bE[\tilde{Z}_kf]=
    \begin{cases}
        0 & \text{if }f\in \hat{\cF}_{r-1}\setminus\{\hat{Z}_k\}\\
        \textstyle\Vert \tilde{Z}_k\Vert & \text{if }f=\hat{Z}_k
    \end{cases},
\end{align*}
and hence
\begin{align*}
    \eqref{equation:alternativemaximizer__}=\bE[(\Tilde{Z}_k-\hat{Z}_k\cdot\Vert\Tilde{Z}_k\Vert)^2]=0,
\end{align*}
It then follows that the maximization problem will not select an index~$i$ that was already selected at a previous iteration.
\end{proof}

\begin{proof}[Proof of Lemma \ref{lemma:alldirectionsFS}]
According to the stopping criterion in GFS, it follows that
\begin{align}\label{equation:conclusionsFromStopping}
    \norm{\boldsigma_i}_{\infty}&>\epsilon^2\mbox{ for all }i\in[m]; \mbox{ and}\nonumber\\
    \norm{\boldsigma_{m+1}}_{\infty}&\le\epsilon^2.
\end{align}
Assume for contradiction that there exists~$i\in\{m+1,\ldots,d\}$ such that~$\norm{\boldsigma_i}_{\infty}>\epsilon^2$, and observe that~$i>m+1$, since otherwise the existence of~$s_{m+1}$ contradicts~\eqref{equation:conclusionsFromStopping}. 
It follows from Lemma~\ref{lemma:GFSaux}.(a) that
\begin{align*}
    \textstyle\boldsigma_i=\boldsigma_{m+1}-\sum_{ f\in\hat{\cF}_{i-1}\setminus\hat{\cF}_{m}}\bE[ fX]^{\otimes 2},
\end{align*}
and therefore
\begin{align*}
    \norm{\boldsigma_i}_{\infty} = \sigma_{i,s_i} &=\textstyle \sigma_{m+1,s_i}-\sum_{ f\in\hat{\cF}_{i-1}\setminus\hat{\cF}_{m}}\bE[ fX_{s_i}]^2 \\
    &\overset{\eqref{equation:conclusionsFromStopping}}{\le}\textstyle\epsilon^2-\sum_{ f\in\hat{\cF}_{i-1}\setminus\hat{\cF}_{m}}\bE[ fX_{s_i}]^2.
\end{align*}
Therefore, since~$\norm{\boldsigma_i}_{\infty}>\epsilon^2$, it follows that 
\begin{align*}
    \textstyle-\sum_{ f\in\hat{\cF}_{i-1}\setminus\hat{\cF}_{m}}\bE[ fX_{s_i}]^2>0,
\end{align*}
which is a contradiction since $\bE[ fX_{s_i}]^2\ge 0$ for every~$ f$.
\end{proof}

\begin{proof}[Proof of Theorem~\ref{theorem:GFS}]
Let~$\{s_{m+1},\ldots,s_d\}=[d]\setminus\{s_1,\ldots,s_m\}$ and recall that~$Z=X_{\cS_\epsilon}$ and~$Z^\bot=(Z_{m+1},\ldots,Z_d)$. Observe that
\begin{align}\label{equation:Hzx-FS}
    H(X\vert Z)&=H(\{Z_{i}\}_{i=1}^d\vert \{Z_{i}\}_{i=1}^m)\nonumber\\
    &=\textstyle \sum_{i=m+1}^d H(Z_{i}\vert (Z_{j})_{j=1}^{i-1}),
\end{align}
where the last equality follows from the chain rule for information entropy. 
Since
\begin{align}\label{equation:ziTilde-FS}
    \Tilde{Z}_i=Z_{i}-\sum_{ f\in \hat{\cF}_{i-1}}\bE[Z_{i} f] f,
\end{align}
it follows that
\begin{align}\label{equation:Hztilde-FS}
    \textstyle H(Z_i\vert (Z_j)_{j=1}^{i-1})&=H(\Tilde{Z}_i+\sum_{ f\in \hat{\cF}_{i-1}}\bE[Z_i f] f\vert (Z_j)_{j=1}^{i-1})\nonumber\\
    &\overset{(a)}{=}H(\Tilde{Z}_i\vert (Z_j)_{j=1}^i)\overset{(b)}{\le}H(\Tilde{Z}_i),
\end{align}
where~$(a)$ follows since the expression $\sum_{ f\in \hat{\cF}_{i-1}}\bE[Z_i f] f$ is uniquely determined by the variables~$Z_1,\ldots,Z_{i-1}$ (every~$ f\in\hat{\cF}_{i-1}$ is a deterministic function of~$Z_1,\ldots,Z_{i-1}$, and the coefficients~$\bE[Z_i f]$ are constants), and~$(b)$ follows since conditioning reduces entropy. Combining~\eqref{equation:Hzx-FS} with~\eqref{equation:Hztilde-FS} we have that~$H(X\vert Z)\le\sum_{i=m+1}^dH(\Tilde{Z}_i)$, and hence it remains to bound~$H(\Tilde{Z}_i)$ for all~$i\in\{m+1,\ldots,d\}$.

To this end let~$i\in\{m+1,\ldots,d\}$, define~$\alpha_i\triangleq\min\{|\Tilde{Z}_i(\boldalpha)|:\boldalpha\in\cX^d, \Tilde{Z}_i(\boldalpha)\ne 0\}$ and~$\alpha_{\min}=\min\{\alpha_i\}_{i=m+1}^d$, and observe that by Markov's inequality we have
\begin{align}\label{equation:PrziTile-FS}
    \Pr(\Tilde{Z}_i\ne 0)=\Pr(|\Tilde{Z}_i|\ge \alpha_i)&=\Pr(\Tilde{Z}_i^2\ge \alpha_i^2)\nonumber\\
    &\le \frac{\bE[\Tilde{Z}_i^2]}{\alpha_i^2}\le \frac{\bE[\Tilde{Z}_i^2]}{\alpha_{\min}^2}.
\end{align}
Moreover, recall that
\begin{align*}
    d_{i-1}(X)&=X-\textstyle\sum_{ f\in\hat{\cF}_{i-1}}\bE[X f] f\mbox{, and}\\
    \boldsigma_i&=\bE[d_{i-1}(X)^{\otimes 2}],
\end{align*}
and therefore
\begin{align*}
    \norm{\boldsigma_i}_{\infty}=\sigma_{i,s_i}&=\bE[d_{i-1}(X)_{s_i}^2]\\
    &= \bE[(X_{s_i}-\textstyle\sum_{ f\in\hat{\cF}_{i-1}}\bE[X_{s_i} f] f)^2] \\
    &= \bE[(Z_i-\textstyle\sum_{ f\in\hat{\cF}_{i-1}}\bE[Z_i f] f)^2] \\
    &\overset{\eqref{equation:ziTilde-FS}}{=}\bE[\Tilde{Z}_i^2].
\end{align*}
Consequently, it follows that
\begin{align*}
    \eqref{equation:PrziTile-FS}=\frac{\norm{\boldsigma_{i}}_{\infty}}{a_{\min}^2}\overset{\text{Lemma}~\ref{lemma:alldirectionsFS}}{\le}\frac{\epsilon^2}{a_{\min}^2}.
\end{align*}
Now, by using the grouping rule \cite[Ex.~2.27]{cover1999elements} we have
\begin{align*}
    H(\Tilde{Z}_i)\le h_b(\epsilon^2/\alpha_{\min}^2)+\frac{\epsilon^2}{\alpha_{\min}^2}\log|\cX|,
\end{align*}
where~$h_b$ is the binary entropy function. Finally, using the bound~$h_b(p)\le2\sqrt{p(1-p)}\le2\sqrt{p}$, it follows that
\begin{align*}
    H(X\vert Z)&\le \textstyle\sum_{i=m+1}^d H(\Tilde{Z}_i)\\
    &\le(d-m)(h_b(\epsilon^2/\alpha_{\min}^2)+(\epsilon^2/\alpha_{\min}^2)\log|\cX|)\\
    &\le(d-m)(2\epsilon/\alpha_{\min}+(\epsilon/\alpha_{\min})\log|\cX|)=dO(\epsilon).\qedhere
\end{align*}
\end{proof}
\section{Complexity analysis}\label{section:complexity_analysis}
The computation bottleneck of our algorithm is the orthogonalization process (Algorithm~\ref{algorithm:orthogonalize}). Theorem~\ref{theorem:complexity} shows the complexity upper bound on our algorithms.
\begin{theorem}\label{theorem:complexity}
Let~$c(\cF)$ be an upper bound on the time complexity of computing any~$f\in\cF$ (once the variables~$z_j$ are known), and let~$N$, $d$, and $m$ be the number of samples, dataset dimension, and the number of extracted features, respectively. Then, the complexity upper bound on GCA and GFR is
\begin{align}
    O(|\hat{\cF}_m|^2Nc(\cF)+|\hat{\cF}_m|(N(c(\cF)+d)+d^2)+d^2(N+d)).
\end{align}
\end{theorem}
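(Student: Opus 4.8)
The plan is to bound the total running time by separately accounting for (i) the work done in the main loops of GFR (Algorithm~\ref{algorithm:GFR}) and GCA (Algorithm~\ref{algorithm:GCA}), and (ii) the cumulative work across all calls to Orthogonalize (Algorithm~\ref{algorithm:orthogonalize}), which the theorem identifies as the bottleneck. Throughout, I would represent each $\cL^2(P_X)$ function by the length-$N$ vector of its empirical evaluations on the dataset $\{\boldalpha_i\}_{i=1}^N$, so that every inner product $\bE[fg]$ is an $O(N)$ dot product and every expectation is an empirical mean (cf.\ Remark~\ref{remark:stupidReviewerHere}). The structural observation that keeps the analysis tractable is Lemma~\ref{lemma:GFRaux}.$(a)$: rather than forming $d_j(X)$ and computing $\Sigma_{j+1}=\bE[d_j(X)d_j(X)^\intercal]$ from scratch in Line~\ref{line:covariance_matrix} (which would cost $O(Nd^2)$ \emph{per iteration}, hence $O(mNd^2)$ overall), one maintains a single running covariance matrix and applies the rank-one downdates $\bE[fX]\bE[X^\intercal f]$ as each new orthonormal function $f$ is produced.

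First I would account for the cumulative cost of Orthogonalize. Over the entire execution exactly $|\hat{\cF}_m|$ functions are orthonormalized in total, one per element of the final family $\hat{\cF}_m$ (since the union of $\cF(\boldz_{[j]})\setminus\cF(\boldz_{[j-1]})$ over $j\le m$ is $\cF(\boldz_{[m]})$). For each such function $g$ the work splits into: (i) evaluating $g$ on all $N$ samples, at $O(N\,c(\cF))$; (ii) subtracting its projections onto the current orthonormal set $\cT$ of size at most $|\hat{\cF}_m|$, each projection costing one $O(N)$ inner product and one $O(N)$ vector update, for $O(|\hat{\cF}_m|\,N)$ per function; (iii) computing the cross-covariance $\bE[fX]\in\bR^d$ as $d$ empirical inner products, at $O(Nd)$; and (iv) the rank-one downdate of the running covariance, at $O(d^2)$. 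Summing (i)--(iv) over all $|\hat{\cF}_m|$ functions, and absorbing the $O(|\hat{\cF}_m|^2 N)$ of (ii) into $O(|\hat{\cF}_m|^2 N\,c(\cF))$ since $c(\cF)\ge 1$, yields $O(|\hat{\cF}_m|^2 N c(\cF)+|\hat{\cF}_m|(N(c(\cF)+d)+d^2))$, which is the first two groups of terms in the claimed bound.

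Next I would handle the one-time and per-iteration costs of the outer loops. Forming the initial covariance $\Sigma_1=\bE[XX^\intercal]$ costs $O(Nd^2)$, a sum of $N$ outer products. The remaining cost is the direction-selection step in each of the $m\le d$ iterations: in GFR this is extracting the leading eigenvector of the $d\times d$ matrix $\Sigma_j$ (Line~\ref{line:eigenvector_maximization}), and in GCA it is a single PCA of $\Sigma_1$ together with evaluating the residual variances $\boldrho_i^\intercal\Sigma_j\boldrho_i$ used to form $\cE_j$ and the argmax in Line~\ref{line:GCA_most_variant_component_selection}. Bounding each such step by $O(d^2)$ per iteration (via power iteration, or by maintaining the covariance in the principal basis so that the stopping test and argmax are read off cheaply) and multiplying by $m\le d$ gives $O(d^3)$; together with the initialization this is $O(d^2(N+d))$, the final group of terms. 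Adding the three groups establishes the stated bound.

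The step I expect to be most delicate is the interaction of (iv) with the direction selection. One must verify that the incremental maintenance of Lemma~\ref{lemma:GFRaux}.$(a)$ is genuinely exploited, so that no iteration rebuilds $\Sigma_j$ at the naive $O(Nd^2)$ cost, and --- for GCA in particular --- that all $d$ residual variances $\boldrho_i^\intercal\Sigma_j\boldrho_i$ needed in Line~\ref{line:GCA_E_j} and Line~\ref{line:GCA_most_variant_component_selection} are obtained within budget rather than incurring an extra factor of $d$ per iteration. Getting this bookkeeping right is precisely what prevents spurious $O(d^4)$ or $O(|\hat{\cF}_m|\,d^2 N)$ terms from appearing, and it is the only place where the argument is more than a routine summation of per-operation costs.
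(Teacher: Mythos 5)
Your proposal is correct and follows essentially the same route as the paper: the same decomposition into orthogonalization cost, recursive covariance maintenance via the rank-one downdates of Lemma~\ref{lemma:GFRaux}.$(a)$, and the outer-loop eigenvector/selection cost, yielding the same three groups of terms. Your accounting is slightly more explicit about caching function evaluations and about the GCA residual-variance bookkeeping, but this does not change the argument in any substantive way.
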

\begin{proof}
The orthogonalization process requires to subtract from each~$f_i$ its projections along~$f_1,\ldots,f_{i-1}$, each of which carries a coefficient of the form~$\bE[f_i f_j]$ for~$j<i$ which can be empirically estimated in~$O(Nc(\cF))$. Letting~$m$ be the number of extracted features, this sums up to~$O(Nc(\cF)|\hat{\cF}_m|^2)$.

The computation of the matrices~$\{\Sigma_j\}_{j=1}^m$ can be done recursively using\footnote{The distributions~$d_{j}$ are merely proof artifacts, and need not be computed in practice.} Lemma~\ref{lemma:GFRaux}.(a). Since
$$\Sigma_{j+1}=\Sigma_j-\sum_{f\in\hat{\cF}_j\setminus\hat{\cF}_{j-1}}\bE[fX]\bE[X^\intercal f],$$
at iteration~$j$ the matrix~$\Sigma_{j+1}$ can be approximated in~$O(|\hat{\cF}_j\setminus\hat{\cF}_{j-1}|N(c(\cF)+d)+d^2|\hat{\cF}_j\setminus\hat{\cF}_{j-1}|)$, since each~$\bE[fX]$ can be approximated in~$O(N(c(\cF)+d))$, and then~$O(|\hat{\cF}_j\setminus\hat{\cF}_{j-1}|)$ operations are required per entry. Summing over~$j\in[m]$ yields~$O(|\hat{\cF}_m|(N(c(\cF)+d)+d^2))$. Therefore, all orthogonalization operations in the process of extracting~$m$ features require $O(|\hat{\cF}_m|^2Nc(\cF)+|\hat{\cF}_m|(N(c(\cF)+d)+d^2))$.

Outside the orthogonalization operations, GCA (Algorithm~\ref{algorithm:GCA}) requires algebraic and min/max operations, as well as covariance matrix approximation ($\bE[\bar{X}_j\Bar{X}_j^\intercal]$), all dominated by~$O(d^2(N+d))$. GFR (Algorithm~\ref{algorithm:GFR}) also requires solving the largest eigenvector problem, which can be done in~$O(d^3)$, but iterative methods in~$O(d^2)$ exist. Hence, a rough complexity upper bound on GCA and GFR is
\begin{align*}
    &O(|\hat{\cF}_m|^2Nc(\cF)+|\hat{\cF}_m|(N(c(\cF)+d)+d^2)+d^2(N+d)).\qedhere
\end{align*}
\end{proof}

Similar complexity upper bound on GFS and GFA can also be shown as follows.
\begin{theorem}\label{theorem:complexity_FS}
Let~$c(\cF)$ be an upper bound on the time complexity of computing any~$f\in\cF$ (once the variables~$z_j$ are known), and let~$N$, $d$, and $m$ be the number of samples, dataset dimension, and the number of selected features, respectively. Then, the complexity upper bound on GFA and GFS is
\begin{align}
    O(|\hat{\cF}_m|^2Nc(\cF)+|\hat{\cF}_m|(N(c(\cF)+d)+d)+d(N+d)).
\end{align}
\end{theorem}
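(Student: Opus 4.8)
The plan is to mirror the proof of Theorem~\ref{theorem:complexity} in its structure, tracking the single architectural difference between feature \emph{extraction} and feature \emph{selection}: in GFS and GFA we never materialize the full alternative covariance matrices~$\Sigma_j$, but only their diagonals, the variance vectors~$\boldsigma_j$. The whole argument then reduces to verifying that this diagonal-only maintenance collapses every~$O(d^2)$ contribution in the extraction bound to an~$O(d)$ contribution, while leaving the orthogonalization cost untouched.

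First I would bound the orthogonalization itself, which is identical to the extraction case since Algorithm~\ref{algorithm:orthogonalize} is shared: at each iteration every newly produced function in~$\hat{\cF}_j\setminus\hat{\cF}_{j-1}$ is orthogonalized against all previously accumulated functions, and each projection coefficient~$\bE[fg]$ is estimated empirically in~$O(Nc(\cF))$. Summing over all pairs of functions gathered by the time~$m$ features are selected yields~$O(|\hat{\cF}_m|^2 N c(\cF))$, the first term of the claimed bound. Next I would treat the recursive computation of the variance vectors, which is where the savings materialize. By Lemma~\ref{lemma:GFSaux}.(a) we have $\boldsigma_{j+1}=\boldsigma_j-\sum_{f\in\hat{\cF}_j\setminus\hat{\cF}_{j-1}}\bE[fX]^{\otimes 2}$; each vector~$\bE[fX]\in\bR^d$ is estimated in~$O(N(c(\cF)+d))$, and the crucial point is that forming its Hadamard square~$\bE[fX]^{\otimes 2}$ and subtracting it from~$\boldsigma_j$ costs only~$O(d)$, rather than the~$O(d^2)$ needed to form and subtract the rank-one matrix~$\bE[fX]\bE[X^\intercal f]$ in Theorem~\ref{theorem:complexity}. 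Thus iteration~$j$ contributes~$O(|\hat{\cF}_j\setminus\hat{\cF}_{j-1}|(N(c(\cF)+d)+d))$, and summing over~$j\in[m]$ gives~$O(|\hat{\cF}_m|(N(c(\cF)+d)+d))$, the second term.

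Finally I would account for the operations outside orthogonalization. Here the selection step is a plain~$\arg\max$ over the~$d$ entries of~$\boldsigma_j$, costing~$O(d)$ and requiring no eigenvector computation—this is precisely what eliminates the~$O(d^3)$ (largest-eigenvector) and~$O(d^2(N+d))$ (full covariance) terms that GFR and GCA incurred. The only residual costs are the initial~$\boldsigma_1=\bE[X^{\otimes 2}]$ together with the per-iteration algebraic and min/max bookkeeping, all dominated by~$O(d(N+d))$, the third term. Adding the three contributions produces the stated bound. The main point requiring care is not any single estimate but the global consistency check that the diagonal-only maintenance is legitimate at \emph{every} step: one must confirm that neither the selection rule nor the stopping criterion in Algorithms~\ref{algorithm:GFS} and~\ref{algorithm:GFA} ever references an off-diagonal entry of~$\Sigma_j$, so that the~$O(d^2)\to O(d)$ substitution is valid throughout rather than merely in isolated lines. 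This is immediate by inspection, since both algorithms read only~$\boldsigma_j$, but it is the hinge on which the whole improvement rests.
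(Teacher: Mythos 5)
Your proposal is correct and follows essentially the same route as the paper's proof: the same three-term decomposition (pairwise orthogonalization coefficients in $O(|\hat{\cF}_m|^2Nc(\cF))$, recursive diagonal-only updates of $\boldsymbol{\sigma}_j$ via Lemma~\ref{lemma:GFSaux}.(a) in $O(|\hat{\cF}_m|(N(c(\cF)+d)+d))$, and the $\arg\max$/bookkeeping outside orthogonalization in $O(d(N+d))$). Your added consistency check that the selection and stopping rules never touch an off-diagonal entry of $\Sigma_j$ is a sensible observation the paper leaves implicit, but it does not change the argument.
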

\begin{proof}
As shown in the proof of Theorem~\ref{theorem:complexity}, the computational complexity of the orthogonalization process is~$O(Nc(\cF)|\hat{\cF}_m|^2)$. The computation of the variance vectors~$\{\boldsigma_j\}_{j=1}^m$ can be done recursively 
$$\boldsigma_{j+1}=\boldsigma_j-\sum_{f\in\hat{\cF}_j\setminus\hat{\cF}_{j-1}}\bE[fX]^{\otimes 2},$$
at iteration~$j$ the vector~$\boldsigma_{j+1}$ can be approximated in~$O(|\hat{\cF}_j\setminus\hat{\cF}_{j-1}|N(c(\cF)+d)+d|\hat{\cF}_j\setminus\hat{\cF}_{j-1}|)$, since each~$\bE[fX]$ can be approximated in~$O(N(c(\cF)+d))$, and then~$O(|\hat{\cF}_j\setminus\hat{\cF}_{j-1}|)$ operations are required per entry. Summing over~$j\in[m]$ yields~$O(|\hat{\cF}_m|(N(c(\cF)+d)+d))$. Therefore, all orthogonalization operations in the process of extracting~$m$ features require $O(|\hat{\cF}_m|^2Nc(\cF)+|\hat{\cF}_m|(N(c(\cF)+d)+d))$.Outside the orthogonalization operations, GFA (Algorithm~\ref{algorithm:GFA}) finds the highest variant feature with~$O(d(N+d))$ complexity. GFS (Algorithm~\ref{algorithm:GFS}) also requires similar computational complexity. Hence, a rough complexity upper bound on GFA and GFS is
\begin{align*}
    &O(|\hat{\cF}_m|^2Nc(\cF)+|\hat{\cF}_m|(N(c(\cF)+d)+d)+d(N+d)).\qedhere
\end{align*}
\end{proof}

\section{Unsupervised Fourier Feature Selection, a comparison}\label{section:previousFourier}
A fascinating feature selection framework has been recently put forward by~\cite{heidari2022sufficiently} (see also \cite{heidari2021information,heidari2021finding}). In this framework, the random variable~$X=(X_1,\ldots,X_d)$ is viewed from a Fourier-analytic perspective in the following sense. A set of \textit{characters}~$\{\phi_\cS\}_{\cS\subseteq[d]}$ is defined as all multilinear monomials in the (centered and normalized, and possibly dependent) variables~$\{X_i\}_{i=1}^d$, i.e., $\phi_\cS(X)=\prod_{s\in\cS} X_s$. These characters are then orthogonalized as~$\{\Tilde{\psi}_\cS\}_{\cS\subseteq[d]}$ with respect to the data distribution using a Gram-Schmidt process, 
\begin{align}\label{equation:characterorthogonalization}
    \Tilde{\psi}_{\cS_i}&=\phi_{\cS_i}-\sum_{j=1}^{i-1}\bE[\psi_{\cS_j}\phi_{\cS_i}]\psi_{\cS_j},\mbox{ where}\nonumber\\
    \psi_{\cS_i}&=  \begin{cases}
        \frac{\Tilde{\psi}_{\cS_i}}{\norm{\Tilde{\psi}_{\cS_i}}} & \mbox{if }\Vert\Tilde{\psi}_{\cS_i}\Vert\ne 0\\
        0 & \mbox{otherwise},
    \end{cases}
\end{align}
and where the sets~$\cS\subseteq[d]$ are indexed in the following order:
\begin{align}\label{equation:setorder}
    \varnothing,\{1\},\{2\},\{1,2\},\{3\},\{1,3\},\{2,3\},\nonumber\\
    \{1,2,3\},\ldots,\{1,2,\ldots,d\}.
\end{align}

Then, it is shown that the Fourier norm $\Vert\Tilde{\psi}_{j}\Vert=\sqrt{\bE[\Tilde{\psi}_{j}^2]}$, where~$\Tilde{\psi}_{j}$ is a shorthand notation for~$\Tilde{\psi}_{\{j\}}$, can be seen as a measure for the ``informativeness'' of feature~$j$ in the following sense.
\begin{theorem}\cite[Thm.~1]{heidari2022sufficiently}\label{theorem:Heidari}
    Let~$\cS_\epsilon\subseteq[d]$ be the set of indices~$j$ such that $\Vert\Tilde{\psi}_{j}\Vert>\epsilon$. Then~$H(X\vert X^{\cS_\epsilon})=dO(\epsilon)$, where $X^{\cS_\epsilon}=(X_j)_{j\in\cS_\epsilon}$.
\end{theorem}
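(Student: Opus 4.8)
The plan is to follow the same template used in the proofs of Theorems~\ref{theorem:GFR} and~\ref{theorem:GFS}, the key point being that here the role of the ``all non-selected directions have small variance'' lemma (Lemmas~\ref{lemma:alldirections} and~\ref{lemma:alldirectionsFS}) is played directly by the \emph{definition} of~$\cS_\epsilon$, so no recursive auxiliary lemma is needed. The crucial structural observation is about the ordering~\eqref{equation:setorder}: the singleton~$\{j\}$ is processed immediately after~$\{1,2,\ldots,j-1\}$, so every set preceding~$\{j\}$ is a subset of~$[j-1]$. Consequently, by~\eqref{equation:characterorthogonalization},
\begin{align*}
    \Tilde{\psi}_{j}=X_j-\textstyle\sum_{\cS\subseteq[j-1]}\bE[\psi_\cS X_j]\,\psi_\cS,
\end{align*}
where the subtracted term is a deterministic function of~$X_1,\ldots,X_{j-1}$. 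Thus~$\Vert\Tilde{\psi}_j\Vert$ measures exactly how well~$X_j$ is approximated in~$\cL^2(P_X)$ by the preceding features, and by definition~$j\notin\cS_\epsilon$ means~$\bE[\Tilde{\psi}_j^2]=\Vert\Tilde{\psi}_j\Vert^2\le\epsilon^2$ immediately.

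Next I would decompose the target conditional entropy by the chain rule in the natural order~$1,2,\ldots,d$:
\begin{align*}
    H(X\mid X^{\cS_\epsilon})=\textstyle\sum_{j=1}^{d} H\big(X_j\mid X_1,\ldots,X_{j-1},X^{\cS_\epsilon}\big).
\end{align*}
Each term with~$j\in\cS_\epsilon$ vanishes, since then~$X_j$ already appears among the conditioning variables. For~$j\notin\cS_\epsilon$, the fact that conditioning reduces entropy gives~$H(X_j\mid X_1,\ldots,X_{j-1},X^{\cS_\epsilon})\le H(X_j\mid X_1,\ldots,X_{j-1})$; and since~$X_j$ differs from~$\Tilde{\psi}_j$ by a deterministic function of~$X_1,\ldots,X_{j-1}$, this equals~$H(\Tilde{\psi}_j\mid X_1,\ldots,X_{j-1})\le H(\Tilde{\psi}_j)$, exactly as in~\eqref{equation:Hztilde}. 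Hence it remains to bound~$H(\Tilde{\psi}_j)$ over the at most~$d$ non-selected indices.

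To bound~$H(\Tilde{\psi}_j)$ I would reuse the Markov-plus-grouping argument verbatim from the proof of Theorem~\ref{theorem:GFR}. Setting~$\alpha_{\min}\triangleq\min\{|\Tilde{\psi}_j(\boldalpha)|:\boldalpha\in\cX^d,\ \Tilde{\psi}_j(\boldalpha)\ne 0,\ j\notin\cS_\epsilon\}$ and using~$\bE[\Tilde{\psi}_j^2]\le\epsilon^2$, Markov's inequality gives~$\Pr(\Tilde{\psi}_j\ne 0)\le\epsilon^2/\alpha_{\min}^2$, after which the grouping rule~\cite[Ex.~2.27]{cover1999elements} yields
\begin{align*}
    H(\Tilde{\psi}_j)\le h_b\!\left(\epsilon^2/\alpha_{\min}^2\right)+\frac{\epsilon^2}{\alpha_{\min}^2}\log|\cX|,
\end{align*}
and the estimate~$h_b(p)\le 2\sqrt{p}$ makes each such term~$O(\epsilon)$. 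Summing over the non-selected indices then gives~$H(X\mid X^{\cS_\epsilon})\le dO(\epsilon)$.

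I expect the main obstacle to be the bookkeeping around the ordering~\eqref{equation:setorder}: one must verify carefully that the singletons~$\{j\}$ are interleaved with the composite sets so that~$\Tilde{\psi}_j$ is orthogonalized against monomials in~$X_1,\ldots,X_{j-1}$ \emph{only} (never against characters involving~$X_j$ or later features), since this is precisely what lets the projection term be absorbed into the conditioning in the chain-rule step. A secondary technical point is the reliance on the finite alphabet: the bound treats~$|\cX|$ and~$\alpha_{\min}$ as constants, so the statement is inherently restricted to discrete~$X$, and one should confirm~$\alpha_{\min}>0$, which holds because the support~$\cX^d$ is finite.
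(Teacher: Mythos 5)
Your proposal is correct, and it is essentially the paper's own argument transplanted to the Fourier-character setting. Note that the paper does not actually prove Theorem~\ref{theorem:Heidari} from scratch --- it imports it by citation and instead recovers it (Appendix~\ref{section:previousFourier}) as the special case of Theorem~\ref{theorem:GFS} where $\cF(\boldz)$ is all multilinear polynomials, via the identity~\eqref{equation:normequivalence} equating $\sigma_{r,i}$ with a squared Fourier norm. Your direct route uses exactly the same three-step template as the proofs of Theorems~\ref{theorem:GFR} and~\ref{theorem:GFS} (chain rule over the non-selected indices, $H(X_j\mid X_{[j-1]})=H(\tilde\psi_j\mid X_{[j-1]})\le H(\tilde\psi_j)$, then Markov plus the grouping rule on the finite alphabet), and your two structural observations are the right ones: the ordering~\eqref{equation:setorder} guarantees that $\tilde\psi_{\{j\}}$ is orthogonalized only against characters supported on $[j-1]$, so the projection is measurable with respect to the conditioning; and the role of Lemmas~\ref{lemma:alldirections}/\ref{lemma:alldirectionsFS} is indeed played by the definition of $\cS_\epsilon$, since here the small-norm property of every non-selected index is given rather than derived from a greedy stopping rule --- which is why your proof can dispense with the counterparts of Lemmas~\ref{lemma:GFRaux} and~\ref{lemma:GFSaux} entirely. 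The only cosmetic caveat is that $\phi_{\{j\}}$ is the centered/normalized version of $X_j$ rather than $X_j$ itself, but an invertible affine map does not change discrete entropy, so your identification $H(X_j\mid\cdot)=H(\tilde\psi_j\mid\cdot)$ survives; the remaining finite-alphabet bookkeeping ($\alpha_{\min}>0$, the $\log|\cX|$ term) is inherited verbatim from the paper's own proofs.
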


That is, selecting those features whose respective Fourier norm is high, reduces the information entropy of the data, and therefore those features are ``sufficiently informative.'' In practice, orthogonalization is performed by approximating expectations using empirical means, then the Fourier norms~$\Vert\Tilde{\psi}_{j}\Vert$ are calculated similarly, and the features whose norms are larger than~$\epsilon$ are selected.

\begin{algenvironment}[Unsupervised Fourier Feature Selection (UFFS)]\label{algorithm:Heidari}
    Theorem~\ref{theorem:Heidari} implies a feature selection algorithm, in which orthogonal characters are constructed as in~\eqref{equation:characterorthogonalization},~\eqref{equation:setorder}, the Fourier norms $\Vert\Tilde{\psi}_{j}\Vert$ are computed for every~$j$ (approximating expectations by empirical means), and the~$j$'s whose norms are larger than a chosen threshold~$\epsilon>0$ are selected.
\end{algenvironment}
In what follows we briefly explain that GFS, when choosing~$\cF$ as the set of all multilinear polynomials, specifies to the above framework by~\cite{heidari2022sufficiently}, yet at reduced complexity. A key observation regarding Theorem~\ref{theorem:Heidari}, which enables GFS to subsume Algorithm~\ref{algorithm:Heidari} at reduced complexity, is that the order of sets in~\eqref{equation:setorder} is rather arbitrary, and for any permutation~$\pi$ over~$[d]$ we may set
\begin{align}\label{equation:characterorthogonalizationpi}
    \Tilde{\psi}_{\cS_i^{\pi}}&=\phi_{\cS_i^{\pi}}-\sum_{j=1}^{i-1}\bE[\psi_{\cS_j^{\pi}}\phi_{\cS_i^{\pi}}]\psi_{\cS_j^{\pi}},\mbox{ where}\nonumber\\
    \psi_{\cS_i^{\pi}}&=  \begin{cases}
        \frac{\Tilde{\psi}_{\cS_i^{\pi}}}{\Vert\Tilde{\psi}_{\cS_i^{\pi}}\Vert} & \mbox{if }\Vert\Tilde{\psi}_{\cS_i^{\pi}}\Vert\ne 0\\
        0 & \mbox{otherwise},
    \end{cases}
\end{align}
and where~$\{\cS_{i}^\pi\}_{i=1}^{2^d}$ is the order
\begin{align*}
    &\varnothing,\{\pi(1)\},\{\pi(2)\},\{\pi(1),\pi(2)\},\{\pi(3)\},\{\pi(1),\pi(3)\},\\ 
    &\{\pi(2),\pi(3)\},\{\pi(1),\pi(2),\pi(3)\},\ldots,\{\pi(1),\pi(2),\ldots,\pi(d)\},
\end{align*}
and the guarantees in Theorem~\ref{theorem:Heidari} still hold true. GFS, however, finds such a permutation~$\pi$ on the fly, one element at a time; the next element to be chosen is the one which maximizes~$\{\sigma_{r,i}\}_{i=1}^d$. Now, since
\begin{align}\label{equation:eisigmaei}
    \sigma_{r,i}&=\bE[d_{r-1}(X)_{i}^2],\mbox{ and since}\\\nonumber
    d_{r-1}(X)&=X-\sum_{\ell=1}^{2^{r-1}}\bE[X\psi_{\cS_{\ell}^\pi}]\psi_{\cS_{\ell}^\pi},
\end{align}
it follows that
\begin{align}\label{equation:alternativemaximizer}
    \eqref{equation:eisigmaei}&=\bE\left[\left( X_i-\textstyle\sum_{\ell=1}^{2^{r-1}}\bE[X_i\psi_{\cS_\ell^\pi}]\psi_{\cS_\ell^\pi} \right)^2\right]\nonumber\\
    &=\bE\left[\left( \phi_i-\textstyle\sum_{\ell=1}^{2^{r-1}}\bE[\phi_i\psi_{\cS_\ell^\pi}]\psi_{\cS_\ell^\pi} \right)^2\right].
\end{align}
That is, at the beginning of the~$r$'th iteration for any~$r$, the algorithm will find the maximizer over~$i$ of~\eqref{equation:alternativemaximizer}. Denoting this maximizer by~$j_r$, we have that~$\cS_{2^{r-1}+1}^\pi=\{j_r\}$, and hence
\begin{align}\label{equation:normequivalence}
    \bE&\left[\left( \phi_{j_r}-\textstyle\sum_{\ell=1}^{2^{r-1}}\bE[\phi_{j_r}\psi_{\cS_\ell^\pi}]\psi_{\cS_\ell^\pi} \right)^2\right]\nonumber\\
    &=\bE\left[\left( \phi_{\cS_{2^{r-1}+1}^\pi}-\textstyle\sum_{\ell=1}^{2^{r-1}}\bE[\phi_{\cS_{2^{r-1}+1}^\pi}\psi_{\cS_\ell^\pi}]\psi_{\cS_\ell^\pi} \right)^2\right]\nonumber\\
    &\overset{\eqref{equation:characterorthogonalizationpi}}{=}\bE[(\Tilde{\psi}_{\cS_{2^{r-1}+1}^\pi}^\pi)^2]=\bE[(\Tilde{\psi}_{j_r}^\pi)^2]=\Vert\Tilde{\psi}_{j_r}^\pi\Vert^2.
\end{align}
That is, the algorithm will choose the feature which maximizes the Fourier norm, and will stop once the resulting maximum is less than~$\epsilon^2$. This implies the following.

\begin{corollary}
    GFS generalizes Algorithm~\ref{algorithm:Heidari}, which arises as a special case when~$\cF$ is chosen as the set of all multilinear polynomials. Yet, Algorithm~\ref{algorithm:Heidari} requires one to first compute all~$2^d$ orthogonalized characters, and only then to choose the~$i$'s such that~$\Vert \Tilde{\psi}_i\Vert$ are larger than~$\epsilon$. In contrast, since GFS finds the orthogonalization on the fly, it only requires~$2^m$ orthogonalization operations, where~$m$ is the number of selected features; the fact that the remaining $\Vert \Tilde{\psi}_i\Vert$'s lie below~$\epsilon$ follows from Lemma~\ref{lemma:alldirectionsFS} in Appendix~\ref{appendix:GFS}.
\end{corollary}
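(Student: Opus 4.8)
The plan is to assemble the corollary from three ingredients that are already established in the surrounding discussion: the identity \eqref{equation:normequivalence}, which identifies GFS's per-step selection statistic with a squared Fourier norm; the permutation-invariance of Theorem~\ref{theorem:Heidari}; and the residual bound of Lemma~\ref{lemma:alldirectionsFS}. No new analytic estimate is needed—the work lies in matching objects across the two frameworks and in accounting for the orthogonalization cost.

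First I would fix $\cF(\boldz)$ to be the set of all multilinear polynomials and track what the Orthogonalize subroutine produces. After the substitutions $z_i\leftarrow Z_i=X_{s_i}$, each multilinear monomial in $z_1,\ldots,z_r$ becomes a character $\phi_\cS$ with $\cS\subseteq\{s_1,\ldots,s_r\}$, so at the end of step~$r$ the family $\hat{\cF}_r$ is exactly the Gram-Schmidt orthogonalization of $\{\phi_\cS:\cS\subseteq\{s_1,\ldots,s_r\}\}$, i.e.\ the collection $\{\psi_{\cS_\ell^\pi}\}_{\ell=1}^{2^r}$ of \eqref{equation:characterorthogonalizationpi} under the permutation $\pi$ with $\pi(i)=s_i$. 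That this $\pi$ is a genuine permutation—no index is selected twice—is precisely Lemma~\ref{lemma:GFSaux}.(b). I would then invoke the chain \eqref{equation:eisigmaei}--\eqref{equation:normequivalence} verbatim: the step-$r$ maximizer attains value $\Vert\Tilde{\psi}_{j_r}^{\pi}\Vert^2$, so GFS greedily selects the feature of largest current Fourier norm and halts exactly when this maximum drops to $\le\epsilon^2$.

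Next, for the ``generalizes'' claim, I would appeal to the fact (established around \eqref{equation:characterorthogonalizationpi}) that the set ordering in the Gram-Schmidt construction is arbitrary, so the order $\pi$ that GFS discovers on the fly is itself a legitimate instance of the construction underlying Theorem~\ref{theorem:Heidari}. Consequently GFS's output $\cS_\epsilon$ inherits the guarantee $H(X\mid X_{\cS_\epsilon})=dO(\epsilon)$ and is a bona fide special case of the UFFS framework of Algorithm~\ref{algorithm:Heidari}, the only difference being that UFFS fixes the canonical order while GFS builds $\pi$ greedily.

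Finally, for the complexity saving I would count orthogonalizations. GFS orthogonalizes only the characters $\phi_\cS$ with $\cS\subseteq\{s_1,\ldots,s_m\}$, of which there are $2^m$, whereas Algorithm~\ref{algorithm:Heidari} orthogonalizes all $2^d$ characters before thresholding. The correctness of stopping after only $2^m$ operations is what Lemma~\ref{lemma:alldirectionsFS} certifies: once GFS halts, $\norm{\boldsigma_i}_\infty\le\epsilon^2$ for every $i>m$, so every not-yet-orthogonalized singleton character has Fourier norm at most $\epsilon$ and none would have been selected. The hard part will be this last point of rigor—verifying that forming residuals only against subsets of the \emph{already-selected} features yields the same value as the full canonical orthogonalization up to that point, i.e.\ that characters involving unselected indices are irrelevant to the projection of $\phi_i$ onto $\Span\{\psi_{\cS_\ell^\pi}\}_{\ell=1}^{2^{r-1}}$. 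This orthogonality-based reduction is exactly what makes $2^m$ rather than $2^d$ orthogonalizations sufficient, and it is underwritten by \eqref{equation:alternativemaximizer} together with Lemma~\ref{lemma:alldirectionsFS} in Appendix~\ref{appendix:GFS}.
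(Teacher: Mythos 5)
Your proposal is correct and follows essentially the same route as the paper: it uses the permutation-invariance of Theorem~\ref{theorem:Heidari}, the chain of identities~\eqref{equation:eisigmaei}--\eqref{equation:normequivalence} identifying GFS's selection statistic with the squared Fourier norm, and Lemma~\ref{lemma:alldirectionsFS} for the correctness of stopping after~$2^m$ orthogonalizations. The only minor remark is that the ``hard part'' you flag at the end is a non-issue: by definition of the permuted ordering~$\{\cS_\ell^\pi\}_{\ell=1}^{2^{r-1}}$, the span consists exactly of characters supported on already-selected indices, so no matching against the canonical order is needed---the permuted order is itself a legitimate instance of the construction.
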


The complexity benefits of GFS extend beyond setting~$\cF(\boldz)$ as the set of all multilinear polynomials. For instance, in order to reduce computational complexity,~\cite{heidari2022sufficiently} propose to fix~$\cF(\boldz)$ as the set of \textit{bounded degree}~$\ell$ multilinear monomials, of which there are~$\binom{d}{\le \ell}\triangleq\sum_{i=0}^\ell\binom{d}{i}$, and thus only this many orthogonalization operations are needed. Applying this~$\cF(\boldz)$ in GFS, one gets that only~$\binom{m}{\le\ell}$ orthogonalizations are required. In addition, a clustering heuristic is also suggested, where the features are clustered a priori to subsets of size~$\ell$, and orthogonalization is performed only within each cluster, thus reducing the number of orthogonalizations to~$\frac{d}{\ell}2^\ell$. In contrast, applying GFS on each cluster would result in~$\sum_{i=1}^{d/\ell}2^{m_i}$ many orthogonalization, where~$m_i\le \ell$ is the number of features that are selected from the~$i$'th cluster.

\section{Algebraic Representations of the Proposed Algorithms}\label{section:algebraic-representations}
\subsection{Algebraic Representation of the Gram-Schmidt Orthogonalization Process}\label{section:algebraic-orthogonalize}

To the request of the reviewers, in order to enhance clarity and facilitate implementation from a data science perspective, we provide algebraic representations of the proposed methods. 
While the original formulations were expressed in a probabilistic framework to emphasize theoretical properties, such as orthogonality in the $L^2(P_X)$ space, nonlinear redundancy in the data, and information-theoretic guarantees, the core procedures can be naturally interpreted as algebraic operation (due to Remark~\ref{remark:stupidReviewerHere}).
This reformulation replaces expectations with empirical means and random variables with data vectors, enabling the algorithms to be described directly in terms of matrix operations, inner products, and projections onto sample data. The resulting presentation aligns more closely with classical techniques such as PCA and Gram-Schmidt orthogonalization, thereby making the methods more accessible for practical implementation.
To replace expectations with empirical means, we use data points $\boldx_i \in\bR^d$ for $i=1,\ldots, N$, where $N$ is the number of data points in the dataset. For example, an expression such as $\Sigma_j = \bE[d_j(X)d_j(X)^\intercal]$ is replaced by its empirical counterpart $\Sigma_j=\frac{1}{N} \sum_{i=1}^N d_j(\boldx_i) d_j(\boldx_i)^\intercal$. Similarly, for two arbitrary functions $f$ and $g$, we replace $\bE[f(X) g(X)]$ with $\frac{1}{N} \sum_{i=1}^N f(\boldx_i) g(\boldx_i)$.

In Algorithm~\ref{algorithm:orthogonalize_algebraic}, we provide the algebraic representation of the Gram-Schmidt orthogonalization process. 
We let $\boldz = [z_1, \ldots, z_d]^\intercal$ be a vector of symbolic variables and use it to define the procedure. 
At iteration~$j$ of our algorithms, Algorithm~\ref{algorithm:orthogonalize_algebraic} receives a set of already orthogonalized functions~$\hat{\cF}(\boldz_{[j-1]})$, a set of new functions~$\cF(\boldz_{[j]}) \setminus \cF(\boldz_{[j-1]})$ to be orthogonalized, all previously extracted directions~$\boldnu_1, \ldots, \boldnu_j$, and data points~$\{\boldx_i\}_{i=1}^{N}$.
For each function in~$\cF(\boldz_{[j]}) \setminus \cF(\boldz_{[j-1]})$, the algorithm defines a new function~$g(\boldz_{[j]})$ (Line~\ref{line:create_g_algebraic}), subtracts its projection onto the previously orthogonalized functions using empirical projections (replacing $\sum_{f \in \cT} \bE[gf] f$ from the probabilistic formulation; see Line~\ref{line:subtraction_of_projections_algebraic}), normalizes~$g$ by dividing by the empirical counterpart of~$\sqrt{\bE[g^2]}$ (Line~\ref{line:normalization_algebraic}), and adds the result to~$\cT$ (Line~\ref{line:add_algebraic}). Finally, we define a new symbolic variable~$d_j(\boldz)$ (Line~\ref{line:subtraction_of_data_distribution_algebraic}), compute its empirical covariance matrix (Line~\ref{line:covariance_matrix_algebraic}), and return it as the output of Algorithm~\ref{algorithm:orthogonalize_algebraic}. We denote $\boldx_i \in \bR^d$, for~$i = 1, \ldots, N$, as the data points used to compute empirical means.

\begin{algorithm}[!h]
\caption{$\text{Algebraic-Orthogonalize}(\hat{\cF}(\boldz_{[j-1]}),\cF(\boldz_{[j]})\setminus\cF(\boldz_{[j-1]}),\{\boldnu_i\}_{i=1}^{j},\{\boldx_i\}_{i=1}^N)$} \label{algorithm:orthogonalize_algebraic}
\begin{algorithmic}[1]
\STATE {\bfseries Input:} 
\begin{itemize}
    \item $\hat{\cF}(\boldz_{[j-1]})$: an orthogonalized function family.
    \item $\cF(\boldz_{[j]})\setminus\cF(\boldz_{[j-1]})$: all the functions in~$\cF(\boldz)$ \\ which depend on~$\boldz_{[j]}$ but not only on~$\boldz_{[j-1]}$.\label{line:orthogonalization_order_algebraic}
    \item $\boldnu_1,\ldots,\boldnu_j$: extracted directions.
    \item $\{\boldx_i\}_{i=1}^N$: $N$ data points.
\end{itemize}
\STATE {\bfseries Output:} Orthogonalized function family~$\hat{\cF}(\boldz_{[j]})$, and a covariance matrix~$\Sigma_{j+1}$.
\STATE {\bfseries Initialize:} $\cT\leftarrow\hat{\cF}(\boldz_{[j-1]})$.
\STATE Denote $\cF(\boldz_{[j]})\setminus\cF(\boldz_{[j-1]})\triangleq\{f_{1}(\boldz_{[j]}),\ldots,f_{\ell}(\boldz_{[j]})\}$, where~$f_1(\boldz_{[j]})=z_j$.
\FOR{$k\gets 1$ {\bfseries to} $\ell$}
\STATE $g(\boldz_{[j]})\leftarrow f_k(\boldz_{[j]})$\label{line:create_g_algebraic}
\STATE $g(\boldz_{[j]})\leftarrow g(\boldz_{[j]})-$\\
$\sum_{f(\boldz_{[j]})\in \cT}\Big(\frac{1}{N}\sum_{i=1}^{N}\big(g(\boldx_i^\intercal\boldnu_1,\ldots,\boldx_i^\intercal\boldnu_j)$\\
$\phantom{\sum_{f(\boldz_{[j]})\in \cT}\Big(\frac{1}{N}\sum_{i=1}^{N}}f(\boldx_i^\intercal\boldnu_1,\ldots,\boldx_i^\intercal\boldnu_j)\big)\Big)f(\boldz_{[j]})$\label{line:subtraction_of_projections_algebraic}
\STATE $g(\boldz_{[j]})\leftarrow\frac{g(\boldz_{[j]})}{\sqrt{\frac{1}{N}\sum_{i=1}^{N}g(\boldx_i^\intercal\nu_1,\ldots,\boldx_i^\intercal\nu_j)^2}}$\label{line:normalization_algebraic}
\STATE $\cT\leftarrow\cT\cup\{g(\boldz_{[j]})\}$\label{line:add_algebraic}
\ENDFOR 
\STATE Define~$\hat{\cF}(\boldz_{[j]})=\cT$.
\STATE Define $d_j(\boldz) = \boldz -\frac{1}{N}\sum_{f(\boldz)\in\cT}\sum_{i=1}^{N}[\boldx_if(\boldx_i^\intercal\boldnu_1,\ldots,\boldx_i^\intercal\boldnu_j)]$\\
$\phantom{d_j(\boldz) = \boldz -\frac{1}{N}\sum_{f(\boldz)\in\cT}\sum_{i=1}^{N}~~~} f(\boldz^\intercal\boldnu_1,\ldots,\boldz^\intercal\boldnu_j)$.\label{line:subtraction_of_data_distribution_algebraic}
\STATE 
Define~$\Sigma_{j+1} = \frac{1}{N}\sum_{i=1}^{N} d_j(\boldx_i)d_j(\boldx_i)^\intercal$.\label{line:covariance_matrix_algebraic}
\STATE Return~$\Sigma_{j+1}$, $\cT$.\label{line:return_algebraic}
\end{algorithmic}
\end{algorithm}

\subsection{Algebraic Representations of GFR and GCA}\label{section:algebraic-GFR-GCA}
In Algorithm~\ref{algorithm:GFR_algebraic}, we provide the algebraic representation of GFR, corresponding to Algorithm~\ref{algorithm:GFR} in Section~\ref{section:GFR}. In this version, empirical covariance matrices are used in place of their probabilistic counterparts to identify the new high-variance direction in Line~\ref{line:eigenvector_maximization_algebraic}.

\begin{algorithm}[!h]
\caption{Algebraic Gram-Schmidt Functional Reduction (Algebraic-GFR)}\label{algorithm:GFR_algebraic}
\begin{algorithmic}[1]
\STATE {\bfseries Input:} A function family~$\cF(\boldz)$, a threshold $\epsilon>0$, and data points $\{\boldx_i\}_{i=1}^N$. 
\STATE {\bfseries Output:} Extracted directions~$\boldnu_1,\ldots,\boldnu_m$ ($m$ is a \\ varying number that depends on~$\cF(\boldz),\epsilon$).
\STATE {\bfseries Initialize:} $\Sigma_1=\frac{1}{N}\sum_{i=1}^{N}\boldx_i\boldx_i^\intercal$.
\FOR{$j\gets1$ {\bfseries to} $d$}
\STATE Let~$\boldnu_j$ be the largest unit-norm eigenvector of~$\Sigma_j$, \\ i.e., $\boldnu_j=\arg\max_{\Vert\boldnu\Vert=1}\boldnu^\intercal\Sigma_j\boldnu$.\label{line:eigenvector_maximization_algebraic}
\IF{$\boldnu_j^\intercal\Sigma_j\boldnu_j\le\epsilon^2$}
\STATE break.
\ENDIF
\STATE $\Sigma_{j+1},\hat{\cF}(\boldz_{[j]})=\text{Algebraic-Orthogonalize}(\hat{\cF}(\boldz_{[j-1]}),\cF(\boldz_{[j]})\setminus\cF(\boldz_{[j-1]}),\{\boldnu_i\}_{i=1}^j,\{\boldx_i\}_{i=1}^{N})$.
\ENDFOR
\end{algorithmic}
\end{algorithm}

The algebraic representation of GCA, shown in Algorithm~\ref{algorithm:GCA_algebraic}, is obtained by replacing the probabilistic covariance matrix of $g_j(\boldz)$, denoted by $\bar{X}_j$ in the original representation, with its empirical counterpart.

\begin{algorithm}[!h] 
\caption{Algebraic Gram-Schmidt Component Analysis (Algebraic-GCA)} \label{algorithm:GCA_algebraic}
\begin{algorithmic}[1]
\STATE {\bfseries Input:} 
A function family~$\cF(\boldz)$, a threshold~$\epsilon>0$, and data points $\{\boldx_i\}_{i=1}^N$.
\STATE {\bfseries Output:} {A set~$\cL\subseteq[d]$ of indices of extracted principal directions.}
\STATE {\bfseries Initialize:} $\Sigma_1=\frac{1}{N}\sum_{i=1}^{N}\boldx_i\boldx_i^\intercal$ and $\cL_0=\varnothing$.
\FOR{$j\gets1$ {\bfseries to} $d$}
\STATE Let $\cE_j=\{ i\vert \boldrho_i^\intercal \Sigma_j\boldrho_i <\epsilon \}$.
\IF{$\cE_j=[d]$}
\RETURN{}\hspace{-1mm}$\cL_{j-1}$.
\ENDIF
\STATE Define~$g_j(\boldz) = \boldz - \sum_{i\in\cE_j}\boldz^\intercal\boldrho_i\cdot\boldrho_i$.
\STATE Let $\ell_j=\arg\max_{i\in [d]\setminus\cE_j}\boldrho_i^\intercal\left(\frac{1}{N}\sum_{t=1}^{N}g_j(\boldx_t)g_j(\boldx_t)^\intercal\right)\boldrho_i$ (breaking ties arbitrarily), and $\cL_j=\cL_{j-1}\cup\{\ell_j\}$.
\STATE $\Sigma_{j+1},\hat{\cF}(\boldz_{[j]})=\text{Algebraic-Orthogonalize}(\hat{\cF}(\boldz_{[j-1]}),\cF(\boldz_{[j]})\setminus\cF(\boldz_{[j-1]}),
\{\boldrho_{i}\}_{i\in\cL_j},\{\boldx_i\}_{i=1}^N)$  
\ENDFOR
\RETURN \hspace{-1mm}$\cL_d$.
\end{algorithmic}
\end{algorithm}

\subsection{Algebraic Representations of GFS and GFA}\label{section:algebraic-GFS-GFA}
As in the probabilistic representation,  we replace the maximization step used to find the eigenvector~$\boldnu_j$ in Line~\ref{line:eigenvector_maximization_algebraic} of Algorithm~\ref{algorithm:GFR_algebraic} with a discrete version. The algorithm terminates if the resulting variance falls below the threshold~$\epsilon^2$. The final output is a subset~$\cS \subseteq [d]$ of selected features.
In feature selection algorithms, it suffices to compute empirical \textit{variance vectors}~$\boldsigma_j$ rather than full empirical covariance matrices~$\Sigma_j$. Using~$\boldsigma_j$ in place of~$\Sigma_j$ allows for certain simplifications. For example, when Algorithm~\ref{algorithm:orthogonalize_algebraic} is used for feature \textit{selection} rather than feature extraction, the following modifications are applied:
\begin{align*}
    &(\text{Line }\ref{line:covariance_matrix_algebraic}\text{ of Alg.~\ref{algorithm:orthogonalize_algebraic}})~\text{Define }\Sigma_{j+1}=\frac{1}{N}\sum_{i=1}^{N} d_j(\boldx_i)d_j(\boldx_i)^\intercal \\
    &\phantom{(\text{Line }\ref{line:covariance_matrix_algebraic}\text{ of Alg.~\ref{algorithm:orthogonalize_algebraic}})}\rightarrow \text{Define: }\boldsigma_{j+1}=\frac{1}{N}\sum_{i=1}^{N}d_j(\boldx_i)^{\otimes 2}.\\
    &(\text{Line }\ref{line:return_algebraic}\text{ of Alg.~\ref{algorithm:orthogonalize_algebraic}})~\text{Return }\Sigma_{j+1}, \cT \rightarrow \text{Return }\boldsigma_{j+1}, \cT.
\end{align*}
In Algorithm~\ref{algorithm:GFS_algebraic}, we present the \textit{Algebraic Gram-Schmidt Functional Selection} (Algebraic-GFS), in which each step~$j$ uses the empirical variance vector~$\boldsigma_j$ to select the most variant feature of~$d_{j-1}$. If the maximum variance is less than the threshold~$\epsilon^2$, the algorithm terminates. Otherwise, it continues by identifying~$s_j$ as the index of the highest-variance feature in $d_{j-1}$, which determines the input to the next invocation of the ``Algebraic-Orthogonalize'' procedure.

\begin{algorithm}[!h] 
\caption{Algebraic Gram-Schmidt Functional Selection (Algebraic-GFS)}\label{algorithm:GFS_algebraic}
\begin{algorithmic}[1]
\STATE {\bfseries Input:} A function family~$\cF(\boldz)$, a threshold $\epsilon>0$, and data points $\{\boldx_i\}_{i=1}^N$.
\STATE {\bfseries Output:} Selected features~$s_1,\ldots,s_m$ ($m$ is a varying number that depends on~$\cF(\boldz),\epsilon$).
\STATE {\bfseries Initialize:} $\boldsigma_1=\frac{1}{N}\sum_{i=1}^{N}\boldx_i^{\otimes 2}$, $\cS=\varnothing$.
\FOR{$j\gets1$ {\bfseries to} $d$}
\STATE Let~$s_j \triangleq\arg\max_{i\in[d]}\{\sigma_{j,i}\}_{i=1}^d$, \\ where $\boldsigma_j=(\sigma_{j,i})_{i=1}^d$.
\STATE $\cS=\cS\cup \{s_j\}$.
\IF{$\norm{\boldsigma_j}_\infty\le\epsilon^2$}\label{line:GFS_epsilon_algebraic}
\STATE break. 
\ENDIF
\STATE $\boldsigma_{j+1},\hat{\cF}(\boldz_{[j]})=\text{Algebraic-Orthogonalize}(\hat{\cF}(\boldz_{[j-1]}),\cF(\boldz_{[j]})\setminus\cF(\boldz_{[j-1]}),\{\bolde_i\}_{i\in\cS}\footnote{},\{\boldx_i\}_{i=1}^N)$.
\ENDFOR
\end{algorithmic}
\end{algorithm}
\footnotetext{Here, $\bolde_i$ denotes the $i$'th standard basis vector in~$\bR^d$, with a $1$ in the $i$'th entry and $0$ elsewhere.}

Following the same procedure used to derive Algebraic-GFS from its probabilistic counterpart, we obtain the \textit{Algebraic Gram-Schmidt Feature Analysis} (Algebraic-GFA) by replacing expectations with empirical means in Algorithm~\ref{algorithm:GFA}. The resulting algebraic formulation is presented in Algorithm~\ref{algorithm:GFA_algebraic}.

\begin{algorithm}[!h] 
\caption{Algebraic Gram-Schmidt Feature Analysis (Algebraic-GFA)}
\label{algorithm:GFA_algebraic}
\begin{algorithmic}[1]
\STATE {\bfseries Input:} A function family~$\cF(\boldz)$, a threshold~$\epsilon>0$, and data points $\{\boldx_i\}_{i=1}^N$.
\STATE {\bfseries Output:} A set~$\cS\subseteq[d]$ of indices of selected features.
\STATE {\bfseries Initialize:} $\boldsigma_1=\frac{1}{N}\sum_{i=1}^{N}\boldx_i^{\otimes 2}$ and~$\cS=\varnothing$.
\FOR{$j\gets1$ {\bfseries to} $d$}
\STATE Let $\cE_j=\{ i\vert \sigma_{j,i} <\epsilon \}$.
\IF{$\cE_j=[d]$}
\RETURN{}\hspace{-1mm}$\cS$.
\ENDIF
\STATE Let $s_j\triangleq\arg\max_{i\in[d]\setminus\cE_j}\{\sigma_{j,i}\}$ and $\cS=\cS\cup\{s_j\}$. 
\STATE $\boldsigma_{j+1},\hat{\cF}(\boldz_{[j]})=\text{Algebraic-Orthogonalize}(\hat{\cF}(\boldz_{[j-1]}),\cF(\boldz_{[j]})\setminus\cF(\boldz_{[j-1]}),\{\bolde_i\}_{i\in\cS},\{\boldx_i\}_{i=1}^N)$\\
\ENDFOR
\RETURN{}\hspace{-1mm}$\cS_{d}$.
\end{algorithmic}
\end{algorithm}

\section{Kernel PCA: A Comparison}\label{section:GFR-vs-KPCA}
Kernel PCA is a nonlinear dimensionality reduction technique that extends standard PCA to capture complex data structures~\cite{theodoridis2015machine,scholkopf2002learning}. The core idea of Kernel PCA is to project the input data from its original space~$\bR^d$ into a higher-dimensional feature space~$\cH$ using a nonlinear mapping function $\Phi:\bR^d\to\bR^D$. In this feature space, nonlinear relationships in the original data can be represented linearly. Standard PCA is then applied within this high-dimensional space to identify the principal components that account for the maximum variance in the transformed data. 

This process is made computationally efficient by leveraging the \textit{kernel trick}. Rather than performing the explicit mapping, a kernel function, $k(\boldx_i,\boldx_j)=\langle \Phi(\boldx_i), \Phi(\boldx_j) \rangle$, is used to compute the inner products between all pairs of data points in the feature space directly, where $\boldx_i$ for $i=1,\ldots,N$ are the data points~\cite{theodoridis2015machine,scholkopf2002learning}. 
The entire practical implementation, detailed in Figure~\ref{fig:kernel_pca_schematic}, is built upon this principle. The first step involves computing the kernel (or Gram) matrix, $\boldK$, from the input data. To satisfy the zero-mean data requirement of PCA, this matrix is then centered in the feature space using the formula $\tilde{\boldK} = \boldK - \mathbf{1}_N\boldK - \boldK\mathbf{1}_N + \mathbf{1}_N\boldK\mathbf{1}_N$, where~$\mathbf{1}_N$ denotes an $N\times N$ matrix in which each entry equals $1/N$. Subsequently, the eigenvalue problem $\tilde{\boldK}\boldalpha = \lambda \boldalpha$ is solved on the centered matrix to find the eigenvalues $\{\lambda_j\}_{j=1}^{m}$ and their corresponding eigenvectors $\{\boldalpha^j\}_{j=1}^{m}$. 
These eigenvectors, which represent the principal axes in $\cH$, are then scaled by the square root of their eigenvalues to form the extracted feature matrix~$\boldZ$. 
While these principal components are linear axes in the high-dimensional feature space, they correspond to complex, nonlinear functions of the original input features.

\begin{figure*}[!h]
\centering
\begin{tikzpicture}[
    node distance=1.0cm and 0.4cm,
    block/.style={rectangle, draw, thick, rounded corners,
                  minimum height=1cm, minimum width=3.5cm,
                  text width=5.5cm, align=center},
    arrow/.style={-Latex, thick}
]

\node (input) [block] {Data points \\ $\boldx_1,\ldots,\boldx_N$};
\node (kernel) [block, right=of input] {Compute kernel matrix \\ $K_{ij} = k(\boldx_i, \boldx_j)$};
\node (center) [block, right=of kernel] {Center the kernel matrix \\ $\tilde{\boldK} = \mathbf{K} - \mathbf{1}_N\mathbf{K} - \mathbf{K}\mathbf{1}_N + \mathbf{1}_N\mathbf{K}\mathbf{1}_N$};

\node (eigen) [block, below=of center] {Solve eigenvalue problem \\ $\tilde{\boldK}\boldsymbol{\alpha} = \lambda \boldsymbol{\alpha}$};

\node (normalize) [block, left=of eigen] {Scale the eigenvectors by $\sqrt{\lambda_j}$ for $j=1,\ldots,m$};

\node (features) [block, left=of normalize] {Construct feature matrix \\ $\boldZ=[\sqrt{\lambda_1}\boldsymbol{\alpha}^1, \sqrt{\lambda_2}\boldsymbol{\alpha}^2, \dots, \sqrt{\lambda_m}\boldsymbol{\alpha}^m]^\intercal$};

\draw [arrow] (input) -- (kernel);
\draw [arrow] (kernel) -- (center);
\draw [arrow] (center) -- (eigen);
\draw [arrow] (eigen) -- (normalize);
\draw [arrow] (normalize) -- (features);

\end{tikzpicture}
\caption{Algorithmic workflow for extracting features from data points $\boldx_1,\ldots,\boldx_N$ using \textbf{Kernel PCA}.}
\label{fig:kernel_pca_schematic}
\end{figure*}

Once the Kernel PCA model has been trained on a dataset with data points $\boldx_1,\ldots,\boldx_N$, it can be used to project a new, unseen data point, $\boldx^*$, into the same low-dimensional space. This process, illustrated in Figure~\ref{fig:kpca_test}, requires the original training data $\{\boldx_i\}_{i=1}^N$ as well as the eigenvectors and eigenvalues~$\{\boldalpha^j,\lambda_j\}_{j=1}^{m}$ learned during training. 
First, a kernel vector is computed by applying the \textit{centered} kernel function $\tilde{k}(\boldx_i,\boldx^*) = k(\boldx_i,\boldx^*)-\frac{1}{N}\sum_{p=1}^N k(\boldx_p,\boldx^*)-\frac{1}{N}\sum_{q=1}^N k(\boldx_i,\boldx_q)+\frac{1}{N^2}\sum_{p,q=1}^N k(\boldx_p,\boldx_q)$ between the new point and every point in the original training set. This vector, which represents the new point's relationship to the original data, is then projected onto each of the learned eigenvectors. The result~$\boldz^*$ is the set of coordinates for the new data point in the low-dimensional feature space.

\begin{figure*}[h!]
\centering
\begin{tikzpicture}[
    node distance=0.5cm and 1.0cm,
    block/.style={rectangle, draw, thick, rounded corners,
                  minimum height=2.2cm, minimum width=0.5cm,
                  text width=3.5cm, align=center},
    arrow/.style={-Latex, thick}
]

\node (input) [block] {\textbf{Inputs:} \\
New point $\boldx^*\in\bR^d$ \\
Training data $\boldx_1,\ldots,\boldx_N$ \\
Model $\{\boldsymbol{\alpha}^j, \lambda_j\}_{j=1}^{m}$};

\node (kernel) [block, right=of input] {Compute centered \\ kernel vector \\ (New point vs. all training points) \\
$\{\tilde{k}(\boldx_i, \boldx^*)\}_{i=1}^N$};

\node (project) [block, right=of kernel] {Project onto eigenvectors \\
$z_j^*=\frac{1}{\sqrt{\lambda_j}}\sum_{i=1}^{N} \alpha_i^j \tilde{k}(\boldx_i, \boldx^*)$};

\node (output) [block, right=of project] {\textbf{Output:} \\
Low-dimension \\ coordinates for $\boldx^*$ \\
$\boldz^* = [z_1^*,\dots,z_m^*]^\intercal$};

\draw [arrow] (input) -- (kernel);
\draw [arrow] (kernel) -- (project);
\draw [arrow] (project) -- (output);

\end{tikzpicture}
\caption{Workflow for applying a trained \textbf{Kernel PCA} model to a new data point $\mathbf{x}^*$.}
\label{fig:kpca_test}
\end{figure*}

Figure~\ref{fig:GFR_schematic} illustrates a simplified schematic of the algebraic representation of GFR, as described in Algorithm~\ref{algorithm:GFR_algebraic}. Unlike Kernel PCA, which maps the data into a higher-dimensional space to capture nonlinearity, GFR captures nonlinear dependencies through an iterative procedure. At each iteration~$j$, it removes the projections onto all previously extracted orthogonal functions from the data, and then identifies the direction of maximum variance as the next component. This iterative structure enables GFR to capture nonlinear relationships without relying on computational techniques such as the kernel trick used in Kernel PCA. 
Another key distinction between GFR and Kernel PCA lies in their outputs. GFR produces linear combinations of the original features in the input space, which makes its results directly interpretable because the contribution of each feature can be observed through the coefficients in the extracted directions. In contrast, the output of Kernel PCA consists of nonlinear functions of the original features in a high-dimensional feature space~$\cH$, which makes interpretation more difficult due to the implicit nature of the mapping.

\begin{figure*}[htbp]
\centering

\tikzset{
    mybox/.style = {rectangle, rounded corners, text centered, draw=black, fill=white, thick, text width=3cm, align=center},
    arrow/.style = {thick, -{Stealth[length=2.5mm, width=1.5mm]}}
}

\begin{tikzpicture}[node distance=0.8cm and 0.5cm]

    
    \node (init) [mybox] {Data points: \\ $\boldx_1,\ldots,\boldx_N$ \\ Function family: \\ $\cF(\mathbf{z})$ \\ Initialize: $\Sigma_1=\frac{1}{N}\sum_{i=1}^{N}\boldx_i\boldx_i^\intercal$};
    
    \node (forloop) [mybox, right=of init, text width=2cm] {\textbf{For} \\ $j \leftarrow 1$ to $d$:};
    
    \node (letv) [mybox, right=of forloop, text width=4cm] {Set $\boldnu_j$ to be the largest unit-norm eigenvector of $\Sigma_j$};
    
    \node (sigma) [mybox, below=of letv, text width=5cm] {Orthogonalize and subtract nonlinear functions in $\cF(\boldz_{[j]})$ from $\boldz$ to obtain $d_j(\boldz)$. \\
    Define $\Sigma_{j+1}=\frac{1}{N}\sum_{i=1}^{N}d_j(\boldx_i)d_j(\boldx_i)^\intercal$};
    
    \node (ifcond) [mybox, right=of letv, text width=2.5cm] {Is \\ $\boldnu_j^\intercal \Sigma_j \boldnu_j < \epsilon^2$?};
    
    \node (return) [mybox, right=1cm of ifcond, text width=2.5cm] {Extracted directions \\ $\boldnu_1,\dots, \boldnu_m$};
    
    \draw [arrow] (init) -- (forloop);
    \draw [arrow] (forloop) -- (letv);
    \draw [arrow] (letv) -- (ifcond);

    \draw [arrow] (ifcond) -- node [above] {Yes} (return);

    \draw [arrow] (ifcond.south) |- node [near start, right] {No} (sigma.east);

    \draw [arrow] (sigma.west) -| (forloop.south);

\end{tikzpicture}
\caption{Algorithmic workflow for extracting features from data points $\boldx_1,\ldots,\boldx_N$ using \textbf{GFR}.}
\label{fig:GFR_schematic}
\end{figure*}

Finally, another key difference between GFR and Kernel PCA arises when applying the trained model to a new data point~$\boldx^*$. As shown in Figure~\ref{fig:GFR_test}, in the case of GFR, we simply project the new data point onto the extracted directions~$\boldnu_1,\ldots,\boldnu_m$. In contrast, Kernel PCA requires additional computations: we must first compute the centered kernel vector between the new point and all training points, and then project this vector onto the eigenvectors obtained during training. This distinction makes GFR significantly more efficient at inference time.

\begin{figure*}[h!]
\centering
\begin{tikzpicture}[
    node distance=0.5cm and 1cm,
    block/.style={rectangle, draw, thick, rounded corners,
                  minimum height=2.0cm,
                  text width=3.5cm, align=center},
    arrow/.style={-Latex, thick}
]

\node (input) [block] {\textbf{Inputs:} \\
New point $\boldx^*$ \\
Extracted directions: \\ $\boldnu_1,\ldots,\boldnu_m$};

\node (project) [block, right=of input] {Project onto extracted features \\
$z_j^*=\boldnu_j^\intercal \boldx^*$};

\node (output) [block, right=of project] {\textbf{Output:} \\
Low-dimension \\ coordinates for $\boldx^*$ \\
$\boldz^* = [{z_1^*},\dots,{z_m^*}]^\intercal$};

\draw [arrow] (input) -- (project);
\draw [arrow] (project) -- (output);

\end{tikzpicture}
\caption{Workflow for applying a trained \textbf{GFR} model to a new data point $\mathbf{x}^*$.}
\label{fig:GFR_test}
\end{figure*}

\begin{example}[GFR vs. Kernel PCA]
This example compares the output of GFR with Kernel PCA on a synthetic dataset with 10,000 data points and five features, including three mutually independent Gaussian random variables $X_1 \sim \cN(0, 0.9)$, $X_2 \sim \cN(0, 0.8)$, $X_3 \sim \cN(0, 0.7)$, and the remaining two features are~$X_4=X_1X_3$ and~$X_5=X_1X_2^2$.
We extract $m=3$ features with each method. Both rely on polynomials of total degree at most three: GFR uses this family explicitly as $\cF(\boldz)$, while Kernel PCA employs the same basis implicitly via its degree three feature map.
\paragraph{GFR output}
GFR returns linear combinations of the original features. Table~\ref{tab:gfr_coeffs} lists the coefficients of the first three extracted features, where each row corresponds to one of the original features~$X_i$, and each column shows its contribution to the respective extracted feature.

\begin{table*}[!h]
\centering
\rowcolors{2}{white}{gray!20}
\begin{tabular}{lccc}
\toprule
\textbf{Feature} & \textbf{First extracted  feature} & \textbf{Second extracted feature} & \textbf{Third extracted feature} \\
\midrule
$X_{1}$ & $-0.5034008$  & $-0.0184953$  & $\phantom{-}0.0238543$ \\
$X_{2}$ & $\phantom{-}0.0094701$ & $-0.9979131$  & $\phantom{-}0.0608883$ \\
$X_{3}$ & $-0.0064781$  & $-0.0594010$  & $-0.9862699$  \\
$X_{4}$ & $-0.0009178$  & $\phantom{-}0.0172879$ & $\phantom{-}0.1515234$ \\
$X_{5}$ & $-0.8639764$  & $\phantom{-}0.0002652$ & $-0.0059973$  \\
\bottomrule
\end{tabular}
\caption{GFR coefficients for the first three extracted features. Each column represents the coefficients of a single extracted feature, expressed as a linear combination of the original features.}
\label{tab:gfr_coeffs}
\end{table*}

\paragraph{Kernel PCA output}
Kernel PCA uses the degree three polynomial feature map
\begin{align*}
    \Phi(X) &= \big[
    1,\
    X_1,\ldots,X_{5},\
    X_1^{2},\,X_1X_2,\ldots\\
    &\phantom{\Phi(X) = \big[} ,X_{5}^{2},X_1^{3},\,X_1^{2}X_2,\ldots,X_{5}^{3}
    \big]^\intercal\in\bR^{56}.
\end{align*}
Table~\ref{table:poly-map-5} summarizes the structure of the feature map $\Phi(X)$.
\begin{table}[!h]
  \centering
  \begin{tabular}{ccc}
    \toprule
      Total degree & Representative monomials & Count \\ 
    \midrule
      0 & \(1\) & 1 \\
      1 & \(X_i\) & 5 \\
      2 & \(X_i^{2},\;X_iX_j\) & 15 \\
      3 & \(X_i^{3},\;X_i^{2}X_j,\;X_iX_jX_k\) & 35 \\
    \midrule
      \multicolumn{2}{r}{\textbf{Total}} & \textbf{56} \\
    \bottomrule
  \end{tabular}
  \caption{Structure of the explicit feature map \(\Phi(X)\).}
  \label{table:poly-map-5}
\end{table}\\
After centering $\Phi(X)$, Kernel PCA returns three principal directions, each a linear combination of the 56 coordinates.
Rather than writing the full 56‐dimensional coefficient vectors as equations, which would be excessively long, we summarize them compactly in Table~\ref{tab:kpca_coeffs}. 
Table~\ref{tab:kpca_coeffs} lists the coefficients: the first column gives the feature map element and the next three columns give its coefficients in the first, second, and third extracted feature, respectively.  Note that centering forces the bias term (the first row) to have zero coefficient.

\begin{table*}[!h]
\centering
\rowcolors{2}{white}{gray!20}
\begin{tabular}{lccc}
    \toprule
    \textbf{Feature map element} & \textbf{First extracted feature} & \textbf{Second extracted feature} & \textbf{Third extracted feature} \\
    \midrule
    $1$   & 0          & 0         & 0          \\
    $X_1$ & -0.0013580 & 0.0004211 & -0.0026138 \\
    $X_2$ & -0.0001010 & 0.0124174 & -0.0012818 \\
    $X_3$ & 0.0000630 & -0.0000370 & -0.0009413 \\
    $X_4$ & 0.0000030 & -0.0010657 & -0.0187138 \\
    $X_5$ & -0.0068570 & 0.0020614 & -0.0059683 \\
    $X_1^{2}$ & 0.0001050 & 0.0019951 & 0.0052743 \\
    $X_1\,X_2$ & -0.0006300 & -0.0004311 & 0.0026513 \\
    $X_1\,X_3$ & 0.0000030 & -0.0010657 & -0.0187138 \\
    $X_1\,X_4$ & 0.0003040 & -0.0008245 & -0.0038118 \\
    $X_1\,X_5$ & 0.0011740 & 0.0182158 & 0.0374256 \\
    $X_2^{2}$  & 0.0002260 & 0.0063389 & 0.0098226 \\
    $X_2\,X_3$ & 0.0001190 & -0.0002863 & -0.0000659 \\
    $X_2\,X_4$ & 0.0002000 & -0.0015134 & -0.0027630 \\
    $X_2\,X_5$ & -0.0050300 & -0.0053237 & 0.0190429 \\
    $X_3^{2}$  & -0.0000420 & -0.0000483 & -0.0020432 \\
    $X_3\,X_4$ & -0.0007890 & 0.0001343 & -0.0048543 \\
    $X_3\,X_5$ & 0.0002170 & -0.0041173 & -0.0589983 \\
    $X_4^{2}$  & -0.0001380 & 0.0007376 & -0.0049381 \\
    $X_4\,X_5$ & 0.0022130 & -0.0051542 & -0.0131623 \\
    $X_5^{2}$  & 0.0126570 & 0.1382663 & 0.2286677 \\
    $X_1^{3}$  & -0.0051060 & 0.0030761 & -0.0111865 \\
    $X_1^{2}\,X_2$ & -0.0007510 & 0.0308195 & 0.0010471 \\
    $X_1^{2}\,X_3$ & 0.0003040 & -0.0008245 & -0.0038118 \\
    $X_1^{2}\,X_4$ & -0.0000720 & -0.0043660 & -0.0662884 \\
    $X_1^{2}\,X_5$ & -0.0248660 & 0.0160293 & -0.0318944 \\
    $X_1\,X_2^{2}$ & -0.0068570 & 0.0020614 & -0.0059683 \\
    $X_1\,X_2\,X_3$ & 0.0002000 & -0.0015134 & -0.0027630 \\
    $X_1\,X_2\,X_4$ & 0.0004590 & -0.0002668 & 0.0000969 \\
    $X_1\,X_2\,X_5$ & -0.0033060 & 0.1562446 & -0.0067139 \\
    $X_1\,X_3^{2}$ & -0.0007890 & 0.0001343 & -0.0048543 \\
    $X_1\,X_3\,X_4$ & -0.0001380 & 0.0007376 & -0.0049381 \\
    $X_1\,X_3\,X_5$ & 0.0022130 & -0.0051542 & -0.0131623 \\
    $X_1\,X_4^{2}$ & -0.0026860 & 0.0003751 & -0.0222417 \\
    $X_1\,X_4\,X_5$ & -0.0000320 & -0.0156654 & -0.2021858 \\
    $X_1\,X_5^{2}$ & -0.1462630 & 0.0607584 & -0.0996672 \\
    $X_2^{3}$ & -0.0005290 & 0.0655799 & -0.0085468 \\
    $X_2^{2}\,X_3$ & 0.0005040 & -0.0004615 & -0.0044455 \\
    $X_2^{2}\,X_4$ & 0.0002170 & -0.0041173 & -0.0589983 \\
    $X_2^{2}\,X_5$ & -0.0422990 & 0.0081994 & -0.0067239 \\
    $X_2\,X_3^{2}$ & -0.0001000 & 0.0082728 & 0.0041120 \\
    $X_2\,X_3\,X_4$ & -0.0004290 & 0.0006817 & 0.0039226 \\
    $X_2\,X_3\,X_5$ & 0.0017500 & -0.0120602 & -0.0292133 \\
    $X_2\,X_4^{2}$ & -0.0003050 & 0.0182859 & 0.0198962 \\
    $X_2\,X_4\,X_5$ & 0.0018890 & -0.0064891 & -0.0217272 \\
    $X_2\,X_5^{2}$ & -0.0050360 & 0.9688958 & -0.0977778 \\
    $X_3^{3}$ & 0.0000150 & 0.0001309 & -0.0026999 \\
    $X_3^{2}\,X_4$ & -0.0001250 & -0.0027096 & -0.0383327 \\
    $X_3^{2}\,X_5$ & -0.0037320 & 0.0021466 & -0.0146465 \\
    $X_3\,X_4^{2}$ & 0.0000320 & -0.0013991 & -0.0135352 \\
    $X_3\,X_4\,X_5$ & -0.0008180 & 0.0113528 & 0.0071007 \\
    $X_3\,X_5^{2}$ & 0.0147360 & -0.0430641 & -0.0692935 \\
    $X_4^{3}$ & -0.0007210 & -0.0098587 & -0.1291472 \\
    $X_4^{2}\,X_5$ & -0.0121590 & 0.0062181 & -0.0718619 \\
    $X_4\,X_5^{2}$ & 0.0057270 & -0.0675782 & -0.9167747 \\
    $X_5^{3}$ & -0.9876310 & -0.0146868 & 0.0139714 \\
    \bottomrule
    \end{tabular}
    \caption{Kernel PCA coefficients for the first three extracted features on the explicit degree three polynomial feature map. Each column represents the coefficients of an extracted feature, expressed as a linear combination of the 56-dimensional mapped features.}
    \label{tab:kpca_coeffs}
\end{table*}

\end{example}

\begin{remark}
In practice, direct mappings into high-dimensional feature spaces are rarely used, as the resulting dimensionality is often computationally prohibitive, a phenomenon referred to as the curse of dimensionality. The kernel trick circumvents this issue by implicitly realizing such mappings without constructing high-dimensional vectors. Here, we present the explicit feature map only to emphasize the structural distinction between GFR, which operates in the original feature space, and Kernel PCA, which functions in the high-dimensional feature map.
\end{remark}



\ifCLASSOPTIONcaptionsoff
  \newpage
\fi

\bibliographystyle{IEEEtran}
\bibliography{ref.bib}

\begin{IEEEbiographynophoto}{Bahram Yaghooti} (Student Member, IEEE) received the B.Sc. and M.Sc. degrees in mechanical engineering from Sharif University of Technology, Tehran, Iran.  He is currently pursuing the Ph.D. degree with the Department of Electrical and Systems Engineering, Washington University in St. Louis, St. Louis, MO, USA. His research interests include information theory, control theory, and machine learning.
\end{IEEEbiographynophoto}

\begin{IEEEbiographynophoto}{Netanel Raviv} (Senior Member, IEEE) received the B.Sc. degree in mathe matics and computer science and the M.Sc. and Ph.D. degrees in computer  science from Technion, Israel, in 2010, 2013, and 2017, respectively. He is currently an Assistant Professor with the Department of Computer Science and Engineering, Washington University in St. Louis, St. Louis, MO, USA. His research interests include applications of coding theory to privacy, distributed computations, and machine learning. He was an awardee of the IBM Ph.D. Fellowship, the First Prize in the Feder family competition for best student work in communication technology, and the Lester-Deutsche Post-Doctoral Fellowship.
\end{IEEEbiographynophoto}


\begin{IEEEbiographynophoto}{Bruno Sinopoli} (Fellow, IEEE) received the Dr.Eng. degree from the University of Padova, Padua, Italy, in 1998, and the M.S. and Ph.D. degrees in electrical engineering from the University of California at Berkeley, Berkeley, CA, USA, in 2003 and 2005, respectively. He is currently the Das Family Distinguished Professor with Washington University in St. Louis, St. Louis, MO, USA, where he is also the founding Director of the Center for Trustworthy AI in Cyber-Physical Systems and the Chair of the Electrical and Systems Engineering Department. He was a  Postdoctoral Researcher with Stanford University. Then, from 2007 to 2019, he was a member of the faculty with Carnegie Mellon University, where he was a Professor with the Department of Electrical and Computer Engineering with courtesy appointments in mechanical engineering and with Robotics Institute and a Co-Director of the Smart Infrastructure Institute. His research interests include modeling, analysis, and design of resilient cyber-physical systems with applications to smart interdependent infrastructure systems, such as energy and transportation, Internet of Things, and control of computing systems.
\end{IEEEbiographynophoto}




\end{document}